\documentclass{article}

\PassOptionsToPackage{numbers, compress}{natbib}
\usepackage[main, final]{neurips_2026}
\usepackage[utf8]{inputenc} % allow utf-8 input
\usepackage[T1]{fontenc}    % use 8-bit T1 fonts
\usepackage{hyperref}       % hyperlinks
\usepackage{url}            % simple URL typesetting
\usepackage{booktabs}       % professional-quality tables
\usepackage{amsfonts}       % blackboard math symbols
\usepackage{nicefrac}       % compact symbols for 1/2, etc.
\usepackage{microtype}      % microtypography
\usepackage{xcolor}         % colors
\usepackage[dvipsnames]{xcolor}

\usepackage{multirow}
\usepackage{multicol}
\usepackage{colortbl}
\usepackage{pifont}
\usepackage{float}
\usepackage{algorithm, algpseudocode}
\makeatletter
\renewcommand{\theHALG@line}{\thealgorithm.\arabic{ALG@line}}
\makeatother
\usepackage{bm}
\usepackage{setspace}
\usepackage{makecell}

\usepackage[utf8]{inputenc}
\usepackage[T1]{fontenc}
\usepackage{hyperref}
\usepackage{url}
\usepackage{booktabs}
\usepackage{amsfonts}
\usepackage{nicefrac}
\usepackage{microtype}
\usepackage{amsthm}
\usepackage{amssymb}
\usepackage{mathtools}
\usepackage{tabularx}
\usepackage{enumitem}
\usepackage{graphicx}
\graphicspath{{./figs}}
\DeclareGraphicsExtensions{.eps, .pdf, .png, .jpg}

\usepackage{amsmath}

\usepackage[capitalize]{cleveref}
\crefname{section}{Sec.}{Secs.}
\crefname{table}{Tab.}{Tabs.}
\Crefname{figure}{Fig.}{Figs.}

\usepackage{subcaption}
\theoremstyle{plain}
\newtheorem{theorem}{Theorem}[section]

\theoremstyle{definition}
\newtheorem{definition}[theorem]{Definition}

\theoremstyle{remark}

\title{Two Halves are More than One: \\ Phase-wise Velocity Distillation for Fast and High-Quality Image Generation}

\author{
  Zhen Guo$^{1,2}$, Rongyuan Wu$^{1,2}$, Qiaosi Yi$^{1,2}$,
  Chenxi Xie$^{1,2}$, Xinyu Wei$^{1,2}$, Lei Zhang$^{1,2}$\thanks{Corresponding author. This work is supported by the PolyU-OPPO Joint Innovative Research Center.} \\
  $^1$Department of Computing, The Hong Kong Polytechnic University \\
  $^2$OPPO Research Institute \\
}

\begin{document}

\maketitle

\begin{abstract}
Recent diffusion-based image generation backbones have grown substantially in scale, making the network inference cost increase rapidly. While diffusion distillation techniques can reduce the number of inference steps, high-quality image generation within a single full-backbone-forward compute budget remains challenging.
Existing one-step methods typically allocate this budget to a single evaluation
of a monolithic student. However, approximating the heterogeneous coarse-to-fine
transport with a single monolithic mapping is difficult and often leads to
over-smoothed outputs.
To address this issue, we propose \textbf{P}hase-wise \textbf{V}elocity
\textbf{D}istillation (\textbf{PVD}), which partitions the generation timeline
into a coarse and a fine phase, and models the transition within each phase via
the average velocity. A dedicated half-sized expert is assigned to each
phase, decoupling structural composition from detail refinement while keeping
the cumulative computation equivalent to one full-backbone forward pass.
We show that the use of two half-sized phase-specific experts outperforms a single full-size monolithic student.
On class-conditional image generation, PVD achieves an FID of 1.48 on ImageNet $256 \times 256$.
On more complex text-to-image (T2I) tasks, PVD-distilled models (Stable Diffusion 3.5-Medium, FLUX.1-dev, Qwen-Image) produce results competitive
with their multi-step teachers, significantly outperforming prior distillation methods.
Moreover, across the evaluated T2I backbones,
PVD reduces active parameters by 49.10--50.89\% and peak VRAM by
45.76--48.36\% compared to the corresponding teachers.
Source code and distilled models are available at \url{https://github.com/PolyU-VCLab/PVD}.
\end{abstract}

\begin{figure}[h]
    \centering
    \includegraphics[width=1\textwidth]{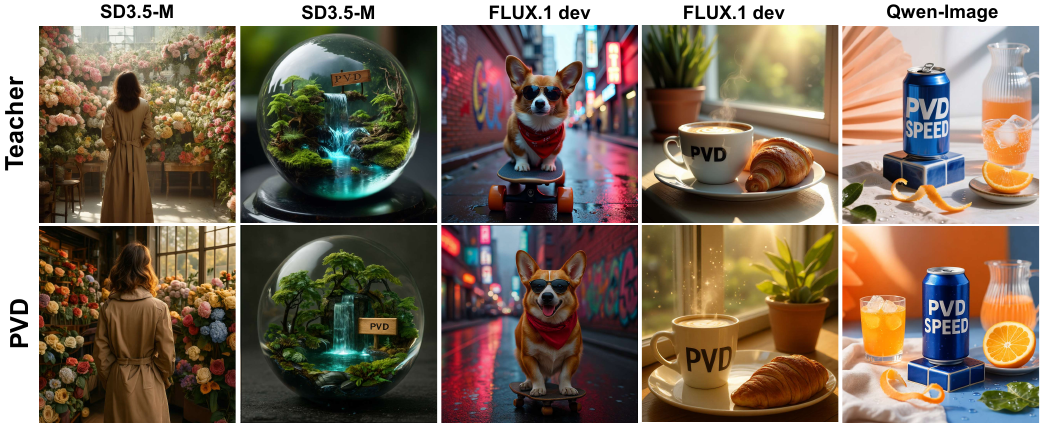}
    \caption{Text-to-image results. \textbf{Top}: Teacher model (Stable Diffusion3.5-Medium, FLUX.1 dev and Qwen-Image under $50\times2$, $50$, and $50\times2$ steps, respectively). \textbf{Bottom}: {With two half-sized phase-wise experts, PVD achieves fast and high-quality image generation using cumulative computation equivalent to one full-backbone forward pass of the teacher model.}}
    \label{fig:head}
\end{figure}

\section{Introduction}
\label{sec:intro}
Diffusion models have substantially advanced image synthesis and can generate high-quality images with diverse content~\citep{ho2020denoising, dhariwal2021diffusion, liu2022flow, esser2024scaling, flux2024, wu2025qwen, cai2025z}. However, their generation process follows an iterative coarse-to-fine trajectory~\citep{ho2020denoising, kulikov2025flowedit, cheng2025structure, wan2025wan}, which typically requires tens to hundreds of costly full-backbone-forward passes to transform noise into desired images. This computational bottleneck poses a significant barrier to interactive applications in the real world~\citep{kodaira2025streamdiffusion, liu2025survey, lee2025semanticdraw}.

To accelerate generative inference, extensive efforts have been dedicated to diffusion distillation by reducing multi-step sampling into a few or even a single forward pass. Existing methods can be broadly grouped into two paradigms (see Fig.~\ref{fig:intro}, left and middle). \emph{Trajectory-based} methods train the student to replicate the teacher's generative trajectory by progressively halving the sampling steps~\citep{salimans2022progressive, meng2023distillation, liu2023instaflow}, enforcing self-consistency along the trajectory~\citep{song2023consistency, song2023improved, lu2024simplifying}, or directly regressing the average velocity over a temporal interval~\citep{frans2024one, geng2025mean, geng2025improved, zhang2025alphaflow, peng2025facm}. \emph{Distribution-based} methods instead align the student's output distribution with that of the teacher or the data by adversarial supervision~\citep{sauer2024adversarial, sauer2024fast} or score/divergence-based distribution matching~\citep{yin2024one, yin2024improved, zhou2024score, ge2025senseflow}.

Despite the remarkable progress of the above-mentioned two paradigms, they share a common architectural assumption: a \emph{single, monolithic student} is able to accomplish the entire generative mapping from pure noise to images with rich details, regardless of whether the inference unrolls into a few steps or a single forward pass. This assumption conflicts with the intrinsically dynamic nature of the diffusion process, in which early stages establish global low-frequency structure while later stages add high-frequency details~\citep{ho2020denoising, wang2023diffusion, kulikov2025flowedit, wan2025wan, liu2025navigation}. When the inference budget is relaxed to multiple steps, the same set of parameters can revisit the trajectory several times and partially compensate for this mismatch; under a strict single full-backbone-forward budget, however, the monolithic student must resolve coarse layout and fine details simultaneously in one forward pass, exceeding the capacity of a single compact model. 
Since a noisy state may admit multiple plausible clean targets, the model tends to regress toward their conditional average to minimize the training loss~\citep{frans2024one}, leading to over-smoothed outputs with diminished fine details and even generation collapse in one-step sampling, especially for complex Text-to-Image (T2I) tasks~\citep{pu2025few}.

To overcome these limitations, we propose \textbf{P}hase-wise \textbf{V}elocity \textbf{D}istillation (\textbf{PVD}), a phase-decomposed distillation framework for high-quality image generation under a single full-backbone-forward compute budget (see Fig.~\ref{fig:intro}, right). PVD partitions the generation trajectory into a structural phase and a refinement phase, and assigns a dedicated half-sized expert to each phase to align the model capacity with the coarse-to-fine nature of diffusion while keeping the cumulative inference cost within a single forward pass of the teacher. Each expert is supervised to approximate the localized average velocity over its phase, replacing the global noise-to-data jump with two shorter-horizon transport objectives. For more complex T2I tasks, we further introduce phase-specific discriminators, one focusing on global structural consistency and the other on fine-grained details, to compensate for the over-smoothing tendency of $L_2$ velocity regression. The two experts are able to share a common half-sized backbone and are instantiated as lightweight Low-Rank Adaptation (LoRA) modules, substantially reducing memory overhead.
In summary, our contributions are as follows:
\begin{enumerate}[label=(\alph*), leftmargin=*]
    \item We introduce \textbf{Phase-wise Velocity Distillation (PVD)}, a trajectory decomposition distillation framework that partitions the diffusion process into temporally phase-wise average-velocity transports under a single full-backbone-forward budget constraint.
    \item We train \textbf{Phase-wise Experts} via localized mean-velocity objectives to align model capacity with the coarse-to-fine dynamics of diffusion, improving generation quality without increasing the compute budget.
    \item PVD achieves \textbf{superior performance} on both class-conditional image (C2I) and T2I generation to existing distillation methods. On C2I, it reaches FID 1.48 on ImageNet $256\times 256$; on T2I, distilled Stable Diffusion 3.5-Medium, FLUX.1-dev, and Qwen-Image models demonstrate competitive performance on GenEval, TIIF-Bench, and Qwen-Image-Bench, etc., while significantly reducing active parameters and peak VRAM relative to the corresponding teachers.
\end{enumerate}

\begin{figure}[tb]
    \centering
    \includegraphics[width=1\textwidth]{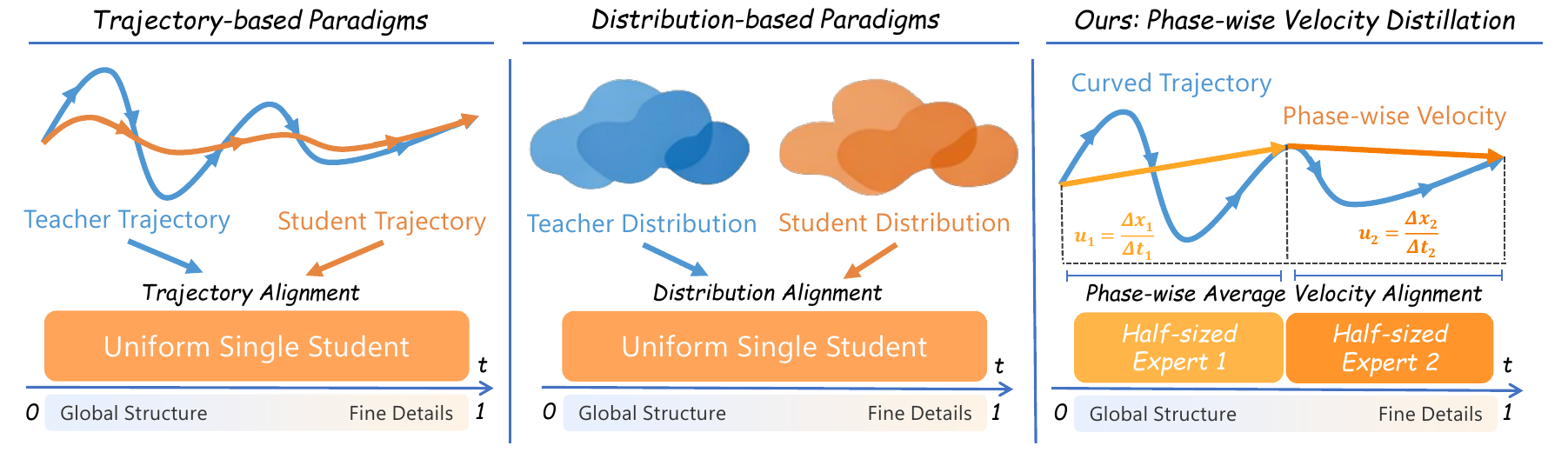}
    \vspace{-5mm}
    \caption{Trajectory-based distillation (left) and distribution-based distillation (middle) employ a uniform student model to approximate the full coarse-to-fine mapping, while our proposed phase-wise velocity distillation (right) decomposes the objective into phase-wise velocity targets and sequentially evaluates two half-sized experts under a single full-backbone-forward compute budget.}
    \label{fig:intro}
    \vspace{-2mm}
\end{figure}

\section{Related Work}
\label{sec:related_work}
\textbf{Denoising Diffusion and Flow Matching.}
Early diffusion models, represented by DDPM~\citep{ho2020denoising} and DDIM~\citep{song2020denoising}, employ iterative denoising for high-fidelity generation, which enjoys stable optimization but is tied to a fixed reverse diffusion path of tens to hundreds of small denoising steps, making it difficult to apply to large-scale models.
Flow matching~\citep{lipman2022flow} and rectified flow~\citep{liu2022flow} revisit this problem from the perspective of continuous transportation. Instead of committing to a fixed diffusion path, flow matching learns velocity fields by simulation-free regression over conditional probability paths, while rectified flow favors straighter trajectories to regress with coarse ODE solvers. With transformer backbones~\citep{peeblesScalableDiffusionModels2023, esser2024scaling}, these paradigms can be well scaled to large T2I models, such as FLUX~\citep{flux2024, flux2025}, Qwen-Image~\citep{wu2025qwen}, and Z-Image~\citep{cai2025z}. Nevertheless, their inference still relies on multi-step integration, and even rectified training paths can yield curved marginal trajectories in practice~\citep{frans2024one, geng2025mean}. Such an expensive inference cost motivates few-step distillation techniques.

\textbf{Few-step Distillation.}
Existing few-step distillation methods can be broadly categorized into trajectory-based and distribution-based paradigms. The former aims to preserve teacher dynamics while reducing solver steps. For example, progressive distillation~\citep{salimans2022progressive, meng2023distillation} recursively compresses the teacher sampler, while InstaFlow~\citep{liu2023instaflow} shortens the transport path through reflow. Consistency methods, including CM~\citep{song2023consistency}, iCM~\citep{song2023improved}, sCM~\citep{lu2024simplifying}, and Hyper-SD~\citep{ren2024hyper}, replace explicit solver imitation with consistency constraints. More recent methods, including Shortcut Models~\citep{frans2024one}, MeanFlow~\citep{geng2025mean}, improved MeanFlow (iMF)~\citep{geng2025improved}, $\alpha$-Flow~\citep{zhang2025alphaflow}, FACM~\citep{peng2025facm}, ArcFlow~\citep{yang2026arcflow}, and TDM~\citep{luo2025tdm}, explicitly model long-horizon transport.
The distribution-based paradigm relaxes path fidelity to match the student output distribution to the teacher or data distribution. ADD~\citep{sauer2024adversarial} and LADD~\citep{sauer2024fast} use adversarial supervision to improve perceptual sharpness. DMD~\citep{yin2024one}, DMD2~\citep{yin2024improved}, and their variants~\citep{zhou2024score, ge2025senseflow} use score- or divergence-based objectives to align the student distribution. Recent extensions further refine this family: Phased DMD~\citep{fan2025phaseddmd} introduces subinterval-wise progressive distribution matching, while Decoupled DMD~\citep{liu2025decoupleddmd} uses CFG augmentation as the main distillation engine and distribution matching as a stabilizing regularizer.
By relaxing path-level supervision in favor of distribution alignment, distribution-based approaches tend to improve perceptual sharpness, at the cost of weaker trajectory fidelity under aggressively reduced budgets.

\textbf{Phase-wise Generation.}
The diffusion process is phase-structured: early denoising steps produce global layout and semantics, while later steps refine textures and high-frequency details~\citep{ho2020denoising, kulikov2025flowedit}. It has been shown~\citep{wang2023diffusion, liu2025navigation} that the early navigation stage is guided by multiple data modes and the later refinement stage is dominated by the closest samples.
Based on these observations, eDiff-I~\citep{balaji2022ediff} uses an ensemble of experts for different denoising regimes; Wan2.2~\citep{wan2025wan} adopts an SNR-based two-expert MoE that switches from a high-noise expert to a low-noise expert; Glance~\citep{dong2025glance} employs a lightweight variant through SNR-guided slow and fast LoRA experts; and TimeStep Master~\citep{zhuang2025timestep}
combines interval-specific LoRA experts through time-dependent routing.
On the other hand, Hierarchical Distillation~\citep{cheng2025structure} employs trajectory distillation for structure generation and distribution refinement for details synthesis, while Phased DMD~\citep{fan2025phaseddmd} introduces subinterval-wise progressive refinement within DMD-style distillation.
While reducing the gap between coarse semantic planning and fine-detail rendering, 
these methods operate in the few-step regime, and do not work under a strict single full-backbone-forward compute budget.

{Further discussions on the training targets, parameter specialization, and inference structures of related methods are provided in \textbf{Appendix~{\ref{app:related_comparison}}}.}

\section{Method}
\label{sec:method}

\begin{figure}[tb]
    \centering
    \includegraphics[width=1\textwidth]{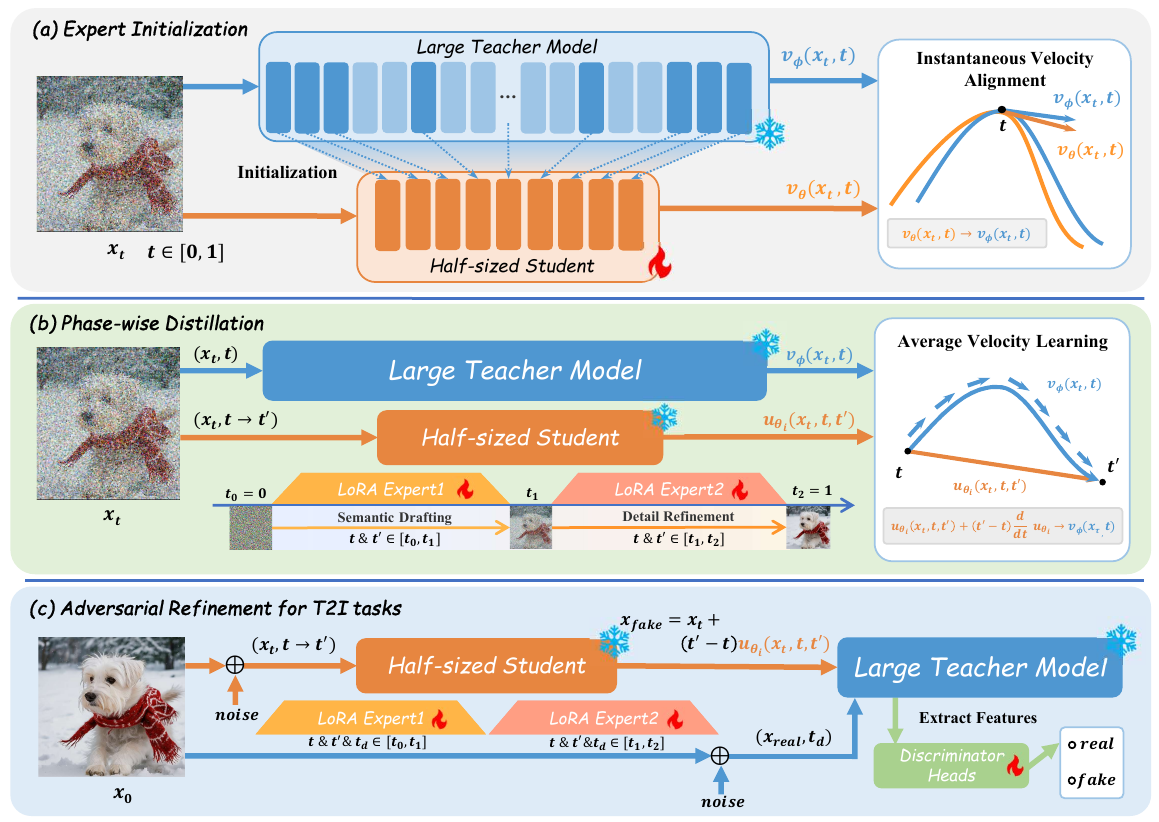}
    \caption{Overview of Phase-wise Velocity Distillation (PVD). (a) A half-sized backbone is extracted from the teacher and fine-tuned via flow matching to initialize the local students. (b) Phase-specific LoRA experts are distilled by mean velocity learning across partitioned time intervals. (c) For T2I tasks, phase-wise discriminators are used to provide adversarial refinement.}
    \label{fig:method}
    \vspace{-3mm}
\end{figure}

\subsection{Framework Overview}

Due to the highly nonlinear geometry of global diffusion trajectories, high-fidelity generation under constrained inference budgets often encounters unexpected errors and oversmoothed results. To address this issue, we propose \textbf{Phase-wise Velocity Distillation (PVD)}, which decomposes the generative trajectory into localized phases to distill their \emph{average velocity fields}. For each local phase, a lightweight, phase-specific expert is used to accomplish the transport. 
As shown in \cref{fig:method}, PVD employs a three-stage training paradigm. First, a half-sized backbone extracted from the teacher is fine-tuned via flow matching to initialize the student. Second, we partition the time horizon into distinct phases, and use lightweight, phase-specific experts to learn localized average velocities. Third, for the more difficult T2I tasks, discriminators are trained to provide adversarial refinement.

\textbf{Notation.} Let $\bm{x}_t = (1-t)\bm{x}_0 + t\bm{x}_1$, $t \in [0,1]$, be the linear flow path from pure noise $\bm{x}_0 \sim p_0$ to clean data $\bm{x}_1 \sim p_1$. The teacher $\phi$ and its half-sized backbone $\theta$ predict instantaneous velocities $v_\phi(\bm{x}_t, t)$ and $v_\theta(\bm{x}_t, t)$, respectively. We partition the time horizon into intervals $\mathcal{I}_i = [t_{i-1}, t_i]$. Within $\mathcal{I}_i$, an expert $u_{\theta_i}(\bm{x}_t, t, t')$ predicts the average velocity for $t < t'$, and is parameterized as the shared half-sized backbone $\theta$ augmented with phase-specific parameters $\psi_i$ (instantiated as either full fine-tuning or lightweight LoRA modules). Discriminator is denoted as $D_{\omega_i}(\cdot)$.

\subsection{Expert Initialization} \label{sec:expert_init}
Before localized distillation, we extract a reduced-capacity backbone and finetune it to serve as a shared, well-conditioned foundation for all phase-wise experts.

\textbf{Backbone Extraction and Finetuning.} To stabilize optimization and reduce computational overhead, we extract a half-sized backbone from the teacher model by selectively inheriting its weights. Following empirical observations that intermediate DiT blocks contribute less to final results than early and late blocks~\citep{xie2025sana}, we retain the first three and the last three blocks, and uniformly sample the remaining intermediate layers until the depth is exactly halved. The extracted half-sized backbone $\theta$ is then fine-tuned over the continuous time horizon $t \sim \mathcal{U}[0,1]$ via standard flow matching to mimic the teacher's instantaneous dynamics:
\begin{equation} \label{eq:fm}
    \mathcal{L}_{\text{FM}} = \mathbb{E}_{\bm{x}_0, \bm{x}_1, t} \left[ \left\| v_\theta(\bm{x}_t, t) - v_\phi(\bm{x}_t, t) \right\|_2^2 \right],
\end{equation}
where $v_\phi$ and $v_\theta$ denote the instantaneous velocity fields of the teacher and the student backbone, respectively. We have the following theorem.

\begin{theorem}[Integration Error Bound] \label{theorem:error_bound}
For any temporal phase $[t,t'] \subseteq [0,1]$, let $u_\phi$ denote the teacher average velocity and let $\widetilde{u}_\theta$ denote the average velocity induced by integrating the aligned backbone velocity field $v_\theta$ along the same trajectory. Then their discrepancy satisfies:
\begin{equation} \label{eq:lemma_bound}
    \left\| \widetilde{u}_\theta(\bm{x}_t, t, t') - u_\phi(\bm{x}_t, t, t') \right\|
    \le
    \frac{1}{t' - t} \int_{t}^{t'} \left\| v_\theta(\bm{x}_\tau, \tau) - v_\phi(\bm{x}_\tau, \tau) \right\| d\tau.
\end{equation}
\end{theorem}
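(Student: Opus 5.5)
The plan is to reduce the statement to the triangle inequality for Bochner/Riemann integrals, after fixing precisely what "along the same trajectory" means. I take the trajectory $\{\bm{x}_\tau\}_{\tau\in[t,t']}$ to be the teacher's ODE path started from $\bm{x}_t$, so that $\frac{d}{d\tau}\bm{x}_\tau = v_\phi(\bm{x}_\tau,\tau)$. The teacher average velocity is then, by definition (as in MeanFlow),
\begin{equation*}
u_\phi(\bm{x}_t,t,t') = \frac{1}{t'-t}\int_t^{t'} v_\phi(\bm{x}_\tau,\tau)\,d\tau = \frac{\bm{x}_{t'}-\bm{x}_t}{t'-t}.
\end{equation*}
Correspondingly, I read the induced average velocity of the backbone as
\begin{equation*}
\widetilde{u}_\theta(\bm{x}_t,t,t') = \frac{1}{t'-t}\int_t^{t'} v_\theta(\bm{x}_\tau,\tau)\,d\tau,
\end{equation*}
with $v_\theta$ evaluated at the \emph{same} points $\bm{x}_\tau$ rather than along its own ODE solution. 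This is the interpretation under which the stated bound holds without a Lipschitz or Gr\"onwall factor, so I make it explicit as the working definition.

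Given these definitions, the steps are as follows.

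First, I subtract the two expressions and use linearity of the integral over the common interval:
\begin{equation*}
\widetilde{u}_\theta - u_\phi = \frac{1}{t'-t}\int_t^{t'}\bigl(v_\theta(\bm{x}_\tau,\tau) - v_\phi(\bm{x}_\tau,\tau)\bigr)\,d\tau.
\end{equation*}

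Second, I take norms and apply $\|\int f\| \le \int \|f\|$. This holds for any norm on $\mathbb{R}^d$ by convexity, or via Riemann sums together with the triangle inequality. The result is exactly \eqref{eq:lemma_bound}.

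The only regularity needed is that $\tau\mapsto v_\theta(\bm{x}_\tau,\tau)$ and $\tau\mapsto v_\phi(\bm{x}_\tau,\tau)$ are integrable on $[t,t']$. This follows from the continuity of the network velocity fields and of the trajectory, which exists by standard ODE theory when $v_\phi$ is locally Lipschitz. The case $t=t'$ is degenerate: I exclude it, or interpret it as a limit, in which case both sides reduce to $\|v_\theta - v_\phi\|$ at $\tau=t$.

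The main obstacle is the interpretive step, not the estimate. If $\widetilde{u}_\theta$ were instead defined by integrating $v_\theta$ along its \emph{own} trajectory $\bm{y}_\tau$, the difference would also contain a term $v_\theta(\bm{y}_\tau,\tau)-v_\theta(\bm{x}_\tau,\tau)$. Controlling that term would require an $L$-Lipschitz assumption and Gr\"onwall's inequality, and would produce an extra factor of order $e^{L(t'-t)}$. Since the statement has no such factor, I commit to the same-trajectory reading above, and I would add a remark that this is the relevant quantity for distillation targets computed along teacher trajectories.

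Finally, I would note the consequence that motivates the theorem. Squaring and applying Jensen's (or Cauchy--Schwarz) inequality, then taking expectations, bounds the average-velocity error by the time-averaged instantaneous error. That is precisely what $\mathcal{L}_{\text{FM}}$ in \eqref{eq:fm} controls, up to the change of measure from $\mathcal{U}[0,1]$ to $\mathcal{U}[t,t']$.
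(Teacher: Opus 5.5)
Your proposal is correct and matches the paper's proof. The paper's definition also integrates $v_\theta$ at the same teacher-trajectory points $\bm{x}_\tau$; it then pulls out $1/(t'-t)$ by linearity and concludes with $\|\int f\,d\tau\| \le \int \|f\|\,d\tau$, and your remark that an own-trajectory reading would need a Lipschitz/Gr\"onwall factor usefully makes explicit what the paper leaves implicit in that definition.
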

\textit{Proof}: The full proof is provided in \textbf{Appendix~{\ref{app:analysis}}}.

\textit{Remark.}
{For a fixed interval $[t,t']$, \cref{theorem:error_bound} bounds the discrepancy between the teacher average velocity $u_\phi$ and the backbone-induced average velocity $\widetilde{u}_\theta$ by the interval-averaged instantaneous prediction error. The objective in \cref{eq:fm} minimizes this error over the full time horizon, providing a shared initialization for PVD to optimize each expert on its assigned interval.}

\textbf{Phase-wise Expert Initialization.} The aligned half-sized backbone serves as the starting point for phase-wise experts. To construct distinct experts $u_{\theta_i}$ for each temporal interval, we freeze the shared backbone weights $\theta$ and then introduce an additional zero-initialized time embedding layer alongside phase-specific LoRA parameters $\psi_i$. Consequently, at the beginning of phase-wise distillation, the initial output of each expert is identical to the instantaneous velocity of the backbone.

\subsection{Phase-wise Distillation} \label{sec:velocity_distillation}

Rather than globally modeling the teacher's probability-flow ODE (PF-ODE) $d\bm{x}_t / dt = v_\phi(\bm{x}_t, t)$, PVD decomposes the generative trajectory into temporally distinct phases, theoretically motivated by the following localized transport property:

\begin{theorem}[Temporal Marginal-Shift Bound] \label{theorem:smoothness}
Assume that the velocity field $v(\bm{x},t)$ is uniformly bounded in space and time, i.e., there exists a constant $C>0$ such that:
\begin{equation}
\|v(\bm{x},t)\| \leq C, \quad \forall (\bm{x},t)\in \mathbb{R}^d \times [0,1].
\end{equation}
Let $p_t$ denote the marginal distribution of $\bm{x}_t$ induced by the PF-ODE.
For any interval $[t,t'] \subseteq [0,1]$, the induced marginal distributions satisfy:
\begin{equation}
W_2(p_t, p_{t'}) \le C |t'-t|,
\end{equation}
where $W_2$ denotes the 2-Wasserstein distance.
\end{theorem}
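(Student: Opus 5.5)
The plan is to bound $W_2(p_t,p_{t'})$ by the cost of one explicit coupling, namely the deterministic coupling given by the PF-ODE flow map. Let $\Phi_{t\to s}$ denote the flow of $d\bm{x}_s/ds = v(\bm{x}_s,s)$ started at time $t$. Since $p_{t'}$ is the pushforward of $p_t$ under this flow, the pair $(\bm{x}_t, \Phi_{t\to t'}(\bm{x}_t))$ with $\bm{x}_t\sim p_t$ has marginals $p_t$ and $p_{t'}$. It is therefore an admissible coupling, and
\begin{equation*}
W_2^2(p_t,p_{t'}) \le \mathbb{E}_{\bm{x}_t\sim p_t}\bigl[\|\Phi_{t\to t'}(\bm{x}_t)-\bm{x}_t\|^2\bigr].
\end{equation*}
We take $t\le t'$ without loss of generality, since $W_2$ is symmetric.

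The key pointwise estimate comes from the integral form of the ODE. Along each trajectory,
\begin{equation*}
\Phi_{t\to t'}(\bm{x}_t)-\bm{x}_t=\int_t^{t'} v(\bm{x}_\tau,\tau)\,d\tau .
\end{equation*}
Moving the norm inside the integral (the triangle inequality for Bochner integrals, as in the proof of \cref{theorem:error_bound}) and applying the uniform bound $\|v\|\le C$ gives
\begin{equation*}
\|\Phi_{t\to t'}(\bm{x}_t)-\bm{x}_t\|\le C|t'-t|
\end{equation*}
for every starting point. Squaring, taking the expectation over $p_t$, and then taking square roots yields $W_2(p_t,p_{t'})\le C|t'-t|$. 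The same argument actually gives the stronger $W_\infty$ bound.

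The only genuinely delicate step is justifying that the flow map exists and transports $p_t$ to $p_{t'}$. This is exactly the meaning of ``marginal induced by the PF-ODE'' in the statement. I would handle it by adopting the standing regularity assumption, implicit in the paper, that $v$ is measurable and locally Lipschitz in $\bm{x}$, so that Cauchy--Lipschitz gives unique solutions. Boundedness of $v$ rules out blow-up, so the solutions are global on $[0,1]$, and the pushforward identity $p_{t'}=(\Phi_{t\to t'})_\#p_t$ follows from the definition of $p_s$ as the law of $\bm{x}_s$.

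If one prefers to avoid flow maps altogether, there is an alternative route. By the Benamou--Brenier formula, $W_2(p_t,p_{t'})$ is at most the integral from $t$ to $t'$ of $\|v(\cdot,\tau)\|_{L^2(p_\tau)}\,d\tau$, for any velocity field solving the continuity equation. Each of these norms is at most $C$, which gives the same conclusion. I would mention this as a remark, but keep the coupling argument as the main proof.
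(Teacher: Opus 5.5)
Your proposal is correct and takes essentially the same route as the paper: the PF-ODE flow map gives an admissible coupling of $p_t$ and $p_{t'}$, and the integral form of the ODE, the triangle inequality for integrals, and $\|v\|\le C$ bound the displacement pointwise by $C|t'-t|$. Your well-posedness discussion fills in a step the paper leaves implicit; its appendix assumes the stronger global bound $\|\nabla_{\bm{x}} v_\phi\|\le L$, which already guarantees that the flow map exists.
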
 
\textit{Proof}: The full proof is provided in \textbf{Appendix~{\ref{app:analysis}}}.

\textit{Remark.}
\cref{theorem:smoothness} shows that the $W_2$ distance between adjacent marginals is upper bounded linearly with the interval length. Partitioning $[0,1]$ into intervals tightens the marginal-shift bound from $C$ to $C|t_i-t_{i-1}|$ per phase, motivating the use of lightweight experts for each phase.

\textbf{Phase-wise Generation.} 
Motivated by \cref{theorem:smoothness}, we construct a localized two-phase generation process (see \cref{fig:method}b). During the initial \emph{Semantic Drafting} phase ($\mathcal{I}_1 = [0, t_1]$), the first expert $u_{\theta_1}$ transports pure noise $\bm{x}_0$ to an intermediate state $\bm{x}_{t_1}$, aiming to emphasize global layout and low-frequency structure while producing the initial condition for the subsequent phase. The \emph{Detail Refinement} phase ($\mathcal{I}_2 = [t_1, 1]$) then applies the second expert $u_{\theta_2}$ to the intermediate state to refine the remaining local structure and high-frequency detail.
Mathematically, to enable {one-step jumps} within $\mathcal{I}_i$, each expert $u_{\theta_i}$ predicts the average velocity for $t < t'$:
\begin{equation} \label{eq:vu}
u(\bm{x}_t, t, t') = \frac{1}{t' - t} \int_{t}^{t'} v_\phi(\bm{x}_s, s)\, ds.
\end{equation}
Following MeanFlow~\citep{geng2025mean}, differentiating the interval average in \cref{eq:vu} with respect to the interval start $t$ along the teacher PF-ODE while fixing $t'$ yields:
\begin{equation} \label{eq:mean_velocity_identity}
u_\phi(\bm{x}_t,t,t') = v_\phi(\bm{x}_t,t) + (t'-t)\,\frac{d}{dt}u_\phi(\bm{x}_t,t,t'),
\end{equation}
or equivalently $v_\phi=u_\phi-(t'-t)\frac{d}{dt}u_\phi$. Here, $u_\phi$ denotes the corresponding teacher average, and $\frac{d}{dt}$ is the total derivative with respect to $t$ along the trajectory. We distill each phase expert $u_{\theta_i}$ using the self-consistent stop-gradient target induced by \cref{eq:mean_velocity_identity}:
\begin{equation} \label{eq:mean_velocity_loss}
\mathcal{L}_{\text{vel}}^{(i)} = \mathbb{E} \left[ \left\| u_{\theta_i}(\bm{x}_t, t, t') - \operatorname{sg}\!\left( v_\phi(\bm{x}_t,t) + (t'-t)\,\frac{d}{dt}u_{\theta_i}(\bm{x}_t,t,t') \right) \right\|_2^2 \right],
\end{equation}
where $\operatorname{sg}(\cdot)$ denotes the stop-gradient operator.
We also sample $t'=t$ during training to recover instantaneous matching and stabilize optimization.

\subsection{Adversarial Refinement} \label{sec:discriminator}
While $L_2$ velocity matching (\cref{eq:mean_velocity_loss}) provides teacher-anchored supervision, it may regress to the mean and yield perceptually oversmoothed outputs. We therefore introduce phase-wise discriminators for complementary adversarial supervision in T2I tasks.
Unlike traditional GANs that learn global mappings, our discriminators, denoted by $D_{\omega_i}$, act as localized perceptual enhancers operating strictly within their respective phases $\mathcal{I}_i$. For $t < t'$, the expert generates $\bm{x}_{\text{fake}} = \bm{x}_t + (t' - t) u_{\theta_i}(\bm{x}_t, t, t')$, and minimizes the adversarial penalty $\mathcal{L}_{\text{adv}}^{(i)} = -\mathbb{E} [ D_{\omega_i}(\bm{x}_{\text{fake}}) ]$ derived from a standard hinge loss. 
The final training objective integrates this penalty as a secondary marginal regularizer, while the teacher-anchored velocity loss remains the primary distillation objective.

The training and inference procedures are summarized in \textbf{Algorithms~\ref{alg:pvd_training} and~\ref{alg:pvd_inference}} in \textbf{Appendix~{\ref{app:algo}}}.

\section{Experiments}
\label{sec:experiments}

\subsection{Experiment Setup}
\label{sec:exp_setup}

\textbf{Experimental Settings.}
We evaluate PVD on both class-conditional image (C2I) generation and text-to-image (T2I) generation.
For C2I, we follow the ImageNet $256\times 256$ setting used in recent one-step studies such as FACM~\citep{peng2025facm}, and report FID-50K and Inception Score (IS).
For T2I, we distill students from SD3.5-Medium~\citep{esser2024scaling}, FLUX.1-dev~\citep{flux2024}, and Qwen-Image~\citep{wu2025qwen}, and evaluate them on various evaluation benchmarks, including GenEval~\citep{ghosh2023geneval}, DPG-Bench~\citep{hu2024ella}, WISE~\citep{niu2025wise}, TIIF-Bench miniset~\citep{wei2025tiif}, {Qwen-Image-Bench (QI-Bench)~\citep{liQwenImageBenchGenerationCreation2026}}, together with {Aesthetic Score using Aesthetic Predictor V2.5~\citep{aesthetic_predictor_v2_5}}, PickScore~\citep{kirstain2023pick} and ImageReward~\citep{xu2023imagereward}.
For the computational cost, we report Normalized Sampling FLOPs,
defined as $N_{\mathrm{flops}} = F_{\mathrm{sample}} / F_{\mathrm{ref}}$,
where $F_{\mathrm{sample}}$ sums the accumulated FLOPs of the distilled student model to generate one image, including guidance
branches, and $F_{\mathrm{ref}}$ denotes the FLOPs of one forward pass of the
corresponding teacher backbone under the same input setting.

\textbf{Training Configuration.}
For C2I, we adopt the same teacher configuration as FACM~\citep{peng2025facm}, and perform full-parameter distillation on our phase-wise experts, which have a total of 676M parameters. Optimization is executed using AdamW with a batch size of $1024$, a learning rate of $10^{-4}$.
For T2I tasks, the SD3.5-Medium configuration undergoes full-parameter phase-wise finetuning, with 1.11B parameters per phase. 
For FLUX.1-dev~\citep{flux2024}, we utilize rank-$64$ LoRA experts, pairing a shared 5.82B backbone with two 0.16B LoRA experts (6.14B in total).
For Qwen-Image~\citep{wu2025qwen}, we use rank-32 LoRA experts, pairing a shared 10.25B backbone with two 0.15B LoRA experts (10.55B in total).
The experts for different temporal phases are trained independently. Detailed training hyperparameters are summarized in \cref{tab:train_config} of \textbf{Appendix {\ref{app:training}}}.

\textbf{Compared Methods.}
For C2I, we compare with consistency-style models (iCT~\citep{song2023improved}), mean-velocity / rectified-flow one-step learners (Shortcut~\citep{frans2024one}, MeanFlow~\citep{geng2025mean}, $\alpha$-Flow~\citep{zhang2025alphaflow}, FACM~\citep{peng2025facm}, iMF~\citep{geng2025improved}), and report multi-step methods  (SiT-XL/2~\citep{ma2024sit}, DiT-XL/2~\citep{peeblesScalableDiffusionModels2023}, REPA~\citep{yu2024representation}, LightningDiT~\citep{yao2025reconstruction}) for reference.
iCT, Shortcut, MeanFlow, $\alpha$-Flow, and iMF are trained directly from scratch, while FACM and PVD are distilled from a pretrained teacher.
For T2I, based on the given backbone model (SD3.5-Medium, FLUX.1-dev, and Qwen-Image), we compare against adversarial distillation (LADD~\citep{sauer2024fast}), distribution-matching distillation (DMD2~\citep{yin2024improved}, SenseFlow~\citep{ge2025senseflow}), self-adversarial / paired-flow approaches (TwinFlow~\citep{cheng2025twinflow}, SWD~\citep{starodubcev2025scale}), and recent few-step / one-step flow distillations (ArcFlow~\citep{yang2026arcflow}, Pi-Flow~\citep{chen2025pi}, TDD~\citep{wang2025target}), together with strong FLUX-oriented acceleration baselines (Hyper-FLUX~\citep{ren2024hyper}, FLUX-Turbo-$\alpha$~\citep{flux_turbo_alpha}).

\subsection{Class-Conditional Image Generation}
\label{sec:exp_4.2}
\begin{table*}[t]
    \centering

    \begin{minipage}[t]{0.48\textwidth}
    \caption{\textbf{Class-conditional image generation results on ImageNet256.} Multi-step generation results of representative methods are reported for reference. Best and second-best accelerated results are highlighted in \textbf{bold} and \underline{underlined}, respectively.}
    \label{tab:c2i}
    \centering
    \scriptsize
    \setlength{\tabcolsep}{3.2pt}
    \renewcommand{\arraystretch}{1.05}
    \resizebox{\linewidth}{!}{%
    \begin{tabular}{@{}lcccc@{}}
    \toprule
    \textbf{Methods} & \textbf{Params} & \textbf{$N_{\mathrm{flops}}$} & \textbf{FID-50K $\downarrow$} & \textbf{IS $\uparrow$} \\
    \midrule
    \rowcolor{gray!15} \multicolumn{5}{l}{\textit{Multi-step Generation}} \\
    SiT-XL/2~\citep{ma2024sit} & 675M & 500.00 & 2.06 & 278.24 \\
    DiT-XL/2~\citep{peeblesScalableDiffusionModels2023} & 675M & 500.00 & 2.27 & 270.30 \\
    REPA~\citep{yu2024representation} & 675M & 500.00 & 1.42 & 284.00 \\
    LightningDiT~\citep{yao2025reconstruction} & 675M & 500.00 & 1.35 & 295.30 \\
    \midrule
    \rowcolor{gray!15} \multicolumn{5}{l}{\textit{Accelerated Generation}} \\
    iCT~\citep{song2023improved} & 675M & 1.00 & 34.24 & - \\
    Shortcut~\citep{frans2024one} & 675M & 1.00 & 10.60 & - \\
    MeanFlow~\citep{geng2025mean} & 676M & 1.00 & 3.43 & - \\
    $\alpha$-Flow~\citep{zhang2025alphaflow} & 675M & 1.00 & 2.58 & - \\
    FACM~\citep{peng2025facm} & 675M & 1.00 & 1.76 & - \\
    iMF~\citep{geng2025improved} & 610M & 1.00 & \underline{1.72} & \underline{282.00} \\
    \textbf{PVD (Ours)} & 676M & 1.00 & \textbf{1.48} & \textbf{295.89} \\
    \bottomrule
    \end{tabular}
    }
    \end{minipage}\hfill%
    \begin{minipage}[t]{0.48\textwidth}
    \caption{\textbf{Ablation studies on class-conditional generation.} We vary the temporal split, overlap ratio, and expert composition. The reference configuration is listed first, followed by the grouped variants. The best result within each component is highlighted in \textbf{bold}.}
    \label{tab:c2i_ablation}
    \centering
    \scriptsize
    \setlength{\tabcolsep}{2pt}
    \renewcommand{\arraystretch}{0.90}
    \resizebox{\linewidth}{!}{%
    \begin{tabular}{@{}ccccc@{}}
    \toprule
    \textbf{Split} & \textbf{Overlap} & \makecell{\textbf{Expert}\\\textbf{Composition}} & \textbf{FID-50K} $\downarrow$ & \textbf{IS} $\uparrow$ \\
    \midrule
    $[0.5, 0.5]$ & 0.25 & Early $\rightarrow$ Late & 12.89 & 126.70 \\
    \midrule
    \rowcolor{gray!15}\multicolumn{5}{l}{\textit{Varying Time Intervals Split}} \\
    $[0.3, 0.7]$ & 0.25 & Early $\rightarrow$ Late & 13.34 & 124.62 \\
    $[0.4, 0.6]$ & 0.25 & Early $\rightarrow$ Late & \textbf{11.51} & \textbf{132.12} \\
    $[0.6, 0.4]$ & 0.25 & Early $\rightarrow$ Late & 13.36 & 123.37 \\
    \midrule
    \rowcolor{gray!15}\multicolumn{5}{l}{\textit{Varying Overlap Ratio}} \\
    $[0.5, 0.5]$ & 0.00 & Early $\rightarrow$ Late & \textbf{12.10} & 127.09 \\
    $[0.5, 0.5]$ & 0.10 & Early $\rightarrow$ Late & 12.36 & \textbf{127.50} \\
    $[0.5, 0.5]$ & 1.00 & Early $\rightarrow$ Late & 18.69 & 101.89 \\
    \midrule
    \rowcolor{gray!15}\multicolumn{5}{l}{\textit{Varying Expert Composition}} \\
    - & - & Shared $\times 2$ & 26.62 & 77.12 \\
    \bottomrule
    \end{tabular}
    }
    \end{minipage}
\end{table*}

\textbf{Main Results.}
\cref{tab:c2i} reports the quantitative results on ImageNet. 
PVD achieves an FID of \textbf{1.48} and an IS of \textbf{295.89}, significantly outperforming previous methods such as iMF (\underline{1.72} FID, \underline{282.00} IS) and FACM (1.76 FID).
Notably, it is highly competitive even compared with the multi-step references.  
Visualizations are provided in \textbf{Appendix~{\ref{app:visuals}}}, where we can see that PVD consistently preserves coherent global structures and intricate fine details.
These results support the advantages of decomposing the global diffusion trajectory into localized phase targets.

{\textbf{Ablation Studies.}
We conduct ablations with a 133M-parameter model along three axes: (i) the temporal partition $(t_1, 1{-}t_1)$, (ii) the overlap between adjacent training intervals, and (iii) expert composition, comparing ordered phase-specific experts with two evaluations of a shared student.
As shown in \cref{tab:c2i_ablation}, the asymmetric $[0.4, 0.6]$ split achieves the best FID and IS, suggesting that the two phases play unequal roles: the early expert establishes global structure from high-noise inputs, whereas the late expert requires a broader interval to refine local details.
Reducing the overlap improves performance, while full overlap substantially degrades both metrics, indicating that excessive sharing weakens phase specialization.
Finally, replacing the ordered experts with two evaluations of a shared student degrades FID from 12.89 to 26.62 and IS from 126.70 to 77.12. Thus, repeated computation alone cannot replace phase-specific capacity allocation.
Based on these results, we use the $[0.4, 0.6]$ split, zero overlap, and early-to-late expert composition by default.}

\begin{table*}[t]
    \centering
    \caption{\textbf{Quality and efficiency of distilled text-to-image generation.} 
    The right two columns report active parameters (B) and peak VRAM (GiB), together with PVD's reductions relative to its teacher model.
    Higher is better for quality metrics; the best and second-best results among distilled methods within each backbone group are highlighted in \textbf{bold} and \underline{underlined}, respectively.}
    \label{tab:t2i_all}
    \setlength{\tabcolsep}{4.5pt}
    \resizebox{\textwidth}{!}{
    \begin{tabular}{lccccccccc !{\color{gray!55}\vrule width 0.35pt} ccc}
    \toprule
    \multirow{2.5}{*}{\textbf{Model}} & \multirow{2.5}{*}{\textbf{GenEval}} & \multirow{2.5}{*}{\textbf{DPG}} & \multirow{2.5}{*}{\textbf{WISE}} & {\textbf{QI}} & \multicolumn{2}{c}{\textbf{TIIF-Bench}} & \textbf{Aesthetic} & \textbf{Pick} & \textbf{Image} & \multirow{2.5}{*}{\textbf{$N_{\mathrm{flops}}$}} & \multirow{2.5}{*}{\makecell{\textbf{Active}\\\textbf{Params.}}} & \multirow{2.5}{*}{\makecell{\textbf{Peak}\\\textbf{VRAM}}} \\
    \cmidrule(lr){6-7}
    & & & & {\textbf{Bench}} & \textbf{short} & \textbf{long} & \textbf{Score} & \textbf{Score} & \textbf{Reward} & & & \\
    \midrule
    \rowcolor{gray!15} \textbf{SD3.5 Medium}~\citep{esser2024scaling} & 0.6829 & 84.49 & 0.45 & {43.19} & 0.7166 & 0.7190 & 5.7212 & 21.45 & 0.6127 & 100.00 & 2.24 & 4.99 \\
    \midrule
    LADD~\citep{sauer2024fast} & 0.0941 & 26.46  & 0.05 & {6.55} & 0.2232 & 0.2019 & 4.3533 & 18.40 & -2.0724 & 1.00 & 2.27 & 4.86 \\
    DMD2~\citep{yin2024improved} & 0.4563 & 46.15 & 0.31 & {29.27} & 0.5410 & 0.4931 & 4.7885 & 20.09 & -0.1003 & 1.00 & 2.24 & 4.81 \\
    TwinFlow~\citep{cheng2025twinflow} & 0.6171 & 72.81 & \underline{0.37} & {30.55} & 0.5856 & 0.6094 & \underline{5.0111} & 20.26 & -0.0154 & 1.00 & 2.25 & 4.83 \\
    SenseFlow~\citep{ge2025senseflow} & 0.5222 & \textbf{81.98} & 0.31 & {29.43} & 0.6273 & 0.5957 & 4.9707 & \underline{20.69} & \underline{0.4834} & 1.00 & 2.24 & 4.81 \\
    SWD~\citep{starodubcev2025scale} & \underline{0.6286} & 79.23 & 0.36 & {\underline{31.44}} & \underline{0.6493} & \underline{0.6452} & 4.9440 & 20.53 & 0.3578 & 1.11 & 2.32 & 5.28 \\
    \rowcolor{gray!8} \textbf{PVD (Ours)} & \textbf{0.6948} & \underline{81.81} & \textbf{0.40} & {\textbf{35.51}} & \textbf{0.6561} & \textbf{0.6638} & \textbf{5.4618} & \textbf{20.84} & \textbf{0.4947} & \multicolumn{1}{!{\color{gray!55}\vrule width 0.35pt}>{\columncolor{gray!8}}c}{0.99} & \multicolumn{1}{>{\columncolor{gray!8}[\tabcolsep][\dimexpr\tabcolsep+3pt\relax]}c}{\makecell{\textbf{1.10}\\{\scriptsize \textbf{$\downarrow$50.89\%}}}} & \multicolumn{1}{!{\color{gray!8}\vrule width 0.6pt}>{\columncolor{gray!8}[\dimexpr\tabcolsep+3pt\relax][\tabcolsep]}c}{\makecell{\textbf{2.67}\\{\scriptsize \textbf{$\downarrow$46.49\%}}}} \\
    \midrule
    \rowcolor{gray!15} \textbf{FLUX.1 dev}~\citep{flux2024} & 0.6675 & 83.95 & 0.50 & {43.52} & 0.6760 & 0.7144 & 5.9286 & 21.81 & 0.8139 & 50.00 & 11.90 & 22.64 \\
    \midrule
    Hyper-FLUX~\citep{ren2024hyper} & 0.3437 & 53.50 & 0.17 & {13.85} & 0.4614 & 0.4382 & 4.5131 & 19.56 & -1.3969 & 1.00 & 12.25 & 23.29 \\
    FLUX-Turbo-$\alpha$~\citep{flux_turbo_alpha} & 0.1456 & 33.90 & 0.09 & {8.47} & 0.2964 & 0.2843 & 4.2431 & 18.94 & -2.0362 & 1.00 & 12.25 & 23.29 \\
    TDD~\citep{wang2025target} & 0.2037 & 50.63 & 0.14 & {13.71} & 0.4332 & 0.3862 & 4.2815 & 19.25 & -1.7594 & 0.94 & 12.06 & 22.91 \\
    Pi-Flow~\citep{chen2025pi} & 0.4817 & 71.40 & 0.23 & {20.38} & 0.5840 & 0.5822 & 4.3435 & 20.04 & -0.3111 & 1.07 & 12.53 & 23.87 \\
    ArcFlow~\citep{yang2026arcflow} & 0.0026 & 37.70 & 0.10 & {3.87} & 0.3392 & 0.2732 & 3.4819 & 18.15 & -2.2052 & 1.07 & 12.53 & 23.89 \\
    SenseFlow~\citep{ge2025senseflow} & 0.4551 & 67.01 & 0.25 & {21.69} & 0.5357 & 0.5528 & 4.8732 & 20.08 & -0.4266 & 1.00 & 11.89 & 22.62 \\
    SWD~\citep{starodubcev2025scale} & \underline{0.6285} & \underline{82.43} & \underline{0.37} & {\underline{39.71}} & \underline{0.6800} & \underline{0.6874} & \underline{5.3846} & \underline{21.22} & \underline{0.6562} & 1.05 & 12.29 & 24.45 \\
    \rowcolor{gray!8} \textbf{PVD (Ours)} & \textbf{0.6532} & \textbf{82.83} & \textbf{0.44} & {\textbf{43.14}} & \textbf{0.6819} & \textbf{0.6987} & \textbf{5.7145} & \textbf{21.32} & \textbf{0.6747} & \multicolumn{1}{!{\color{gray!55}\vrule width 0.35pt}>{\columncolor{gray!8}}c}{1.01} & \multicolumn{1}{>{\columncolor{gray!8}[\tabcolsep][\dimexpr\tabcolsep+3pt\relax]}c}{\makecell{\textbf{5.98}\\{\scriptsize \textbf{$\downarrow$49.75\%}}}} & \multicolumn{1}{!{\color{gray!8}\vrule width 0.6pt}>{\columncolor{gray!8}[\dimexpr\tabcolsep+3pt\relax][\tabcolsep]}c}{\makecell{\textbf{12.28}\\{\scriptsize \textbf{$\downarrow$45.76\%}}}} \\
    \midrule
    \rowcolor{gray!15} \textbf{Qwen-Image}~\citep{wu2025qwen} & 0.8720 & 89.10 & 0.64 & {48.78} & 0.8811 & 0.8645 & 5.8584 & 21.90 & 0.9852 & 100.00 & 20.43 & 38.42 \\
    \midrule
    ArcFlow~\citep{yang2026arcflow} & 0.0913 & 43.38 & 0.20 & {4.88} & 0.6510 & 0.5638 & 2.9363 & 18.51 & -1.8307 & 1.07 & 21.37 & 40.36 \\
    Pi-Flow~\citep{chen2025pi} & 0.5636 & 68.45 & 0.30 & {18.08} & 0.8073 & 0.8270 & 4.0918 & 19.72 & -0.4517 & 1.07 & 21.37 & 40.36 \\
    TwinFlow~\citep{cheng2025twinflow} & \underline{0.8338} & \textbf{86.75} & \underline{0.57} & {\underline{46.16}} & \underline{0.8343} & \underline{0.8452} & \underline{5.5880} & \underline{21.42} & \underline{0.9042} & 1.00 & 20.44 & 38.48 \\
    \rowcolor{gray!8} \textbf{PVD (Ours)} & \textbf{0.8846} & \underline{86.54} & \textbf{0.59} & {\textbf{46.85}} & \textbf{0.8604} & \textbf{0.8687} & \textbf{5.7526} & \textbf{21.66} & \textbf{0.9063} & \multicolumn{1}{!{\color{gray!55}\vrule width 0.35pt}>{\columncolor{gray!8}}c}{1.02} & \multicolumn{1}{>{\columncolor{gray!8}[\tabcolsep][\dimexpr\tabcolsep+3pt\relax]}c}{\makecell{\textbf{10.40}\\{\scriptsize \textbf{$\downarrow$49.10\%}}}} & \multicolumn{1}{!{\color{gray!8}\vrule width 0.6pt}>{\columncolor{gray!8}[\dimexpr\tabcolsep+3pt\relax][\tabcolsep]}c}{\makecell{\textbf{19.84}\\{\scriptsize \textbf{$\downarrow$48.36\%}}}} \\
    \bottomrule
    \end{tabular}
    }
    \vspace{-3mm}
\end{table*}

\subsection{Text-to-Image Generation}
\label{sec:exp_4.3}
\textbf{Main Results.}
The overall quantitative comparisons of T2I models across all benchmarks are summarized in \cref{tab:t2i_all}, with category-wise results reported in \textbf{Appendix~{\ref{app:category}}}.

Across the three backbone groups, PVD consistently achieves most of the best results on various evaluation dimensions. For prompt alignment, such as GenEval and TIIF-Bench, and general generation capability, such as Qwen-Image-Bench, monolithic models have to jointly handle semantic grounding, layout composition, and visual synthesis within a single forward process. In contrast, PVD accomplishes this task with half-sized phase-wise experts: the early expert focuses on semantic grounding and global layout construction, while the late expert refines perceptual details and high-frequency visual patterns.
This divide-and-conquer design also improves perceptual quality. On aesthetic and perceptual metrics such as Aesthetic Score, PickScore, and ImageReward, conventional single-forward models produce over-smoothed results, whereas PVD allows the second expert to enhance local details after the global structure has been established. As a result, PVD achieves a better balance between structural accuracy and perceptual sharpness.

The right two columns in \cref{tab:t2i_all} further show that these quality gains are achieved with substantially reduced deployment cost (the complete measurements are provided in \textbf{Appendix~{\ref{app:hardware}}}). Compared with the corresponding teachers, PVD reduces active parameters by 50.89\%, 49.75\%, and 49.10\%, and peak VRAM by 46.49\%, 45.76\%, and 48.36\% on SD3.5-Medium, FLUX.1-dev, and Qwen-Image, respectively. Thus, PVD preserves strong generation quality while approximately halving the active model size and memory footprint across all three backbones.

\begin{figure}[!htbp]
    \centering
    \includegraphics[width=1\textwidth]{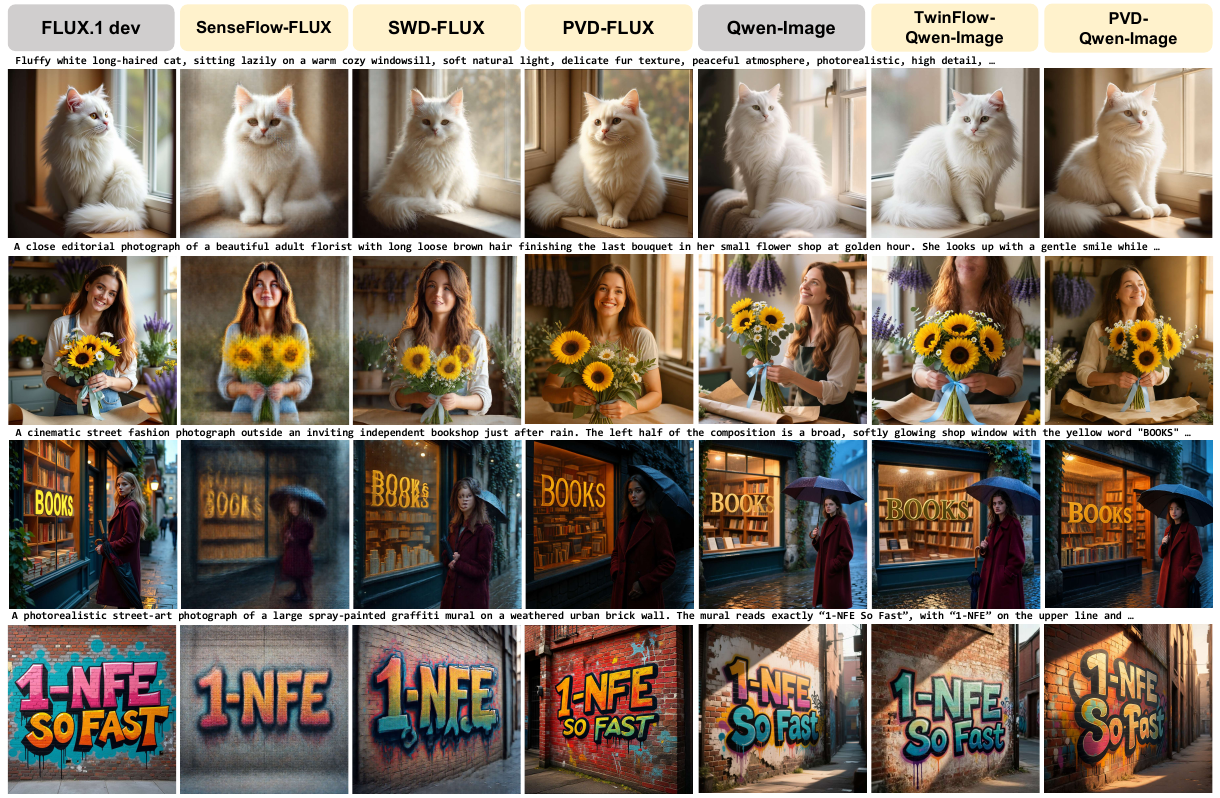}
    \vspace{-4mm}
    \caption{\textbf{Visual comparisons of T2I generation.} Compared with existing distillation methods, PVD preserves much clearer structure and finer details under a single-forward budget, demonstrating close T2I generation quality to its multi-step teachers.}
    \label{fig:t2i_compare}
    \vspace{-3mm}
\end{figure}

The visual comparisons of major T2I models are shown in \cref{fig:t2i_compare}. Existing distillation methods suffer from severe perceptual degradation. They typically generate oversmoothed outputs and frequently introduce noticeable structural artifacts along object boundaries.
In contrast, our PVD successfully overcomes these limitations, delivering high-fidelity images that closely mirror the multi-step teachers. This corroborates the effectiveness of our decoupled detail-refinement phase in synthesizing complex scenes.
Additional visual comparisons and intermediate-to-final visualizations of PVD distilled from Qwen-Image are provided in \textbf{Appendix~\ref{app:t2i_visuals}}.

\textbf{Ablation Studies.}
We ablate PVD along three axes: capacity allocation, adversarial refinement, and expert composition. Detailed results are provided in \textbf{Appendix~{\ref{app:exp_results}}}. Under compute-controlled comparisons, the compact phase-wise experts outperform a monolithic full-size student even when the latter costs twice the computation. In addition, adversarial refinement provides a complementary quality gain, and expert reuse and order swaps further reveal that the learned mappings are directional and non-interchangeable. Together, these results indicate that PVD's gains come from allocating capacity to ordered phase-specific experts rather than simply increasing the number of evaluations.

\begin{figure}[!htbp]
    \centering
    \includegraphics[width=1\textwidth]{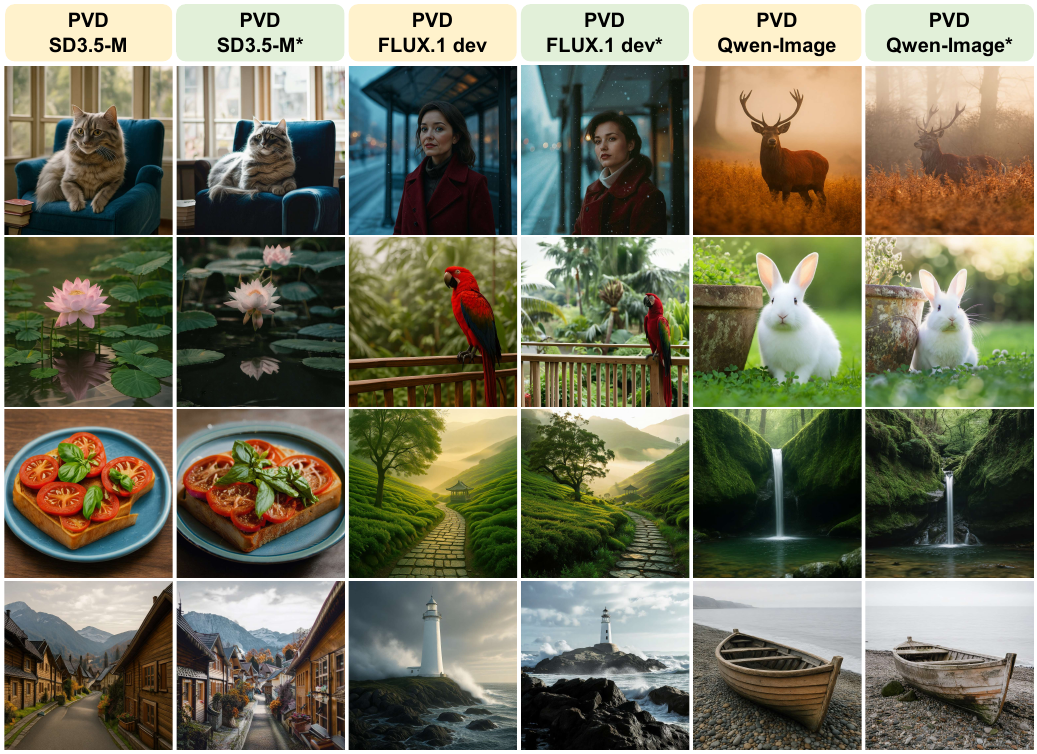}
    \caption{{\textbf{Visual comparisons after further training on Unsplash data.} The models further trained on Unsplash images are marked with $^{*}$, which can generate more authentic photorealistic images with improved lighting, shadows, composition, and fine-grained details.}}
    \label{fig:t2i_futher_train}
    \vspace{-2mm}
\end{figure}

\begin{table}[!t]
    \centering
    \captionsetup{skip=4pt}
    \caption{{\textbf{Results of further training on Unsplash data.} The models further trained on Unsplash images are marked with $^{*}$. Better results per backbone are highlighted in \textbf{bold}.}}
    \label{tab:t2i_further_training}
    \resizebox{0.8\textwidth}{!}{
        \setlength{\tabcolsep}{10pt}
        \renewcommand{\arraystretch}{1.10}
        \begin{tabular}{lcccc}
            \toprule
            \textbf{Backbone} & \textbf{Weights} & \textbf{Aesthetic score} $\uparrow$ & \textbf{DreamSim diversity} $\uparrow$ & \textbf{GenEval} $\uparrow$ \\
            \midrule
            \multirow{2}{*}{SD3.5-Medium} & PVD & 6.4753 & 0.1473 & \textbf{0.6948} \\
            & \cellcolor{gray!10}PVD$^{*}$ & \cellcolor{gray!10}\textbf{6.5993} & \cellcolor{gray!10}\textbf{0.1832} & \cellcolor{gray!10}0.6697 \\
            \midrule
            \multirow{2}{*}{FLUX.1-dev} & PVD & 6.4471 & 0.1457 & \textbf{0.6532} \\
            & \cellcolor{gray!10}PVD$^{*}$ & \cellcolor{gray!10}\textbf{6.7941} & \cellcolor{gray!10}\textbf{0.2238} & \cellcolor{gray!10}0.6334 \\
            \midrule
            \multirow{2}{*}{Qwen-Image} & PVD & 6.6463 & 0.1079 & \textbf{0.8846} \\
            & \cellcolor{gray!10}PVD$^{*}$ & \cellcolor{gray!10}\textbf{6.7289} & \cellcolor{gray!10}\textbf{0.1432} & \cellcolor{gray!10}0.8594 \\
            \bottomrule
        \end{tabular}
    }
    \vspace{-4mm}
\end{table}

\textbf{Further Training on Better-Quality Data.}
In the above experiments, we use the same widely adopted training datasets as prior methods (\textbf{Appendix~\ref{app:training}}).
However, these datasets consist primarily of synthetic images, whose quality and diversity may constrain the performance attainable through distillation.
To explore PVD's potential, we further train the distilled models on Unsplash~\citep{unsplashdataset}, which provides high-resolution real-world photographs characterized by diverse scenes, natural lighting, authentic photographic quality, and carefully composed scenes.

We perform further training on SD3.5-Medium, FLUX.1-dev, and Qwen-Image, with quantitative results summarized in \cref{tab:t2i_further_training}. GenEval scores decrease slightly. This is because Unsplash data provide rich supervision for lighting, texture, and composition, but do not provide equally strong supervision for the precise object counts, attribute bindings, and spatial relations evaluated by GenEval. On metrics of aesthetic quality and diversity, however, all three backbones improve consistently. The Aesthetic scores~\citep{aesthetic_predictor_v2_5} and DreamSim diversity~\citep{NEURIPS2023_9f09f316} are computed using 200 prompts with 10 random seeds per prompt.
These results show that further training improves aesthetics while substantially broadening the variety of generated outputs.

The visual comparisons in \cref{fig:t2i_futher_train} further demonstrate marked improvements in photographic realism. Compared with their initial distilled counterparts, the further-trained models produce more natural lighting and shadows, richer material textures, and stronger spatial depth.
These results highlight PVD's ability to effectively adapt the visual characteristics of high-quality training data into the generation process, extending model capacity for perceptual refinement beyond mere teacher imitation.
Additional visualizations after further training are provided in \cref{fig:t2i_appendix_further_train} in \textbf{Appendix~\ref{app:t2i_visuals}}.

\section{Conclusion}
\label{sec:conclusion}
{In this work, we introduce Phase-wise Velocity Distillation (PVD) for high-quality image generation within a single full-backbone-forward compute budget. Rather than assigning the heterogeneous coarse-to-fine transport to one monolithic student, PVD partitions the generation timeline into coarse and fine phases, models each transition through its average velocity, and assigns a dedicated half-sized expert to each phase. This phase-wise allocation separates structural composition from detail refinement while keeping the cumulative computation equivalent to one full-backbone evaluation.
Experiments on both class-conditional and text-to-image generation show that two ordered phase-specific experts outperform a single full-size monolithic student under comparable computation. PVD achieves an FID of 1.48 on ImageNet $256\times256$. When applied to Stable Diffusion 3.5-Medium, FLUX.1-dev, and Qwen-Image, it remains competitive with the corresponding multi-step teachers, significantly outperforming prior distillation methods. Across these T2I backbones, PVD reduces active parameters by 49.10--50.89\% and peak VRAM by 45.76--48.36\%. These results indicate that PVD is an effective way to improve the quality--efficiency trade-off of few-step image generation.}

\textbf{Limitations}.
For the sake of simplicity, we empirically split the teacher backbone into two half-sized experts, yet the optimal number of experts and more effective partitioning strategies can be further investigated. 
Although effective, PVD can still fail for cases such as dense object counting and uncommon spatial relations, which are also challenging cases for the teacher backbones. Representative examples of failure cases are provided in \textbf{Appendix~\ref{app:failure_cases}}.

{
    \small
    \bibliographystyle{IEEEtran}
    \bibliography{references}
}
\clearpage
\appendix

\setlength{\textfloatsep}{10pt plus 2pt minus 2pt}
\setlength{\floatsep}{10pt plus 2pt minus 2pt}
\setlength{\intextsep}{10pt plus 2pt minus 2pt}

\section*{Appendix}
\phantomsection
\label{app:appendix}

\begingroup
\small
\setlength{\parindent}{0pt}
\setlength{\parskip}{0pt}

\newlength{\appnumwidth}
\setlength{\appnumwidth}{2.2em}

\newcommand{\appcontentsline}[4]{%
  \noindent
  \makebox[\appnumwidth][l]{\hyperref[#2]{\textbf{#1}}}%
  \hyperref[#2]{\textbf{#3}}%
  \nobreak\leaders\hbox to 0.75em{\hss.\hss}\hfill
  \nobreak\hyperref[#2]{\pageref*{#2}}%
  \par
  \vspace{0.08em}
  \hspace*{\appnumwidth}%
  {\footnotesize\textcolor{gray}{#4}}%
  \par
  \vspace{0.45em}
}

{\large\bfseries Contents}\par
\vspace{0.35em}
\hrule
\vspace{0.65em}

\appcontentsline{\ref{app:related_comparison}}{app:related_comparison}
{Further Discussions on Related Distillation Methods}
{Corresponding to \cref{sec:related_work} in the main paper.}

\appcontentsline{\ref{app:analysis}}{app:analysis}
{Mathematical Analysis on PVD}
{Corresponding to \cref{sec:expert_init} in the main paper.}

\appcontentsline{\ref{app:algo}}{app:algo}
{Training and Inference Procedure of PVD}
{Corresponding to \cref{sec:discriminator} in the main paper.}

\appcontentsline{\ref{app:training}}{app:training}
{Detailed Experimental Settings}
{Corresponding to \cref{sec:exp_setup} in the main paper.}

\appcontentsline{\ref{app:visuals}}{app:visuals}
{More C2I Visualizations}
{Corresponding to \cref{sec:exp_4.2} in the main paper.}

\appcontentsline{\ref{app:category}}{app:category}
{Category-wise T2I Results}
{Corresponding to \cref{sec:exp_4.3} in the main paper.}

\appcontentsline{\ref{app:hardware}}{app:hardware}
{Detailed Cost Measurements}
{Corresponding to \cref{sec:exp_4.3} in the main paper.}

\appcontentsline{\ref{app:t2i_visuals}}{app:t2i_visuals}
{More T2I Visualizations}
{Corresponding to \cref{sec:exp_4.3} in the main paper.}

\appcontentsline{\ref{app:exp_results}}{app:exp_results}
{Ablation Studies on T2I Task}
{Corresponding to \cref{sec:exp_4.3} in the main paper.}

\appcontentsline{\ref{app:failure_cases}}{app:failure_cases}
{Failure Cases}
{Corresponding to \cref{sec:conclusion} in the main paper.}

\appcontentsline{\ref{app:broader_impacts}}{app:broader_impacts}
{Broader Impacts}
{Corresponding to \cref{sec:conclusion} in the main paper.}

\vspace{0.25em}
\hrule
\endgroup

\vspace{1em}

\clearpage
\begingroup
\section{Further Discussions on Related Distillation Methods}
\label{app:related_comparison}
\begin{table}[H]
    \centering
    \caption{
    \textbf{Comparison with representative related methods} in terms of temporal localization of the training objective, parameter
    specialization, average-velocity modeling, and explicit phase-factorized
    inference. 
    ``Init.'' indicates that average-velocity learning is used only for student
    initialization rather than retained as the final phase-specific transport
    objective.
    }
    \label{tab:related_phase_comparison}
    \footnotesize
    \setlength{\tabcolsep}{3.0pt}
    \renewcommand{\arraystretch}{1.12}

    \resizebox{\columnwidth}{!}{%
    \begin{tabular}{@{}lcccccc@{}}
        \toprule
        &
        \multicolumn{4}{c}{\textbf{Transport factorization}}
        &
        \multicolumn{2}{c}{\textbf{Inference efficiency}}
        \\
        \cmidrule(lr){2-5}
        \cmidrule(l){6-7}

        \textbf{Method}
        &
        \makecell{\textbf{Phase-restricted}\\\textbf{training}}
        &
        \makecell{\textbf{Phase-specific}\\\textbf{parameters}}
        &
        \makecell{\textbf{Average-velocity}\\\textbf{objective}}
        &
        \makecell{\textbf{Phase-factorized}\\\textbf{inference}}
        &
        \makecell{\textbf{Reduced active}\\\textbf{params. \& VRAM}}
        &
        \makecell{\textbf{Few-step}\\\textbf{distillation}}
        \\
        \midrule

        MeanFlow
        & $\times$
        & $\times$
        & \checkmark
        & $\times$
        & $\times$
        & \checkmark
        \\

        Hierarchical Distill.
        & $\times$
        & $\times$
        & Init.
        & $\times$
        & $\times$
        & \checkmark
        \\

        TimeStep Master
        & \checkmark
        & \checkmark
        & $\times$
        & $\times$
        & $\times$
        & $\times$
        \\

        Glance
        & \checkmark
        & \checkmark
        & $\times$
        & $\times$
        & $\times$
        & $\times$
        \\

        Phased DMD
        & \checkmark
        & \checkmark
        & $\times$
        & \checkmark
        & $\times$
        & \checkmark 
        \\

        \rowcolor{gray!15}
        \textbf{PVD (Ours)}
        & \checkmark
        & \checkmark
        & \checkmark
        & \checkmark
        & \checkmark
        & \checkmark
        \\

        \bottomrule
    \end{tabular}%
    }
\end{table}

We further compare PVD with closely related methods that
instantiate different forms of temporal specialization, phase-wise distillation,
or average-velocity modeling.
As summarized in \cref{tab:related_phase_comparison}, we compare these methods from the following aspects: 
(i) whether the training objective is explicitly restricted to predefined
temporal phases,
(ii) whether different phases are associated with distinct trainable parameters,
(iii) whether average velocity is used as the primary transport target,
(iv) whether inference explicitly factorizes the full transport into a sequential
composition of phase-specific operators, and
(v) how the resulting design allocates active model capacity and cumulative
sampling computation.
This decomposition separates temporal specialization itself from
the compute-constrained transport factorization targeted by PVD.

MeanFlow~\citep{geng2025mean} learns interval-conditioned average velocities
with a single shared model and supports few-step generation. However, its
standard few-step inference allocates the shared model capacity to a single
global noise-to-data transport, rather than explicitly decomposing inference
into a sequence of phase-specific operators.
Hierarchical Distillation~\citep{cheng2025structure} uses MeanFlow-based
trajectory distillation to initialize a single-step student and subsequently
refines the same generator with distribution-level objectives; thus, the
hierarchy is primarily realized through the training procedure rather than
through temporally distinct inference operators.

TimeStep Master~\citep{zhuang2025timestep} and
Glance~\citep{dong2025glance} introduce timestep- or phase-specific lightweight
parameters, while retaining the underlying backbone during each iterative
denoising evaluation.
These methods therefore specialize parameters across temporal regimes, but do
not explicitly replace the global transport with a sequence of compact
phase-local average-velocity operators under a one-backbone-equivalent
cumulative compute budget.
Phased DMD~\citep{fan2025phaseddmd} directly decomposes generation into
subinterval-wise generators and sequential intermediate transports, but its
few-step inference configuration requires multiple generator
evaluations.

PVD differs from these methods in that it integrates phase-restricted
average-velocity learning, distinct compact phase experts, and explicit
sequential transport composition.
Instead of assigning the full model capacity to a single long-horizon mapping,
PVD temporally factorizes the transport and reallocates the same aggregate
sampling-compute budget across shorter phase-specific mappings.
Consequently, both the transport horizon and the active capacity per evaluation
are factorized, while the cumulative sampling cost remains approximately equal to one full-backbone forward pass.

\par
\endgroup
\clearpage

\section{Mathematical Analysis on PVD} \label{app:analysis}

We consider a continuous-time setting $t \in [0,1]$ with latent states $\bm{x}_t \in \mathbb{R}^d$ and marginal distributions $p_t$.
The teacher dynamics are governed by a probability-flow ODE:
$\frac{d\bm{x}_t}{dt} = v_\phi(\bm{x}_t,t),$
which induces a flow map $\Psi_{t \to t'}$ transporting $p_t$ to $p_{t'}$.
We denote the teacher velocity field by $v_\phi(\bm{x},t)$, and the student instantaneous velocity field by $v_\theta(\bm{x},t)$.
{In the analysis below, we distinguish the trajectory-induced average velocity $\widetilde{u}_\theta$ obtained by integrating $v_\theta$ from the directly parameterized phase expert $u_{\theta_i}$.}
Throughout the analysis, we assume that $v_\phi$ and $v_\theta$ are measurable and integrable along trajectories, and the teacher velocity field satisfies the regularity conditions: $\|v_\phi(\bm{x},t)\| \le C$ and $\|\nabla_{\bm{x}} v_\phi(\bm{x},t)\| \le L$, where $C$ is the global bound on the velocity magnitude and $L$ is the global Lipschitz constant in the spatial variable.

\begin{definition}[Instantaneous velocity / probability-flow ODE]
Let $p_0 = \mathcal{N}(\mathbf{0}, \mathbf{I})$ be the prior noise distribution and $p_1$ the data distribution.
Consider $\bm{x}_0 \sim p_0$, and let $\{\bm{x}_t\}_{t \in [0,1]}$ denote the trajectory induced by the teacher probability-flow ODE, there is:
\begin{equation}
  \frac{\mathrm{d}\bm{x}_t}{\mathrm{d}t} = v_\phi(\bm{x}_t,t).
\end{equation}
\end{definition}

\begin{definition}[Average velocity via integration]
{For any temporal phase $[t,t']$ with $\Delta t \coloneqq t'-t>0$, we define the teacher average velocity $u_\phi$ and the trajectory-induced backbone average velocity $\widetilde{u}_\theta$ by integrating their respective instantaneous velocity fields along the trajectory $\{\bm{x}_\tau\}_{\tau\in[t,t']}$:}
\begin{equation}
{u_\phi(\bm{x}_t,t,t') \coloneqq \frac{1}{\Delta t} \int_{t}^{t'} v_\phi(\bm{x}_\tau, \tau) \, \mathrm{d}\tau, \quad
\widetilde{u}_\theta(\bm{x}_t,t,t') \coloneqq \frac{1}{\Delta t} \int_{t}^{t'} v_\theta(\bm{x}_\tau, \tau) \, \mathrm{d}\tau.}
\end{equation}
{The directly parameterized phase expert $u_{\theta_i}$ is optimized by the subsequent phase-wise distillation objective and is not assumed to equal $\widetilde{u}_\theta$.}
\end{definition}

\begin{definition}[Estimation error of trajectory-induced average velocity]
{The trajectory-induced initialization error is defined as the discrepancy between the backbone-induced and teacher average velocities, i.e.,}
\begin{equation}
    {\widetilde{\mathcal{E}}_{\text{avg}} \coloneqq \left\| \widetilde{u}_\theta(\bm{x}_t,t,t') - u_\phi(\bm{x}_t,t,t') \right\|.}
\end{equation}
\end{definition}

\textbf{\cref{theorem:error_bound}} [Integration Error Bound]
{\textit{The discrepancy between the teacher average velocity and the trajectory-induced backbone average velocity is upper bounded by the temporal average of the instantaneous prediction error:}}
\begin{equation}
  {\left\| \widetilde{u}_\theta(\bm{x}_t,t,t') - u_\phi(\bm{x}_t,t,t') \right\|
  \le
  \frac{1}{t' - t} \int_{t}^{t'} \left\| v_\theta(\bm{x}_\tau, \tau) - v_\phi(\bm{x}_\tau, \tau) \right\| \mathrm{d}\tau.}
\end{equation}

\begin{proof}
{By substituting the integral definitions of $\widetilde{u}_\theta$ and $u_\phi$ into the trajectory-induced error, we have:}
\begin{equation}
{\left\| \widetilde{u}_\theta(\bm{x}_t,t,t') - u_\phi(\bm{x}_t,t,t') \right\| = \left\| \frac{1}{t' - t} \int_{t}^{t'} v_\theta(\bm{x}_\tau, \tau) \mathrm{d}\tau - \frac{1}{t' - t} \int_{t}^{t'} v_\phi(\bm{x}_\tau, \tau) \mathrm{d}\tau \right\|.}
\end{equation}
Using the linearity of the integral, we can move the positive scalar $\frac{1}{t'-t}$ out of the integration:
\begin{equation}
{\left\| \widetilde{u}_\theta(\bm{x}_t,t,t') - u_\phi(\bm{x}_t,t,t') \right\| = \frac{1}{t' - t} \left\| \int_{t}^{t'} \big( v_\theta(\bm{x}_\tau, \tau) - v_\phi(\bm{x}_\tau, \tau) \big) \mathrm{d}\tau \right\|.}
\end{equation}
Applying the continuous form of the triangle inequality for integrals (which states that the norm of an integral is bounded by the integral of the norm), we obtain:
\begin{equation}
    \frac{1}{t' - t} \left\| \int_{t}^{t'} \big( v_\theta(\bm{x}_\tau, \tau) - v_\phi(\bm{x}_\tau, \tau) \big) \mathrm{d}\tau \right\|
\le
\frac{1}{t' - t} \int_{t}^{t'} \left\| v_\theta(\bm{x}_\tau, \tau) - v_\phi(\bm{x}_\tau, \tau) \right\|  \mathrm{d}\tau.
\end{equation}
Therefore, the inequality holds:
\begin{equation}
{\left\| \widetilde{u}_\theta(\bm{x}_t,t,t') - u_\phi(\bm{x}_t,t,t') \right\| \le \frac{1}{t' - t} \int_{t}^{t'} \left\| v_\theta(\bm{x}_\tau, \tau) - v_\phi(\bm{x}_\tau, \tau) \right\| \mathrm{d}\tau.}
\end{equation}
End of proof.
\end{proof}

\begin{definition}[Wasserstein Discrepancy]
The discrepancy between two probability distributions $p_t$ and $p_{t'}$ is measured by the 2-Wasserstein distance:
\begin{equation}
    W_2(p_t, p_{t'}) \coloneqq \left( \inf_{\pi \in \Pi(p_t, p_{t'})} \mathbb{E}_{(\bm{x}, \bm{y}) \sim \pi} \left[ \|\bm{x} - \bm{y}\|^2 \right] \right)^{1/2}.
\end{equation}
\end{definition}

\textbf{\cref{theorem:smoothness}} [Temporal Marginal-Shift Bound].
\textit{Assume that the velocity field $v(\bm{x},t)$ is uniformly bounded in space and time, i.e., there exists a constant $C>0$ such that:
\[
\|v(\bm{x},t)\| \leq C, \quad \forall (\bm{x},t)\in \mathbb{R}^d \times [0,1].
\]
Let $p_t$ denote the marginal distribution of $\bm{x}_t$ induced by the PF-ODE.
Then for any interval $[t,t'] \subseteq [0,1]$, the induced marginal distributions satisfy:
\begin{equation}
W_2(p_t, p_{t'}) \le C |t'-t|.
\end{equation}
where $W_2$ denotes the 2-Wasserstein distance.}

\begin{proof}
Since the exact ODE mapping $\Psi_{t \to t'}$ pushes $p_t$ forward to $p_{t'}$, it forms a valid transport plan $\pi \in \Pi(p_t, p_{t'})$. Thus, the 2-Wasserstein distance is upper-bounded by the expected trajectory length:
\begin{equation}
    W_2^2(p_t, p_{t'}) \le \mathbb{E}_{\bm{x}_t \sim p_t} \left[ \left\| \Psi_{t \to t'}(\bm{x}_t) - \bm{x}_t \right\|^2 \right].
\end{equation}
By the fundamental theorem of calculus, we have $\Psi_{t \to t'}(\bm{x}_t) - \bm{x}_t = \int_{t}^{t'} v_\phi(\bm{x}_\tau, \tau) \, \mathrm{d}\tau$. Given the bounded velocity $\|v_\phi\| \le C$, we have:
\begin{equation}
    \left\| \int_{t}^{t'} v_\phi(\bm{x}_\tau, \tau) \, \mathrm{d}\tau \right\| \le \int_{t}^{t'} \left\| v_\phi(\bm{x}_\tau, \tau) \right\| \, \mathrm{d}\tau \le C(t' - t) = C \Delta t.
\end{equation}
Therefore, $W_2(p_t, p_{t'}) \le C \Delta t$, i.e., the discrepancy is linearly bounded by the interval length.

End of proof.
\end{proof}

\clearpage

\section{Training and Inference Procedure of PVD}
\label{app:algo}
The complete optimization and sampling pipelines are summarized in the following \textbf{Algorithms~\ref{alg:pvd_training} and~\ref{alg:pvd_inference}}, respectively.

\noindent
\begin{minipage}[t]{\linewidth}
  \centering
  \begin{algorithm}[H]
    \caption{Training procedure of PVD}
    \label{alg:pvd_training}
    \begin{algorithmic}
      \State {\bfseries Input:} Data distribution $p_1$, phases $\{\mathcal{I}_i\}_{i=1}^2$, experts $\{u_{\theta_i}\}_{i=1}^2$, teacher $v_\phi$, discriminators $\{D_{\omega_i}\}_{i=1}^2$, adversarial weights $\{\lambda_i\}_{i=1}^2$, discriminator-update count $K_D$
      \State {\bfseries Output:} Updated expert and discriminator parameters $\{\theta_i,\omega_i\}_{i=1}^2$
      \State Draw phase index $i \sim \mathcal{U}(\{1,2\})$
      \If{adversarial training is enabled}
          \State Freeze $\theta_i$
          \For{$k=1$ {\bfseries to} $K_D$}
              \State Sample $\bm{x}_1^{(k)}\sim p_1$, $\bm{x}_0^{(k)}\sim\mathcal{N}(\mathbf{0},\mathbf{I})$, and $t_k,t'_k\sim\mathcal{I}_i$ with $t_k<t'_k$
              \State $\bm{x}_{t_k}^{(k)} \leftarrow (1-t_k)\bm{x}_0^{(k)}+t_k\bm{x}_1^{(k)}$
              \State $\bm{x}_{\mathrm{real}}^{(k)} \leftarrow (1-t'_k)\bm{x}_0^{(k)}+t'_k\bm{x}_1^{(k)}$
              \State $\bm{x}_{\mathrm{fake}}^{(k)} \leftarrow \bm{x}_{t_k}^{(k)}+(t'_k-t_k)u_{\theta_i}(\bm{x}_{t_k}^{(k)},t_k,t'_k)$
              \State $\mathcal{L}_{D,k}^{(i)} \leftarrow \mathbb{E}\!\left[\max(0,1-D_{\omega_i}(\bm{x}_{\mathrm{real}}^{(k)})) + \max(0,1+D_{\omega_i}(\mathrm{stop\_grad}(\bm{x}_{\mathrm{fake}}^{(k)})))\right]$
              \State Update $\omega_i$ by descending $\nabla_{\omega_i}\mathcal{L}_{D,k}^{(i)}$
          \EndFor
          \State Freeze $\omega_i$ and unfreeze $\theta_i$
      \EndIf
      \State Sample a fresh generator minibatch $\bm{x}_1\sim p_1$, $\bm{x}_0\sim\mathcal{N}(\mathbf{0},\mathbf{I})$, and $t,t'\sim\mathcal{I}_i$ with $t<t'$
      \State $\bm{x}_t \leftarrow (1-t)\bm{x}_0+t\bm{x}_1$, $\bm{v}\leftarrow v_\phi(\bm{x}_t,t)$
      \State $F_{\text{FM}} \leftarrow u_{\theta_i}(\bm{x}_t,t,t)$
      \State $F_{\text{avg}},\partial F_{\text{avg}}\leftarrow\mathrm{JVP}\!\left(u_{\theta_i},(\bm{x}_t,t,t'),(\bm{v},1,0)\right)$
      \State $\bar{\bm{v}}\leftarrow\bm{v}+(t'-t)\partial F_{\text{avg}}$, $\bm{x}_{\mathrm{fake}}\leftarrow\bm{x}_t+(t'-t)F_{\text{avg}}$
      \If{adversarial training is enabled}
          \State $\mathcal{L}_{\text{adv}}^{(i)}\leftarrow-\mathbb{E}\!\left[D_{\omega_i}(\bm{x}_{\mathrm{fake}})\right]$
      \Else
          \State $\mathcal{L}_{\text{adv}}^{(i)}\leftarrow 0$
      \EndIf
      \State $\mathcal{L}_{G}^{(i)} \leftarrow \mathcal{L}_{\text{FM}}^{(i)}(F_{\text{FM}},\bm{v}) + \mathcal{L}_{\text{vel}}^{(i)}\!\left(F_{\text{avg}},\mathrm{stop\_grad}(\bar{\bm{v}})\right) + \lambda_i\mathcal{L}_{\text{adv}}^{(i)}$
      \State Update $\theta_i$ by descending $\nabla_{\theta_i}\mathcal{L}_{G}^{(i)}$
    \end{algorithmic}
  \end{algorithm}
\end{minipage}
\par\medskip
\begin{minipage}[t]{\linewidth}
  \centering
  \begin{algorithm}[H]
    \caption{Inference procedure of PVD}
    \label{alg:pvd_inference}
    \begin{algorithmic}
      \State {\bfseries Input:} phases $\{\mathcal{I}_i=[t_{i-1}, t_i]\}_{i=1}^2$, experts $\{u_{\theta_i}\}_{i=1}^2$, integration steps $\{K_i\}_{i=1}^2$
      \State {\bfseries Output:} Generated image $\hat{\bm{x}}$
      \State Sample initial noise $\bm{x}_{t_0} \sim \mathcal{N}(\mathbf{0}, \mathbf{I})$
      \For{$i = 1$ {\bfseries to} $N$}
          \State Build $t_{i-1}=\sigma_0^{(i)} < \cdots < \sigma_{K_i}^{(i)}=t_i$
          \For{$k = 0$ {\bfseries to} $K_i-1$}
              \State $\Delta \sigma \leftarrow \sigma_{k+1}^{(i)} - \sigma_k^{(i)}$
              \State $\bm{v}_k \leftarrow u_{\theta_i}(\bm{x}_{\sigma_k^{(i)}}, \sigma_k^{(i)}, \sigma_{k+1}^{(i)})$
              \State $\bm{x}_{\sigma_{k+1}^{(i)}} \leftarrow \bm{x}_{\sigma_k^{(i)}} + \Delta \sigma\, \bm{v}_k$
          \EndFor
      \EndFor
      \State Decode $\bm{x}_{t_N}$ to image space and set $\hat{\bm{x}}$ as the decoded output
      \State \Return $\hat{\bm{x}}$
    \end{algorithmic}
  \end{algorithm}
\end{minipage}
\par\medskip
\clearpage
\section{Detailed Experimental Settings}
\label{app:training}
The training process consists of two stages: (i) backbone initialization and (ii) phase-expert training.
\cref{tab:train_config} separates the backbone initialization settings from the subsequent phase-expert training configurations across the C2I and T2I settings. We normalize the transport trajectory to $[0,1]$ and partition it into an early interval $[0,0.4]$ and a late interval $[0.4,1]$. Each expert is trained independently using samples restricted to its assigned interval, with $(P_{\mathrm{mean}},P_{\mathrm{std}})$ rescaled accordingly.

For LightningDiT-XL/1 and SD3.5-Medium, both phase experts are optimized with full-parameter training. For FLUX.1-dev and Qwen-Image, we instead train lightweight LoRA experts on top of a shared backbone, using ranks 64 and 32, respectively.
For Qwen-Image, the backbone is further trained using DiffusionNFT~\cite{zheng2026diffusionnft} after the backbone initialization.
Besides, we use the same training datasets as TwinFlow~\citep{cheng2025twinflow}, which consist of BLIP3o-60k~\citep{chen2025blip3} and Echo-4o~\citep{ye2025echo} for phase-wise training.
We use the released weights of corresponding methods for comparisons.

\begin{table}[H]
\centering
\caption{\textbf{Training configurations for PVD.} Backbone initialization and phase-expert training are reported separately. Weight decay is $0$ and gradient clipping is $1.0$.}
\label{tab:train_config}
\footnotesize
\setlength{\tabcolsep}{5.0pt}
\renewcommand{\arraystretch}{1.06}

\begin{subtable}{\columnwidth}
\centering
\caption{Backbone configuration.}
\label{tab:train_config_backbone}
\resizebox{\columnwidth}{!}{%
\begin{tabular}{@{}lcccc@{}}
\toprule
\textbf{Setting} & \textbf{LightningDiT-XL/1} & \textbf{SD3.5-Medium} & \textbf{FLUX.1-dev} & \textbf{Qwen-Image} \\
\midrule
\rowcolor{gray!15}\multicolumn{5}{l}{\textit{Architecture and data}} \\
Parameterization & \multicolumn{4}{c}{Full} \\
Parameters & 338M & 1.1B & 5.82B & 10.25B \\
Trajectory interval & \multicolumn{4}{c}{$[0,1]$} \\
Resolution & 256 & $256\!\to\!1024$ & $256\!\to\!1024$ & $256\!\to\!1024$ \\
Dataset & ImageNet-256 & BLIP3o-long & BLIP3o-long & BLIP3o-long \\
\midrule
\rowcolor{gray!15}\multicolumn{5}{l}{\textit{Optimization and compute}} \\
Batch size & 1024 & $1024\!\to\!256$ & $512\!\to\!64$ & $64\!\to\!8$ \\
Time sampling $(P_{\mathrm{mean}},P_{\mathrm{std}})$ & \multicolumn{4}{c}{$(0.0,1.0)$} \\
Training steps & 140k & 35k & 35k & 35k \\
$LR_G$ & $10^{-4}$ & $2\!\times\!10^{-4}$ & $10^{-4}$ & $10^{-4}$ \\
$\beta_G$ & $(0.9,0.999)$ & $(0.99,0.999)$ & $(0.99,0.999)$ & $(0.99,0.999)$ \\
GPUs & $32\times$ A800 & $16\times$ PPU & $16\times$ PPU & $16\times$ PPU \\
\bottomrule
\end{tabular}%
}
\end{subtable}

\vspace{2mm}

\begin{subtable}{\columnwidth}
\centering
\caption{Phase-expert configuration.}
\label{tab:train_config_expert}
\resizebox{\columnwidth}{!}{%
\begin{tabular}{@{}lcccc@{}}
\toprule
\textbf{Setting} & \textbf{LightningDiT-XL/1} & \textbf{SD3.5-Medium} & \textbf{FLUX.1-dev} & \textbf{Qwen-Image} \\
\midrule
\rowcolor{gray!15}\multicolumn{5}{l}{\textit{Expert architecture and data}} \\
Parameterization & Full & Full & LoRA ($r=64$) & LoRA ($r=32$) \\
Trainable parameters & 338M & 1.1B & 0.16B & 0.15B \\
Phase length (early / late) & \multicolumn{4}{c}{$0.4$ / $0.6$} \\
Resolution & 256 & 1024 & 1024 & 1024 \\
Dataset & ImageNet-256 & BLIP3o-60k & BLIP3o-60k & \makecell[c]{BLIP3o-60k \\ Echo-4o} \\
\midrule
\rowcolor{gray!15}\multicolumn{5}{l}{\textit{Optimization and compute}} \\
Batch size & 1024 & 32 & 64 & 32 \\
Time sampling $(P_{\mathrm{mean}},P_{\mathrm{std}})$ & \multicolumn{4}{c}{$(-0.8,1.6)$, rescaled to each phase interval} \\
Training steps & 39k & 30k & 4k & 8k \\
$LR_G$ & $10^{-4}$ & $10^{-5}$ & $2\!\times\!10^{-5}$ & $10^{-5}$ \\
$LR_D$ & -- & $10^{-5}$ & $10^{-5}$ & $10^{-5}$ \\
$\beta$ & $(0.9,0.99)$ & $(0.9,0.99)$ & $(0.9,0.99)$ & $(0.9,0.99)$ \\
EMA $\sigma_{\mathrm{rel}}$ & 0.2 & 0.2 & 0.05 & 0.05 \\
GPUs & $32\times$ A800 & $8\times$ PPU & $16\times$ PPU & $16\times$ PPU \\
\bottomrule
\end{tabular}
}
\end{subtable}
\end{table}

\clearpage
\begingroup
\section{More C2I Visualizations}
\label{app:visuals}
\Cref{fig:c2i_1} presents additional class-conditional samples generated by PVD, complementing the ImageNet results in \cref{sec:exp_4.2}. The samples exhibit coherent global structures, clear object boundaries, and rich local textures.
\begin{figure}[H]
    \centering
    \captionsetup{font=small,skip=4pt}
    \includegraphics[width=0.96\columnwidth]{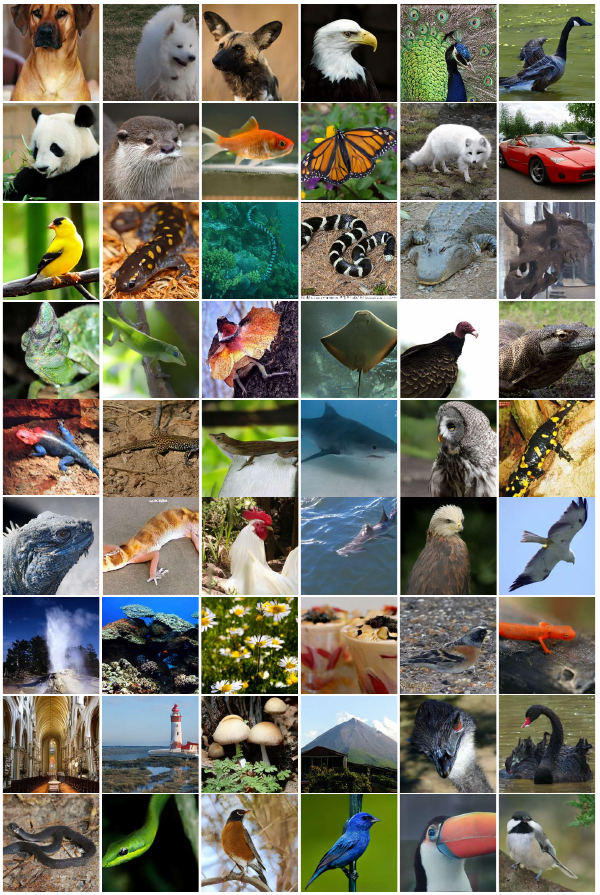}
    \caption{More class-conditional samples generated by PVD.}
    \label{fig:c2i_1}
\end{figure}

\par
\endgroup
\clearpage
\section{Category-wise T2I Results}
\label{app:category}
{In this section, we provide the detailed category-wise results on each benchmark, including GenEval (\cref{tab:geneval_results}), DPG-Bench (\cref{tab:dpg_bench_metrics}), WISE (\cref{tab:wise}), TIIF-Bench mini (\cref{tab:tiif_part1_visual}, \cref{tab:tiif_part2_reasoning}, and \cref{tab:tiif_part3_concepts}), and Qwen-Image-Bench (\cref{tab:qwenimage_bench}).}

\begin{table}[htbp]
\centering
\caption{Quantitative comparison on GenEval. Higher is better ($\uparrow$) for all metrics. The best and second-best accelerated results within each backbone group are in \textbf{bold} and \underline{underlined}, respectively.}
\label{tab:geneval_results}
\resizebox{\textwidth}{!}{
\begin{tabular}{l ccccccc}
\toprule
Model & Single Object & Two Object & Counting & Colors & Position & Attribute Binding & Overall $\uparrow$ \\
\midrule
\rowcolor{gray!15} SD3.5 Medium & 0.9875 & 0.8131 & 0.6469 & 0.8324 & 0.2400 & 0.5775 & 0.6829 \\
LADD & 0.2812 & 0.0025 & 0.0625 & 0.2181 & 0.0000 & 0.0000 & 0.0941 \\
DMD2 & 0.9000 & 0.3636 & 0.4750 & 0.6489 & 0.1200 & 0.2300 & 0.4563 \\
TwinFlow & 0.9594 & 0.6490 & 0.5688 & 0.7527 & \textbf{0.4250} & 0.3475 & 0.6171 \\
SenseFlow & 0.9344 & 0.6212 & 0.4562 & 0.6888 & 0.1625 & 0.2700 & 0.5222 \\
SWD & \underline{0.9656} & \underline{0.7601} & \underline{0.5844} & \underline{0.7793} & 0.2625 & \underline{0.4200} & \underline{0.6286} \\
\rowcolor{gray!8}\textbf{PVD (Ours)} & \textbf{0.9844} & \textbf{0.9066} & \textbf{0.5906} & \textbf{0.7819} & \underline{0.3475} & \textbf{0.5575} & \textbf{0.6948} \\
\midrule
\rowcolor{gray!15} FLUX.1 dev & 0.9750 & 0.8182 & 0.7312 & 0.7979 & 0.2200 & 0.4625 & 0.6675 \\
Hyper-FLUX & 0.8062 & 0.1566 & 0.3312 & 0.6356 & 0.0325 & 0.1000 & 0.3437 \\
FLUX-Turbo-$\alpha$ & 0.3938 & 0.0051 & 0.0844 & 0.3803 & 0.0050 & 0.0050 & 0.1456 \\
ArcFlow & 0.0156 & 0.0000 & 0.0000 & 0.0000 & 0.0000 & 0.0000 & 0.0026 \\
Pi-Flow & 0.9219 & 0.3712 & 0.5188 & 0.6782 & 0.0725 & 0.3275 & 0.4817 \\
SenseFlow & 0.9312 & 0.1944 & \textbf{0.6500} & 0.7473 & 0.0450 & 0.1625 & 0.4551 \\
TDD & 0.5437 & 0.0278 & 0.1312 & 0.4894 & 0.0075 & 0.0225 & 0.2037 \\
SWD & \textbf{0.9906} & \underline{0.7045} & \underline{0.6062} & \underline{0.7872} & \underline{0.1925} & \underline{0.4900} & \underline{0.6285} \\
\rowcolor{gray!8}\textbf{PVD (Ours)} & \underline{0.9875} & \textbf{0.7828} & 0.5813 & \textbf{0.7952} & \textbf{0.2275} & \textbf{0.5450} & \textbf{0.6532} \\
\midrule
\rowcolor{gray!15} Qwen-Image & 0.9906 & 0.9318 & 0.8781 & 0.9016 & 0.7750 & 0.7550 & 0.8720 \\
ArcFlow & 0.3000 & 0.0177 & 0.0281 & 0.1968 & 0.0025 & 0.0025 & 0.0913 \\
Pi-Flow & 0.8625 & 0.5303 & 0.5781 & 0.6809 & 0.4000 & 0.3300 & 0.5636 \\
TwinFlow & \textbf{0.9938} & \underline{0.8965} & \underline{0.7562} & \underline{0.9016} & \underline{0.7150} & \underline{0.7400} & \underline{0.8338} \\
\rowcolor{gray!8}\textbf{PVD (Ours)} & \underline{0.9875} & \textbf{0.9747} & \textbf{0.8000} & \textbf{0.9255} & \textbf{0.8175} & \textbf{0.8025} & \textbf{0.8846} \\
\bottomrule
\end{tabular}
}
\end{table}

\begin{table}[htbp]
\centering
\caption{Quantitative evaluation on DPG-Bench. Higher is better ($\uparrow$) for all metrics. The best and second-best accelerated results within each backbone group are in \textbf{bold} and \underline{underlined}, respectively.}
\label{tab:dpg_bench_metrics}
\resizebox{\textwidth}{!}{
\begin{tabular}{l c ccc ccccc cc cc c}
\toprule
\multirow{2}{*}{Model} & \multirow{2}{*}{Global} & \multicolumn{3}{c}{Entity} & \multicolumn{5}{c}{Attribute} & \multicolumn{2}{c}{Relation} & \multicolumn{2}{c}{Other} & \multirow{2}{*}{Overall} \\
\cmidrule(lr){3-5} \cmidrule(lr){6-10} \cmidrule(lr){11-12} \cmidrule(lr){13-14}
& & Whole & State & Part & Color & Shape & Size & Tex. & Oth. & Spat. & N-Spat. & Cnt. & Txt. & \\
\midrule
\rowcolor{gray!15} SD3.5 Medium & 91.04 & 90.44 & 90.07 & 88.36 & 90.36 & 89.01 & 89.99 & 90.17 & 89.28 & 91.15 & 89.46 & 91.03 & 87.40 & 84.49 \\
LADD & 35.63 & 40.89 & 49.58 & 60.31 & 58.88 & 51.69 & 38.16 & 47.68 & 47.23 & 54.96 & 57.50 & 60.65 & 51.86 & 26.46 \\
DMD2 & 62.69 & 49.92 & 61.15 & 42.04 & 54.09 & 32.37 & 54.49 & 54.74 & 56.69 & 39.39 & 57.86 & 60.65 & 47.09 & 46.15 \\
TwinFlow & 82.16 & 83.64 & 82.10 & 83.57 & 86.04 & \underline{89.33} & 82.89 & 80.87 & 82.32 & 85.47 & 83.95 & 84.87 & 81.22 & 72.81 \\
SenseFlow & \underline{88.56} & \underline{87.29} & \textbf{87.86} & \underline{85.11} & \underline{89.30} & \textbf{91.55} & \textbf{90.30} & 84.80 & \underline{86.92} & 89.23 & \textbf{89.12} & \underline{87.65} & \textbf{88.66} & \textbf{81.98} \\
SWD & 85.73 & 84.12 & 85.39 & 78.70 & 88.31 & 86.76 & \underline{88.10} & \textbf{88.43} & 86.28 & \textbf{91.47} & \underline{86.30} & 83.41 & 76.00 & 79.23 \\
\rowcolor{gray!8}\textbf{PVD (Ours)} & \textbf{90.43} & \textbf{87.64} & \underline{86.48} & \textbf{87.40} & \textbf{91.50} & 87.27 & 86.91 & \underline{88.39} & \textbf{89.13} & \underline{91.25} & 85.60 & \textbf{89.92} & \underline{87.72} & \underline{81.81} \\
\midrule
\rowcolor{gray!15} FLUX.1 dev & 83.28 & 92.28 & 84.48 & 82.23 & 90.92 & 80.79 & 66.53 & 87.48 & 85.14 & 93.53 & 88.68 & 79.00 & 90.00 & 83.95 \\
Hyper-FLUX & 61.81 & 69.32 & 72.24 & 64.66 & 77.92 & 65.41 & 70.32 & 72.00 & 61.24 & 74.42 & 67.66 & 69.04 & 75.34 & 53.50 \\
FLUX-Turbo-$\alpha$ & 64.01 & 52.00 & 65.55 & 49.00 & 66.12 & 59.31 & 63.47 & 61.69 & 50.27 & 45.88 & 69.61 & 55.47 & 68.24 & 33.90 \\
ArcFlow & 68.21 & 59.40 & 67.82 & 68.54 & 61.71 & 61.48 & 54.24 & 65.24 & 55.92 & 63.10 & 61.76 & 67.93 & 70.33 & 37.70 \\
Pi-Flow & 78.97 & 78.34 & 81.74 & 82.58 & 81.55 & \underline{80.95} & \underline{79.59} & 80.76 & 78.98 & 79.87 & 83.67 & \textbf{79.97} & 77.70 & 71.40 \\
SenseFlow & 77.61 & 74.35 & 76.93 & 77.46 & 79.57 & 75.33 & 75.52 & 78.94 & 76.53 & 82.21 & 81.85 & 79.34 & 76.65 & 67.01 \\
TDD & 68.82 & 61.81 & 71.06 & 62.62 & 79.88 & 69.94 & 61.81 & 75.25 & 69.59 & 66.00 & 71.69 & 66.15 & 76.33 & 50.63 \\
SWD & \underline{85.37} & \underline{90.58} & \textbf{88.28} & \textbf{89.01} & \underline{86.32} & \textbf{87.95} & \textbf{88.33} & \underline{85.47} & \textbf{89.01} & \underline{91.39} & \textbf{90.04} & 77.60 & \underline{89.69} & \underline{82.43} \\
\rowcolor{gray!8}\textbf{PVD (Ours)} & \textbf{85.41} & \textbf{91.48} & \underline{82.68} & \underline{83.20} & \textbf{92.62} & 78.60 & 74.38 & \textbf{87.42} & \underline{85.70} & \textbf{93.08} & \underline{89.31} & \underline{79.50} & \textbf{94.00} & \textbf{82.83} \\
\midrule
\rowcolor{gray!15} Qwen-Image & 94.68 & 92.95 & 87.66 & 88.64 & 93.80 & 92.92 & 94.27 & 87.19 & 92.41 & 92.88 & 93.38 & 93.62 & 94.19 & 89.10 \\
ArcFlow & 74.92 & 61.95 & 75.56 & 68.09 & 75.38 & 78.88 & 78.65 & 60.19 & 66.78 & 63.96 & 75.35 & 64.46 & 56.20 & 43.38 \\
Pi-Flow & 81.71 & 75.53 & 84.41 & 74.48 & 78.49 & \underline{79.68} & \underline{81.18} & 79.57 & 80.13 & 81.70 & 70.38 & 80.80 & 79.52 & 68.45 \\
TwinFlow & \textbf{91.15} & \underline{93.58} & \textbf{91.30} & \textbf{90.04} & \underline{89.35} & \textbf{92.46} & \textbf{92.35} & \textbf{90.57} & \underline{86.60} & \underline{93.36} & \textbf{93.21} & \textbf{93.21} & \underline{89.10} & \textbf{86.75} \\
\rowcolor{gray!8}\textbf{PVD (Ours)} & \underline{84.19} & \textbf{93.74} & \underline{85.74} & \underline{88.67} & \textbf{94.28} & 79.04 & 73.97 & \underline{90.38} & \textbf{88.03} & \textbf{94.60} & \underline{89.94} & \underline{86.00} & \textbf{94.00} & \underline{86.54} \\
\bottomrule
\end{tabular}
}
\end{table}

\begin{table}[htbp]
\centering
\caption{Quantitative comparison on WISE. Higher is better ($\uparrow$) for all metrics. The best and second-best accelerated results within each backbone group are in \textbf{bold} and \underline{underlined}, respectively.}
\label{tab:wise}
\resizebox{1\textwidth}{!}{
\begin{tabular}{l ccccccc}
\toprule
Model & Cultural & Time & Space & Biology & Physics & Chemistry & Overall \\
\midrule
\rowcolor{gray!15} SD3.5 Medium & 0.43 & 0.50 & 0.52 & 0.41 & 0.53 & 0.33 & 0.45 \\
LADD & 0.04 & 0.03 & 0.07 & 0.04 & 0.05 & 0.07 & 0.05 \\
DMD2 & 0.28 & 0.28 & 0.40 & 0.25 & 0.44 & \underline{0.30} & 0.31 \\
TwinFlow & 0.27 & \underline{0.40} & \underline{0.53} & \textbf{0.40} & \textbf{0.50} & \textbf{0.38} & \underline{0.37} \\
SenseFlow & 0.27 & 0.32 & 0.43 & 0.33 & 0.39 & 0.22 & 0.31 \\
SWD & \textbf{0.45} & 0.27 & 0.45 & 0.30 & 0.35 & 0.10 & 0.36 \\
\rowcolor{gray!8}\textbf{PVD (Ours)} & \underline{0.35} & \textbf{0.46} & \textbf{0.54} & \underline{0.39} & \underline{0.49} & 0.26 & \textbf{0.40} \\
\midrule
\rowcolor{gray!15} FLUX.1 dev & 0.48 & 0.58 & 0.62 & 0.42 & 0.51 & 0.35 & 0.50 \\
Hyper-FLUX & 0.15 & 0.12 & 0.27 & 0.11 & 0.24 & 0.21 & 0.17 \\
FLUX-Turbo-$\alpha$ & 0.09 & 0.04 & 0.16 & 0.04 & 0.13 & 0.13 & 0.09 \\
ArcFlow & 0.12 & 0.08 & 0.15 & 0.05 & 0.08 & 0.07 & 0.10 \\
Pi-Flow & 0.22 & 0.21 & 0.35 & 0.14 & 0.31 & 0.22 & 0.23 \\
SenseFlow & 0.25 & 0.22 & 0.31 & 0.15 & 0.31 & \underline{0.24} & 0.25 \\
TDD & 0.13 & 0.11 & 0.22 & 0.10 & 0.20 & 0.13 & 0.14 \\
SWD & \textbf{0.46} & \underline{0.27} & \underline{0.43} & \underline{0.34} & \underline{0.40} & 0.14 & \underline{0.37} \\
\rowcolor{gray!8}\textbf{PVD (Ours)} & \underline{0.40} & \textbf{0.48} & \textbf{0.59} & \textbf{0.39} & \textbf{0.52} & \textbf{0.32} & \textbf{0.44} \\
\midrule
\rowcolor{gray!15} Qwen-Image & 0.66 & 0.61 & 0.79 & 0.59 & 0.70 & 0.41 & 0.64 \\
ArcFlow & 0.21 & 0.14 & 0.30 & 0.14 & 0.27 & 0.16 & 0.20 \\
Pi-Flow & 0.30 & 0.23 & 0.44 & 0.20 & 0.41 & 0.21 & 0.30 \\
TwinFlow & \underline{0.56} & \underline{0.58} & \underline{0.70} & \underline{0.51} & \underline{0.63} & \textbf{0.41} & \underline{0.57} \\
\rowcolor{gray!8}\textbf{PVD (Ours)} & \textbf{0.57} & \textbf{0.61} & \textbf{0.74} & \textbf{0.54} & \textbf{0.67} & \underline{0.37} & \textbf{0.59} \\
\bottomrule
\end{tabular}
}
\end{table}

\begin{table*}[htbp]
\centering
\caption{Quantitative comparison on TIIF Benchmark mini (Part I: Visual \& Relational). L/S stands for Long/Short. Scores are multiplied by 100. Higher is better ($\uparrow$) for all metrics. The best and second-best accelerated results within each backbone group are in \textbf{bold} and \underline{underlined}, respectively.}
\label{tab:tiif_part1_visual}
\resizebox{\textwidth}{!}{
\begin{tabular}{l cc c c cccc ccc ccc}
\toprule
\multirow{3}{*}{Model} & \multicolumn{2}{c}{TIIF Overall} & \multicolumn{1}{c}{Spatial (2D)} & \multicolumn{1}{c}{Spatial (3D)} & \multicolumn{4}{c}{Action} & \multicolumn{3}{c}{Color} & \multicolumn{3}{c}{Texture} \\
\cmidrule(lr){2-3} \cmidrule(lr){4-4} \cmidrule(lr){5-5} \cmidrule(lr){6-9} \cmidrule(lr){10-12} \cmidrule(lr){13-15}
& \multirow{2}{*}{S $\uparrow$} & \multirow{2}{*}{L $\uparrow$} & \multirow{2}{*}{(L/S)} & \multirow{2}{*}{(L/S)} & 2D & 3D & Col & Tex & 2D & 3D & Tex & 2D & 3D & Col \\
\cmidrule(lr){6-6} \cmidrule(lr){7-7} \cmidrule(lr){8-8} \cmidrule(lr){9-9} \cmidrule(lr){10-10} \cmidrule(lr){11-11} \cmidrule(lr){12-12} \cmidrule(lr){13-13} \cmidrule(lr){14-14} \cmidrule(lr){15-15}
& & & & & (L/S) & (L/S) & (L/S) & (L/S) & (L/S) & (L/S) & (L/S) & (L/S) & (L/S) & (L/S) \\
\midrule
\rowcolor{gray!15} SD3.5 Medium & 71.66 & 71.90 & 83/83 & 79/70 & 91/86 & 82/82 & 73/65 & 68/75 & 95/90 & 70/50 & 92/88 & 90/85 & 83/77 & 92/100 \\
LADD & 22.32 & 20.19 & 0/41 & 29/41 & 8/13 & 17/12 & 26/34 & 20/27 & 40/40 & 35/20 & 16/8 & 28/19 & 33/50 & 36/56 \\
DMD2 & 54.10 & 49.31 & 75/\underline{87} & \underline{62}/\underline{79} & 72/80 & 76/\textbf{87} & 61/\textbf{73} & \underline{72}/\underline{68} & \textbf{95}/\underline{95} & 70/70 & 52/72 & 76/\textbf{100} & 72/66 & 80/80 \\
TwinFlow & 58.56 & 60.94 & 79/\textbf{91} & \textbf{79}/75 & \underline{75}/\textbf{88} & \textbf{84}/\underline{84} & 57/\textbf{73} & 68/\textbf{75} & \underline{90}/\underline{95} & \textbf{86}/\underline{75} & 72/84 & \textbf{85}/\underline{95} & \underline{77}/\underline{77} & 84/88 \\
SenseFlow & 62.73 & 59.57 & \underline{87}/79 & \textbf{79}/\underline{79} & \underline{75}/\underline{83} & 71/82 & 61/57 & \textbf{75}/\underline{68} & 80/80 & \underline{75}/70 & \underline{80}/\underline{88} & 76/76 & \textbf{83}/\underline{77} & 88/\textbf{100} \\
SWD & \underline{64.93} & \underline{64.52} & \underline{87}/70 & \textbf{79}/\underline{79} & \textbf{80}/\underline{83} & \underline{79}/82 & \textbf{80}/\underline{61} & \textbf{75}/\underline{68} & \underline{90}/\underline{95} & 70/\textbf{80} & \textbf{84}/80 & \underline{80}/85 & \underline{77}/\underline{77} & \textbf{100}/\underline{96} \\
\rowcolor{gray!8}\textbf{PVD (Ours)} & \textbf{65.61} & \textbf{66.38} & \textbf{91}/\textbf{91} & \textbf{79}/\textbf{83} & 66/75 & 76/76 & \underline{76}/\textbf{73} & 68/\textbf{75} & \textbf{95}/\textbf{100} & 70/55 & \textbf{84}/\textbf{96} & \textbf{85}/71 & \underline{77}/\textbf{88} & \underline{96}/\textbf{100} \\
\midrule
\rowcolor{gray!15} FLUX.1 dev & 67.60 & 71.44 & 95/83 & 75/83 & 80/77 & 84/79 & 65/73 & 65/79 & 95/90 & 65/70 & 88/96 & 90/85 & 77/72 & 100/100 \\
Hyper-FLUX & 46.14 & 43.82 & 58/87 & 58/\underline{79} & 58/61 & 71/58 & 50/\underline{73} & 72/68 & 40/60 & 65/50 & 52/72 & 52/52 & 66/72 & 52/52 \\
FLUX-Turbo-$\alpha$ & 29.64 & 28.43 & 33/45 & 41/66 & 19/36 & 41/38 & 26/57 & 6/34 & 60/60 & 30/45 & 36/24 & 19/28 & 61/61 & 40/36 \\
ArcFlow & 33.92 & 27.32 & 33/33 & 33/62 & 52/33 & 32/25 & 23/46 & 31/37 & 45/55 & 30/60 & 44/36 & 23/38 & 27/44 & 8/32 \\
Pi-Flow & 58.40 & 58.22 & 75/75 & \underline{75}/\underline{79} & 66/80 & 79/\textbf{82} & \textbf{84}/65 & 72/72 & \underline{80}/\underline{90} & \underline{70}/60 & \underline{68}/76 & \textbf{100}/\underline{76} & 72/72 & 76/88 \\
SenseFlow & 53.57 & 55.28 & 66/79 & 62/\underline{79} & \underline{77}/72 & 74/66 & 50/69 & \underline{82}/72 & \underline{80}/75 & \textbf{75}/\underline{65} & 60/\underline{80} & 66/57 & \underline{77}/72 & 72/60 \\
TDD & 43.32 & 38.62 & 37/75 & 66/66 & 50/63 & 69/64 & \underline{73}/65 & 48/51 & 45/75 & 65/\underline{65} & 36/56 & 28/19 & 72/72 & 52/68 \\
SWD & \underline{68.00} & \underline{68.74} & \underline{87}/\textbf{100} & \underline{75}/\textbf{87} & \textbf{80}/\underline{83} & \textbf{94}/71 & \underline{73}/65 & \textbf{86}/\textbf{86} & \textbf{95}/\textbf{95} & \textbf{75}/\textbf{75} & \textbf{96}/\textbf{96} & \underline{80}/\underline{76} & \underline{77}/\underline{83} & \textbf{92}/\underline{96} \\
\rowcolor{gray!8}\textbf{PVD (Ours)} & \textbf{68.19} & \textbf{69.87} & \textbf{95}/\underline{95} & \textbf{83}/\textbf{87} & 75/\textbf{86} & \underline{84}/\underline{76} & 69/\textbf{84} & 75/\underline{79} & \textbf{95}/\textbf{95} & \textbf{75}/\textbf{75} & \textbf{96}/\textbf{96} & 66/\textbf{85} & \textbf{88}/\textbf{88} & \underline{88}/\textbf{100} \\
\midrule
\rowcolor{gray!15} Qwen-Image & 88.11 & 86.45 & 92/96 & 83/83 & 94/92 & 85/85 & 73/77 & 86/79 & 95/95 & 85/90 & 96/96 & 95/95 & 78/83 & 100/100 \\
ArcFlow & 65.10 & 56.38 & \underline{38}/67 & 71/\underline{75} & 84/69 & 49/56 & 50/\underline{73} & 55/45 & 75/90 & 65/75 & 55/52 & 44/\underline{67} & 39/50 & 36/72 \\
Pi-Flow & 80.73 & 82.70 & \textbf{92}/92 & \textbf{88}/\textbf{79} & \textbf{94}/\underline{83} & 79/79 & 62/69 & 62/75 & 90/\underline{95} & \underline{75}/75 & \underline{88}/\underline{80} & \underline{81}/\underline{67} & 67/67 & 84/\underline{96} \\
TwinFlow & \underline{83.43} & \underline{84.52} & \textbf{92}/\textbf{100} & \underline{83}/\underline{75} & \underline{86}/\textbf{89} & \textbf{90}/\underline{85} & \underline{73}/\textbf{77} & \textbf{79}/\underline{76} & \underline{95}/\underline{95} & \textbf{85}/\underline{80} & \underline{88}/\textbf{96} & 71/\textbf{86} & \underline{78}/\underline{78} & \textbf{100}/\textbf{100} \\
\rowcolor{gray!8}\textbf{PVD (Ours)} & \textbf{86.04} & \textbf{86.87} & \textbf{92}/\underline{96} & \textbf{88}/\textbf{79} & \textbf{94}/\textbf{89} & \underline{85}/\textbf{87} & \textbf{85}/\textbf{77} & \underline{76}/\textbf{79} & \textbf{100}/\textbf{100} & \textbf{85}/\textbf{85} & \textbf{96}/\textbf{96} & \textbf{90}/\underline{67} & \textbf{83}/\textbf{89} & \underline{96}/\textbf{100} \\
\bottomrule
\end{tabular}
}
\end{table*}

\begin{table*}[htbp]
\centering
\caption{Quantitative comparison on TIIF Benchmark mini (Part II: Logic \& Reasoning). L/S stands for Long/Short. Scores are multiplied by 100. Higher is better ($\uparrow$) for all metrics. The best and second-best accelerated results within each backbone group are in \textbf{bold} and \underline{underlined}, respectively.}
\label{tab:tiif_part2_reasoning}
\resizebox{\textwidth}{!}{
\begin{tabular}{l ccccc ccccc ccccc}
\toprule
\multirow{3}{*}{Model} & \multicolumn{5}{c}{Comparison} & \multicolumn{5}{c}{Differentiation} & \multicolumn{5}{c}{Negation} \\
\cmidrule(lr){2-6} \cmidrule(lr){7-11} \cmidrule(lr){12-16}
& Base & 2D & 3D & Col & Tex & Base & 2D & 3D & Col & Tex & Base & 2D & 3D & Col & Tex \\
\cmidrule(lr){2-2} \cmidrule(lr){3-3} \cmidrule(lr){4-4} \cmidrule(lr){5-5} \cmidrule(lr){6-6} \cmidrule(lr){7-7} \cmidrule(lr){8-8} \cmidrule(lr){9-9} \cmidrule(lr){10-10} \cmidrule(lr){11-11} \cmidrule(lr){12-12} \cmidrule(lr){13-13} \cmidrule(lr){14-14} \cmidrule(lr){15-15} \cmidrule(lr){16-16}
& (L/S) & (L/S) & (L/S) & (L/S) & (L/S) & (L/S) & (L/S) & (L/S) & (L/S) & (L/S) & (L/S) & (L/S) & (L/S) & (L/S) & (L/S) \\
\midrule
\rowcolor{gray!15} SD3.5 Medium & 79/75 & 68/75 & 62/41 & 66/66 & 30/20 & 84/76 & 56/62 & 82/70 & 80/61 & 75/85 & 70/62 & 68/81 & 64/64 & 61/72 & 72/66 \\
LADD & 16/29 & 0/0 & 6/6 & 4/38 & 15/30 & 52/48 & 0/0 & 0/5 & 47/52 & 10/25 & 50/50 & 56/50 & 35/47 & 72/44 & 44/33 \\
DMD2 & 66/\underline{70} & \underline{75}/50 & 25/62 & 61/\textbf{90} & 33/20 & \underline{88}/80 & 43/50 & \textbf{84}/64 & 71/61 & 25/\underline{70} & \textbf{66}/54 & 56/\underline{75} & 64/\textbf{76} & 72/\textbf{77} & \textbf{61}/\underline{55} \\
TwinFlow & 79/\textbf{83} & 50/\textbf{75} & 31/\underline{68} & \textbf{90}/57 & \textbf{70}/\textbf{40} & \textbf{92}/\underline{84} & \textbf{87}/\textbf{93} & 70/70 & 71/\textbf{90} & 55/\textbf{75} & \underline{62}/58 & \textbf{75}/\underline{75} & 58/58 & \underline{77}/61 & \underline{55}/\underline{55} \\
SenseFlow & 75/\underline{70} & \underline{75}/\underline{56} & 50/\textbf{81} & \textbf{90}/57 & 20/20 & 80/\textbf{88} & 50/\underline{56} & 70/\textbf{82} & 80/47 & \underline{70}/50 & \underline{62}/\underline{62} & \underline{68}/68 & \textbf{76}/\underline{70} & 55/\textbf{77} & \underline{55}/\underline{55} \\
SWD & \underline{87}/\textbf{83} & \textbf{81}/\textbf{75} & \textbf{87}/\textbf{81} & 76/47 & \underline{50}/\textbf{40} & 76/\textbf{88} & 50/\underline{56} & 64/\textbf{82} & \textbf{90}/\underline{80} & \textbf{75}/60 & \underline{62}/\textbf{66} & 56/\underline{75} & \underline{70}/\underline{70} & \textbf{83}/61 & \textbf{61}/\textbf{66} \\
\rowcolor{gray!8}\textbf{PVD (Ours)} & \textbf{91}/\textbf{83} & 68/43 & \underline{56}/\underline{68} & \underline{85}/\underline{76} & \underline{50}/\underline{35} & 84/\underline{84} & \underline{68}/\underline{56} & \underline{76}/\underline{76} & \underline{85}/\underline{80} & \underline{70}/\textbf{75} & \textbf{66}/\textbf{66} & \textbf{75}/\textbf{81} & 52/64 & 72/\underline{66} & \underline{55}/\underline{55} \\
\midrule
\rowcolor{gray!15} FLUX.1 dev & 75/79 & 62/68 & 43/31 & 61/66 & 50/15 & 76/76 & 50/68 & 64/82 & 71/66 & 70/55 & 62/66 & 75/75 & 58/64 & 66/66 & 77/61 \\
Hyper-FLUX & 50/\underline{70} & 31/12 & \underline{68}/43 & 57/47 & 30/\underline{25} & 76/80 & \textbf{75}/\textbf{68} & 41/29 & 66/47 & 15/30 & 66/\textbf{75} & 68/\underline{75} & \textbf{76}/\textbf{82} & 50/66 & \textbf{61}/50 \\
FLUX-Turbo-$\alpha$ & 12/12 & 31/37 & 18/0 & 38/28 & 20/20 & 36/64 & 50/50 & 47/41 & 42/38 & 15/10 & 58/45 & 56/56 & 47/58 & \textbf{88}/\underline{77} & 50/16 \\
ArcFlow & 29/29 & 37/37 & 6/31 & 52/\underline{66} & 30/20 & 56/32 & 50/\textbf{68} & 23/52 & \underline{71}/57 & 25/25 & 54/\textbf{75} & 43/56 & 58/64 & 50/38 & 44/33 \\
Pi-Flow & 54/\textbf{87} & 43/\underline{56} & \textbf{81}/43 & 66/\underline{66} & 30/\textbf{30} & 80/\textbf{88} & \textbf{75}/56 & 70/70 & \textbf{80}/\textbf{85} & \underline{50}/30 & 66/62 & \textbf{81}/\textbf{81} & \underline{70}/64 & \underline{83}/72 & \underline{55}/\underline{55} \\
SenseFlow & \underline{66}/50 & 43/50 & \underline{68}/50 & 47/42 & 55/\underline{25} & 76/80 & 56/\underline{62} & 64/35 & 66/52 & 20/45 & 58/\textbf{75} & \textbf{81}/\underline{75} & \textbf{76}/70 & 61/72 & \underline{55}/38 \\
TDD & 20/45 & 18/25 & 43/43 & 42/33 & 40/\underline{25} & 64/\underline{84} & \underline{62}/\textbf{68} & 29/52 & \underline{71}/47 & 25/15 & 66/66 & 68/\underline{75} & 52/\underline{76} & 72/\underline{77} & 38/38 \\
SWD & \textbf{87}/\textbf{87} & \textbf{81}/\textbf{68} & 62/\textbf{75} & \textbf{90}/\textbf{71} & \textbf{80}/20 & \textbf{88}/\textbf{88} & \textbf{75}/\textbf{68} & \textbf{88}/\underline{82} & \textbf{80}/\underline{76} & \textbf{65}/\textbf{90} & \textbf{75}/\underline{70} & 62/68 & 64/64 & 66/72 & 50/\textbf{61} \\
\rowcolor{gray!8}\textbf{PVD (Ours)} & \textbf{87}/\textbf{87} & \underline{62}/\textbf{68} & \underline{68}/\underline{62} & \underline{85}/\textbf{71} & \underline{65}/20 & \underline{84}/\textbf{88} & \underline{62}/56 & \underline{82}/\textbf{94} & \underline{71}/\underline{76} & \textbf{65}/\underline{65} & \underline{70}/62 & \underline{75}/\underline{75} & 64/70 & \underline{83}/\textbf{83} & \textbf{61}/50 \\
\midrule
\rowcolor{gray!15} Qwen-Image & 92/88 & 75/69 & 88/75 & 86/90 & 75/60 & 96/92 & 88/75 & 76/76 & 95/90 & 90/90 & 63/71 & 75/75 & 71/82 & 72/72 & 56/72 \\
ArcFlow & 50/63 & \underline{50}/\underline{63} & 50/69 & 62/71 & 33/30 & \textbf{95}/\textbf{88} & 75/56 & 29/47 & 81/81 & 33/25 & 57/46 & \underline{75}/63 & \underline{83}/71 & \underline{83}/67 & \underline{56}/44 \\
Pi-Flow & \underline{79}/\underline{88} & 38/56 & 63/\underline{75} & \textbf{95}/\textbf{86} & \textbf{90}/\underline{50} & \underline{92}/\underline{84} & \textbf{88}/\textbf{94} & \textbf{71}/\underline{82} & \textbf{95}/\underline{90} & \underline{90}/\underline{80} & \textbf{75}/\textbf{67} & \textbf{81}/69 & \textbf{88}/\textbf{94} & \textbf{89}/\textbf{78} & \underline{56}/\underline{50} \\
TwinFlow & \textbf{92}/83 & \textbf{75}/\textbf{69} & \underline{81}/\textbf{81} & \underline{90}/\underline{76} & \underline{65}/20 & 80/\textbf{88} & 69/\underline{88} & \underline{65}/76 & \underline{90}/81 & \textbf{95}/\underline{80} & \underline{67}/54 & \underline{75}/\underline{75} & 65/76 & 67/\underline{72} & \textbf{61}/\textbf{61} \\
\rowcolor{gray!8}\textbf{PVD (Ours)} & \textbf{92}/\textbf{92} & \textbf{75}/\textbf{69} & \textbf{88}/\underline{75} & 81/71 & \underline{65}/\textbf{75} & 84/80 & \underline{81}/\underline{88} & \textbf{71}/\textbf{94} & 81/\textbf{95} & \underline{90}/\textbf{95} & \textbf{75}/\underline{63} & \underline{75}/\textbf{88} & 76/\underline{88} & 78/\textbf{78} & 50/\textbf{61} \\
\bottomrule
\end{tabular}
}
\end{table*}

\begin{table*}[htbp]
\centering
\caption{Quantitative comparison on TIIF Benchmark mini (Part III: Concepts \& Domains). L/S stands for Long/Short. Scores are multiplied by 100. Higher is better ($\uparrow$) for all metrics.The best and second-best accelerated results within each backbone group are in \textbf{bold} and \underline{underlined}, respectively.}
\label{tab:tiif_part3_concepts}
\resizebox{\textwidth}{!}{
\begin{tabular}{l ccccc cccc ccc}
\toprule
\multirow{3}{*}{Model} & \multicolumn{5}{c}{Numeracy} & \multicolumn{4}{c}{Shape} & \multirow{2}{*}{Real World} & \multirow{2}{*}{Style} & \multirow{2}{*}{Text} \\
\cmidrule(lr){2-6} \cmidrule(lr){7-10}
& Base & 2D & 3D & Col & Tex & 2D & 3D & Col & Tex & & & \\
\cmidrule(lr){2-2} \cmidrule(lr){3-3} \cmidrule(lr){4-4} \cmidrule(lr){5-5} \cmidrule(lr){6-6} \cmidrule(lr){7-7} \cmidrule(lr){8-8} \cmidrule(lr){9-9} \cmidrule(lr){10-10} \cmidrule(lr){11-11} \cmidrule(lr){12-12} \cmidrule(lr){13-13}
& (L/S) & (L/S) & (L/S) & (L/S) & (L/S) & (L/S) & (L/S) & (L/S) & (L/S) & (L/S) & (L/S) & (L/S) \\
\midrule
\rowcolor{gray!15} SD3.5 Medium & 72/70 & 77/48 & 78/87 & 65/77 & 70/59 & 45/40 & 65/70 & 78/86 & 72/68 & 75/73 & 70/66 & 48/67 \\
LADD & 22/12 & 7/22 & 25/40 & 22/30 & 31/9 & 25/15 & 20/20 & 26/36 & 24/16 & 27/23 & 3/3 & 0/0 \\
DMD2 & 50/64 & 55/\underline{70} & 46/71 & 47/60 & 38/54 & 45/30 & 55/60 & 66/\underline{88} & 58/66 & 46/42 & 16/13 & 0/5 \\
TwinFlow & 68/\underline{70} & 62/55 & 65/71 & \underline{60}/57 & 47/56 & \textbf{60}/\textbf{55} & \textbf{80}/\textbf{80} & 72/\textbf{90} & 64/\textbf{78} & 57/55 & 33/26 & 3/6 \\
SenseFlow & \textbf{81}/66 & \textbf{70}/62 & 65/\underline{84} & 50/65 & 52/56 & \underline{50}/40 & 55/\underline{70} & \textbf{80}/74 & 62/\underline{68} & 59/\underline{62} & 43/\underline{56} & 7/18 \\
SWD & \underline{72}/\textbf{77} & \underline{66}/\textbf{77} & \underline{78}/\textbf{90} & \underline{60}/\underline{70} & \underline{63}/\textbf{77} & 45/\underline{45} & \underline{60}/65 & \underline{76}/76 & \underline{70}/66 & \textbf{72}/\textbf{70} & \underline{46}/46 & \underline{10}/\underline{19} \\
\rowcolor{gray!8}\textbf{PVD (Ours)} & \textbf{81}/64 & 59/40 & \textbf{81}/78 & \textbf{75}/\textbf{72} & \textbf{70}/\underline{68} & 45/40 & \underline{60}/55 & 74/80 & \textbf{72}/64 & \underline{60}/58 & \textbf{66}/\textbf{66} & \textbf{19}/\textbf{23} \\
\midrule
\rowcolor{gray!15} FLUX.1 dev & 70/75 & 59/66 & 81/78 & 77/87 & 68/75 & 50/35 & 75/70 & 70/78 & 66/76 & 80/79 & 63/50 & 59/35 \\
Hyper-FLUX & 41/29 & 70/55 & 43/65 & 37/42 & 29/29 & \underline{55}/\underline{40} & 40/\underline{70} & 64/72 & 48/46 & 50/48 & 6/6 & 5/7 \\
FLUX-Turbo-$\alpha$ & 10/18 & 22/18 & 34/25 & 22/27 & 13/13 & 30/25 & 20/25 & 44/66 & 44/22 & 40/37 & 0/0 & 4/3 \\
ArcFlow & 16/39 & 22/44 & 50/37 & 17/35 & 38/45 & 25/25 & 20/60 & 28/48 & 16/32 & 30/32 & 0/0 & 8/\underline{17} \\
Pi-Flow & 60/62 & \underline{77}/48 & \underline{71}/71 & 67/\underline{75} & 68/52 & 50/\textbf{55} & \underline{70}/\textbf{75} & \textbf{70}/\underline{82} & 66/58 & 64/64 & 23/20 & 17/\textbf{19} \\
SenseFlow & \textbf{75}/62 & 70/\underline{62} & \underline{71}/78 & 65/55 & 52/54 & \textbf{60}/35 & 55/\textbf{75} & 58/78 & 56/62 & 62/52 & 30/26 & 15/\underline{17} \\
TDD & 31/39 & \underline{77}/25 & 53/43 & 30/42 & 18/27 & 35/\underline{40} & 35/60 & 46/78 & 34/48 & 43/44 & 10/3 & 6/7 \\
SWD & \underline{72}/\textbf{93} & \textbf{81}/\textbf{66} & \underline{71}/\textbf{93} & \underline{80}/\textbf{80} & \textbf{75}/\textbf{83} & 50/\underline{40} & \underline{70}/50 & \underline{68}/\textbf{88} & \underline{68}/\underline{72} & \underline{77}/\underline{73} & \underline{43}/\underline{53} & \underline{23}/\underline{17} \\
\rowcolor{gray!8}\textbf{PVD (Ours)} & 70/\underline{77} & 70/55 & \textbf{81}/\underline{90} & \textbf{90}/\textbf{80} & \underline{70}/\underline{77} & \textbf{60}/\textbf{55} & \textbf{75}/\textbf{75} & 66/68 & \textbf{70}/\textbf{78} & \textbf{78}/\textbf{75} & \textbf{60}/\textbf{56} & \textbf{26}/13 \\
\midrule
\rowcolor{gray!15} Qwen-Image & 79/90 & 67/81 & 94/91 & 95/95 & 86/84 & 70/60 & 75/75 & 82/84 & 76/82 & 92/91 & 80/87 & 90/92 \\
ArcFlow & 38/50 & 48/63 & 69/69 & 35/75 & 43/43 & 45/\textbf{70} & 50/65 & 62/\underline{80} & 42/64 & 54/56 & 7/0 & 21/45 \\
Pi-Flow & \underline{77}/73 & \underline{70}/\textbf{81} & 69/\underline{81} & 75/\textbf{90} & 61/73 & 50/45 & \underline{70}/\underline{75} & 72/78 & \underline{72}/70 & 63/66 & 27/13 & 22/\underline{48} \\
TwinFlow & \underline{77}/\underline{79} & \underline{70}/67 & \underline{78}/\textbf{91} & \textbf{93}/\underline{85} & \textbf{80}/\textbf{82} & \underline{60}/50 & \textbf{80}/65 & \textbf{84}/76 & \textbf{76}/\underline{76} & \underline{87}/\textbf{88} & \underline{67}/\underline{87} & \textbf{74}/\textbf{76} \\
\rowcolor{gray!8}\textbf{PVD (Ours)} & \textbf{88}/\textbf{81} & \textbf{85}/\underline{70} & \textbf{88}/\underline{81} & \underline{85}/83 & \underline{73}/\underline{80} & \textbf{70}/\underline{65} & 65/\textbf{90} & \underline{78}/\textbf{84} & \textbf{76}/\textbf{86} & \textbf{88}/\underline{84} & \textbf{73}/\textbf{90} & \underline{43}/47 \\
\bottomrule
\end{tabular}
}
\end{table*}

\begin{table*}[t]
    \centering
    \caption{{\textbf{Quantitative comparison on Qwen-Image-Bench.} Higher is better ($\uparrow$) for all metrics. Best results within each distillation group are in \textbf{bold}, and the second-best results are \underline{underlined}.}}
    \label{tab:qwenimage_bench}
    \arrayrulecolor{black}
    \resizebox{\textwidth}{!}{
    \begin{tabular}{l ccc ccc}
        \toprule
        \multirow{2.5}{*}{\textbf{Model}} &
        \multirow{2.5}{*}{\textbf{Quality}} &
        \multirow{2.5}{*}{\textbf{Aesthetics}} &
        \multirow{2.5}{*}{\textbf{Alignment}} &
        \textbf{Real-world} & \textbf{Creative} &
        \multirow{2.5}{*}{\textbf{Overall}} \\
        & & & & \textbf{Fidelity} & \textbf{Generation} & \\
        \midrule
        \rowcolor{gray!15} \textbf{SD3.5-Medium} & 47.77 & 45.88 & 41.83 & 42.08 & 33.05 & 43.19 \\
        LADD & 1.22 & 7.19 & 4.41 & 30.44 & 0.54 & 6.55 \\
        DMD2 & 22.68 & 35.84 & 30.18 & 36.90 & 19.89 & 29.27 \\
        TwinFlow & 29.21 & \underline{37.13} & 28.83 & 36.03 & 18.54 & 30.55 \\
        SenseFlow & \underline{30.26} & 33.04 & 27.60 & 36.35 & 17.83 & 29.43 \\
        SWD & 29.83 & 36.63 & \underline{30.78} & \underline{37.35} & \underline{20.85} & \underline{31.44} \\
        \rowcolor{gray!8}\textbf{PVD (Ours)} & \textbf{42.73} & \textbf{37.33} & \textbf{32.78} & \textbf{37.48} & \textbf{22.41} & \textbf{35.51} \\
        \midrule
        \rowcolor{gray!15} \textbf{FLUX.1-dev} & 47.18 & 48.30 & 42.06 & 40.01 & 34.23 & 43.52 \\
        Hyper-FLUX & 3.43 & 22.28 & 13.67 & 31.67 & 3.02 & 13.85 \\
        FLUX-Turbo-$\alpha$ & 1.85 & 11.97 & 6.23 & 30.70 & 0.83 & 8.47 \\
        TDD & 4.07 & 21.45 & 13.13 & 31.49 & 3.46 & 13.71 \\
        Pi-Flow & 5.97 & 29.69 & 24.79 & 34.69 & 9.87 & 20.38 \\
        ArcFlow & 0.36 & 0.70 & 1.36 & 30.57 & 0.23 & 3.87 \\
        SenseFlow & 8.27 & 34.81 & 22.13 & 34.02 & 11.08 & 21.69 \\
        SWD & \underline{38.05} & \underline{44.96} & \underline{40.47} & \underline{39.93} & \underline{31.57} & \underline{39.71} \\
        \rowcolor{gray!8}\textbf{PVD (Ours)} & \textbf{47.34} & \textbf{45.00} & \textbf{42.61} & \textbf{42.04} & \textbf{33.54} & \textbf{43.14} \\
        \midrule
        \rowcolor{gray!15} \textbf{Qwen-Image} & 48.69 & 51.34 & 49.92 & 44.31 & 46.35 & 48.78 \\
        ArcFlow & 0.38 & 2.26 & 3.39 & 30.17 & 1.14 & 4.88 \\
        Pi-Flow & 5.23 & 24.18 & 22.29 & 33.91 & 10.31 & 18.08 \\
        TwinFlow & \underline{48.19} & \underline{47.43} & \underline{47.54} & \textbf{44.12} & \underline{38.85} & \underline{46.16} \\
        \rowcolor{gray!8}\textbf{PVD (Ours)} & \textbf{48.87} & \textbf{49.23} & \textbf{48.06} & \underline{43.64} & \textbf{39.28} & \textbf{46.85} \\
        \bottomrule
    \end{tabular}
    }
\end{table*}

\clearpage
\section{Detailed Cost Measurements}
\label{app:hardware}
{\cref{tab:hardware_cost} reports the detailed parameter counts, cumulative computation, latency, throughput, and peak VRAM. The metrics are measured at $1024\times1024$ image resolution on 1 PPU using BF16. 
We report backbone-only peak VRAM, excluding text encoders and the VAE. Active parameters refer to the number of parameters of the active model during generation.
Across all three backbones, PVD keeps cumulative computation close to one full-backbone evaluation while reducing active parameters by 49.10--50.89\% and peak VRAM by 45.76--48.36\% relative to the corresponding teachers.}

\begin{table}[H]
    \centering
    \caption{\textbf{Detailed hardware-cost comparison.} The final two columns report changes relative to the corresponding teacher; $\downarrow$ and $\uparrow$ denote reductions and increases, respectively.}
    \label{tab:hardware_cost}
    \label{tab:qwen_hardware_cost}
    \small
    \renewcommand{\arraystretch}{1.02}
    \resizebox{0.99\textwidth}{!}{%
    \begin{tabular}{@{}lcccccccc@{}}
        \toprule
        \multirow{2}{*}{\textbf{Model}}
        & \multicolumn{2}{c}{\textbf{Compute}}
        & \multicolumn{2}{c}{\textbf{Model runtime}}
        & \multicolumn{2}{c}{\textbf{Model footprint}}
        & \multicolumn{2}{c}{\textbf{Change vs. teacher}} \\
        \cmidrule(lr){2-3}\cmidrule(lr){4-5}\cmidrule(lr){6-7}\cmidrule(lr){8-9}
        & \makecell{\textbf{$N_{\mathrm{flops}}$}}
        & \textbf{TFLOPs}
        & \makecell{\textbf{Latency}\\\textbf{(s)}}
        & \makecell{\textbf{Throughput}\\\textbf{(images / s)}}
        & \makecell{\textbf{Active / Total}\\\textbf{Params. (B)}}
        & \makecell{\textbf{Peak VRAM}\\\textbf{(GiB)}}
        & \makecell{\textbf{Active}\\\textbf{Params.}}
        & \makecell{\textbf{Peak}\\\textbf{VRAM}} \\
        \midrule
        \rowcolor{gray!15}\textbf{SD3.5 Medium} & 100.00 & 701.93 & 15.49 & 0.07 & 2.24 / 2.24 & 4.99 & Ref. & Ref. \\
        LADD & 1.00 & 7.02 & 0.16 & 6.69 & 2.27 / 2.27 & 4.86 & $\uparrow$1.34\% & $\downarrow$2.61\% \\
        DMD2 & 1.00 & 7.02 & 0.15 & 6.89 & 2.24 / 2.24 & 4.81 & 0.00\% & $\downarrow$3.61\% \\
        TwinFlow & 1.00 & 7.02 & 0.16 & 6.67 & 2.25 / 2.25 & 4.83 & $\uparrow$0.45\% & $\downarrow$3.21\% \\
        SenseFlow & 1.00 & 7.02 & 0.16 & 6.67 & 2.24 / 2.24 & 4.81 & 0.00\% & $\downarrow$3.61\% \\
        SWD & 1.11 & 7.82 & 0.23 & 4.72 & 2.32 / 2.32 & 5.28 & $\uparrow$3.57\% & $\uparrow$5.81\% \\
        \rowcolor{gray!8}\textbf{PVD (Ours)} & \textbf{0.99} & \textbf{6.93} & \textbf{0.16} & \textbf{6.81} & \textbf{1.10 / 2.22} & \textbf{2.67} & \textbf{$\downarrow$50.89\%} & \textbf{$\downarrow$46.49\%} \\
        \midrule
        \rowcolor{gray!15}\textbf{FLUX.1 dev} & 50.00 & 2976.18 & 38.09 & 0.03 & 11.90 / 11.90 & 22.64 & Ref. & Ref. \\
        Hyper-FLUX & 1.00 & 59.52 & 0.76 & 1.35 & 12.25 / 12.25 & 23.29 & $\uparrow$2.94\% & $\uparrow$2.87\% \\
        FLUX-Turbo-$\alpha$ & 1.00 & 59.52 & 0.76 & 1.35 & 12.25 / 12.25 & 23.29 & $\uparrow$2.94\% & $\uparrow$2.87\% \\
        TDD & 0.94 & 56.21 & 0.76 & 1.35 & 12.06 / 12.06 & 22.91 & $\uparrow$1.34\% & $\uparrow$1.19\% \\
        Pi-Flow & 1.07 & 63.95 & 0.84 & 1.24 & 12.53 / 12.53 & 23.87 & $\uparrow$5.29\% & $\uparrow$5.43\% \\
        ArcFlow & 1.07 & 63.97 & 0.84 & 1.24 & 12.53 / 12.53 & 23.89 & $\uparrow$5.29\% & $\uparrow$5.52\% \\
        SenseFlow & 1.00 & 59.52 & 0.76 & 1.35 & 11.89 / 11.89 & 22.62 & $\downarrow$0.08\% & $\downarrow$0.09\% \\
        SWD & 1.05 & 62.37 & 0.98 & 1.07 & 12.29 / 12.29 & 24.45 & $\uparrow$3.28\% & $\uparrow$7.99\% \\
        \rowcolor{gray!8}\textbf{PVD (Ours)} & \textbf{1.01} & \textbf{60.27} & \textbf{0.86} & \textbf{1.22} & \textbf{5.98 / 6.14} & \textbf{12.28} & \textbf{$\downarrow$49.75\%} & \textbf{$\downarrow$45.76\%} \\
        \midrule
        \rowcolor{gray!15}\textbf{Qwen-Image} & 100.00 & 5585.14 & 71.77 & 0.01 & 20.43 / 20.43 & 38.42 & Ref. & Ref. \\
        ArcFlow & 1.07 & 59.77 & 0.78 & 1.36 & 21.37 / 21.37 & 40.36 & $\uparrow$4.60\% & $\uparrow$5.05\% \\
        Pi-Flow & 1.07 & 59.75 & 0.78 & 1.36 & 21.37 / 21.37 & 40.36 & $\uparrow$4.60\% & $\uparrow$5.05\% \\
        TwinFlow & 1.00 & 55.85 & 1.30 & 0.82 & 20.44 / 20.44 & 38.48 & $\uparrow$0.05\% & $\uparrow$0.16\% \\
        \rowcolor{gray!8}\textbf{PVD (Ours)} & \textbf{1.02} & \textbf{56.81} & \textbf{0.81} & \textbf{1.35} & \textbf{10.40 / 10.55} & \textbf{19.84} & \textbf{$\downarrow$49.10\%} & \textbf{$\downarrow$48.36\%} \\
        \bottomrule
    \end{tabular}%
    }
\end{table}

\clearpage
\begingroup
\raggedbottom
\section{More T2I Visualizations}
\label{app:t2i_visuals}

\noindent\Cref{fig:t2i_appendix} provides additional qualitative comparisons across SD3.5-Medium, FLUX.1-dev, and Qwen-Image, complementing the results in \cref{fig:t2i_compare} of the main paper.
\Cref{fig:t2i_appendix_further_train} further illustrates the improvements in lighting, composition, and fine-grained details achieved by continued training on Unsplash data.
Intermediate and final outputs of PVD distilled from Qwen-Image are shown in \cref{fig:qwen_intermediate_states}.

\par
\begin{figure}[H]
    \centering
    \includegraphics[width=1\columnwidth]{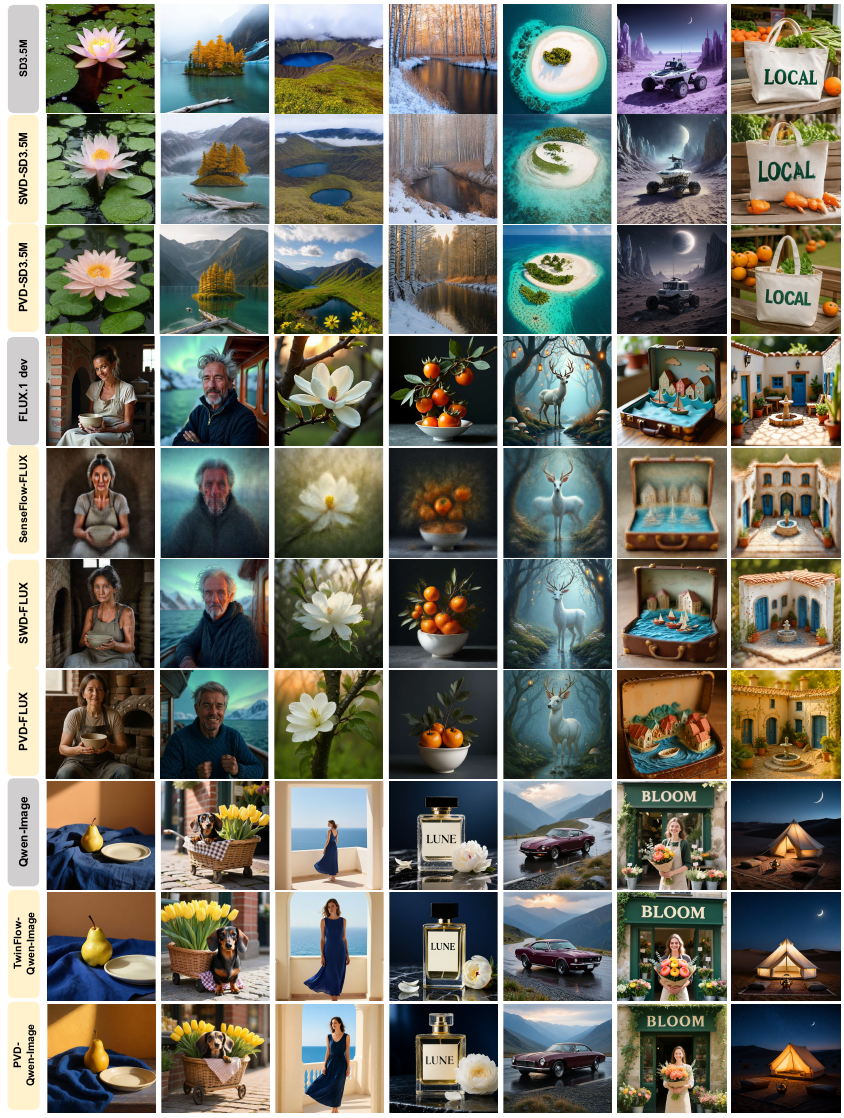}
    \caption{\textbf{Qualitative comparisons on SD3.5-Medium, FLUX.1-dev, and Qwen-Image.} Rows are grouped by backbone and labeled with the corresponding teacher or distillation method.}
    \label{fig:t2i_appendix}
\end{figure}

\par
\begin{figure}[H]
    \centering
    \includegraphics[width=0.98\columnwidth]{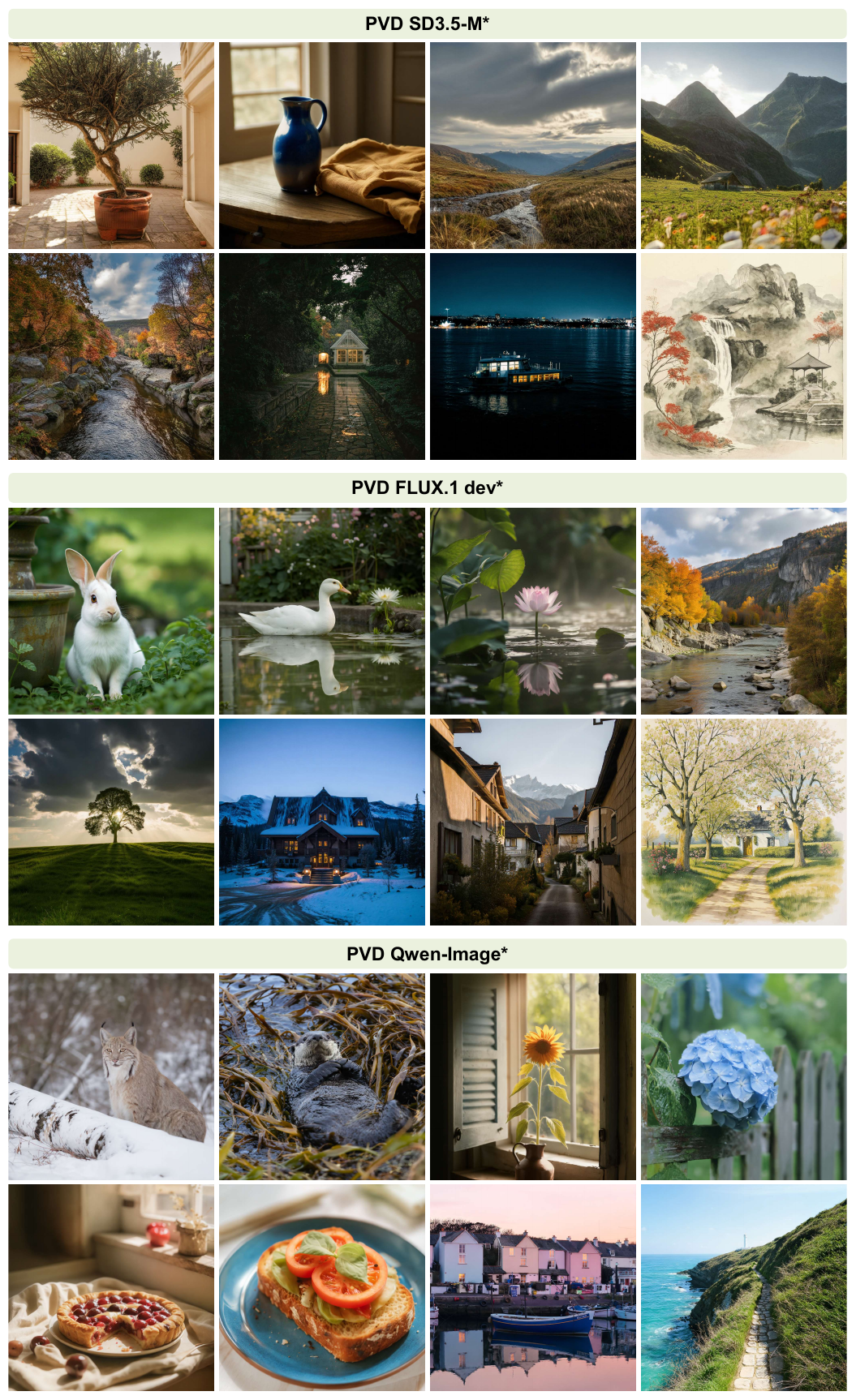}
    \caption{{\textbf{Additional visualizations after further training on Unsplash data.} The models further trained on Unsplash images are marked with $^{*}$.}}
\label{fig:t2i_appendix_further_train}
\end{figure}

\par
\endgroup

\clearpage
\begingroup
\paragraph{Intermediate States of PVD Distilled from Qwen-Image.}
\Cref{fig:qwen_intermediate_states} shows the intermediate-to-final pairs generated by PVD distilled from Qwen-Image. The first phase establishes a coarse structural scaffold: the overall composition, approximate subject silhouette, semantic layout, and color of regions are already discernible despite substantial residual noise.
At this coarse stage, the emerging content is consistent with the recovery of low-frequency scene structure, providing a global arrangement for subsequent refinement.
The second phase builds on this intermediate state, making fine-scale and high-frequency content more distinct.

\begin{figure}[H]
\centering
\captionsetup{font=small,labelfont={bf}, skip=6pt}
\footnotesize
\setlength{\tabcolsep}{3pt}
\renewcommand{\arraystretch}{1.05}
\begin{tabular}{@{}cccc@{}}
\multicolumn{2}{c}{\textbf{(a) Flying bird}} & \multicolumn{2}{c}{\textbf{(b) Origami crane}} \\
\textbf{Phase 1: intermediate} & \textbf{Phase 2: final} & \textbf{Phase 1: intermediate} & \textbf{Phase 2: final} \\[2pt]
\includegraphics[width=0.237\linewidth]{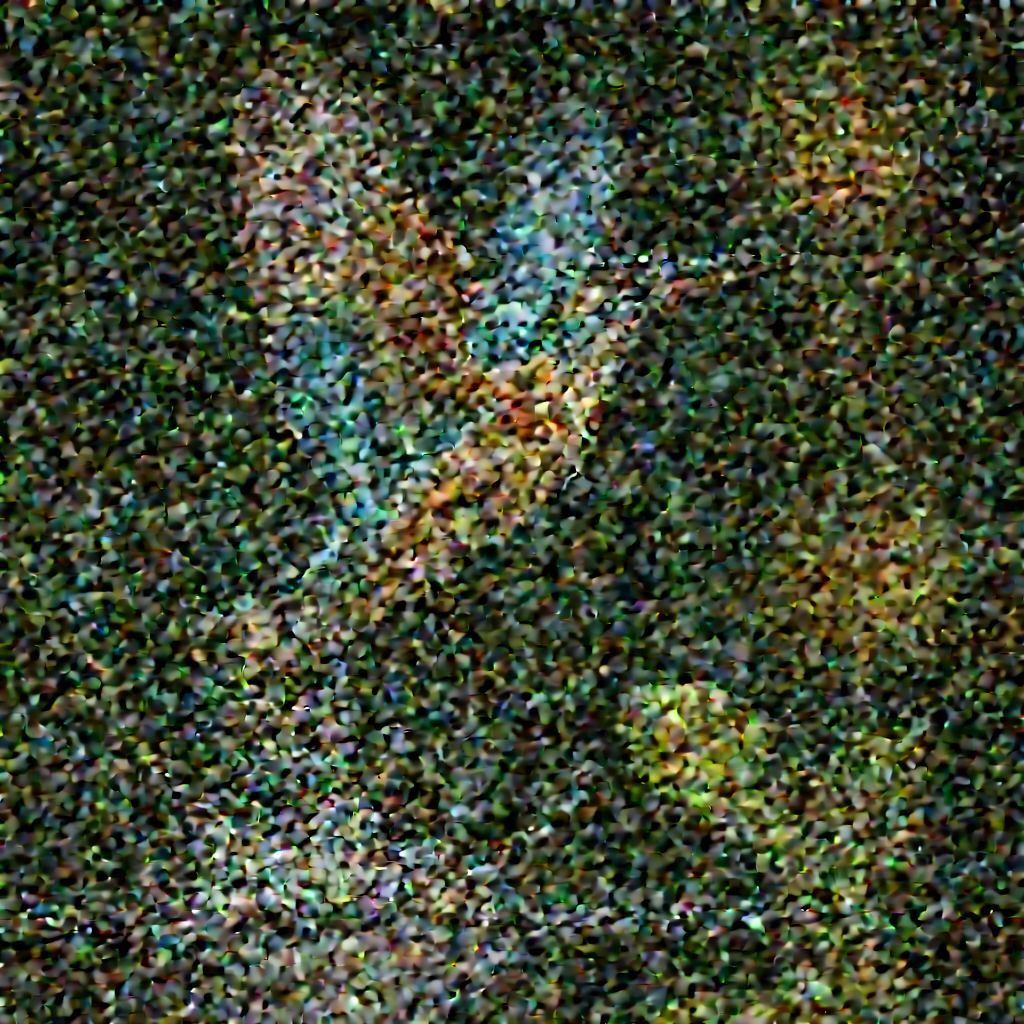} &
\includegraphics[width=0.237\linewidth]{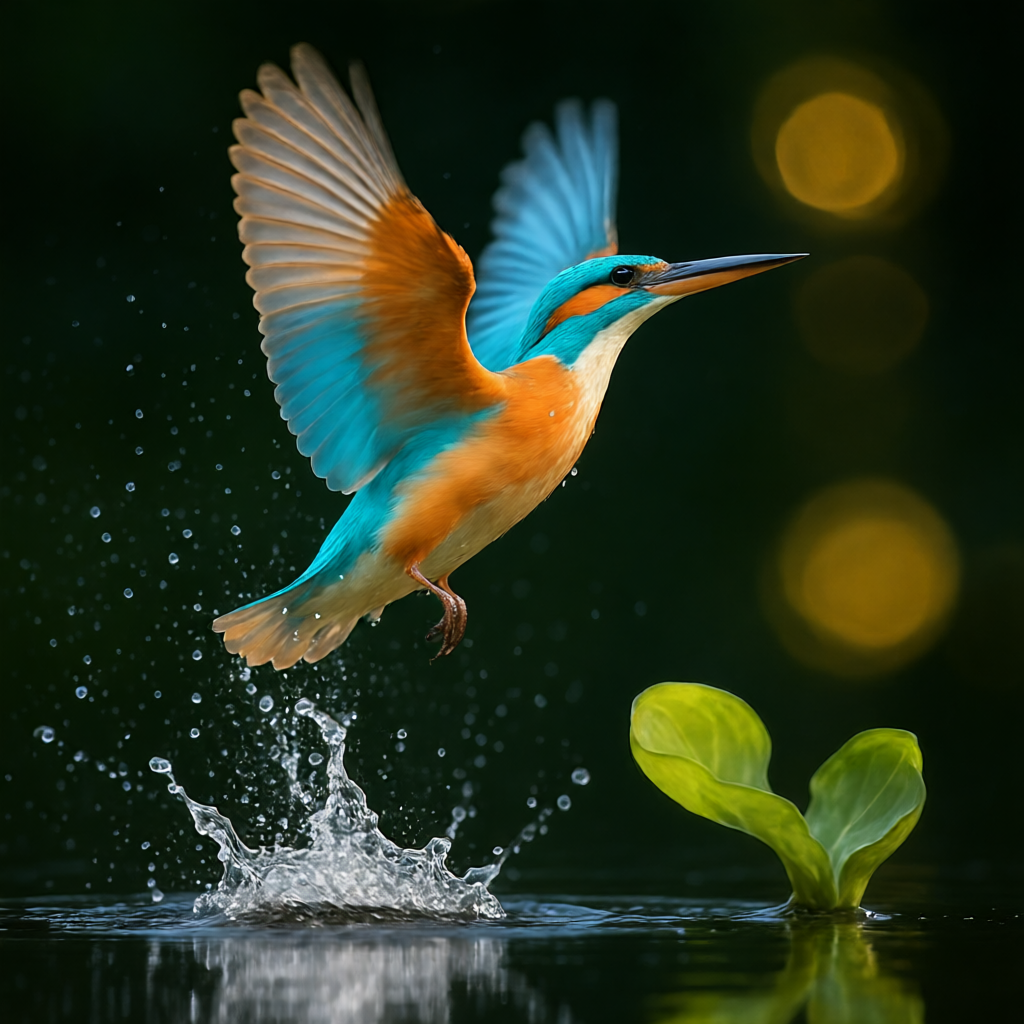} &
\includegraphics[width=0.237\linewidth]{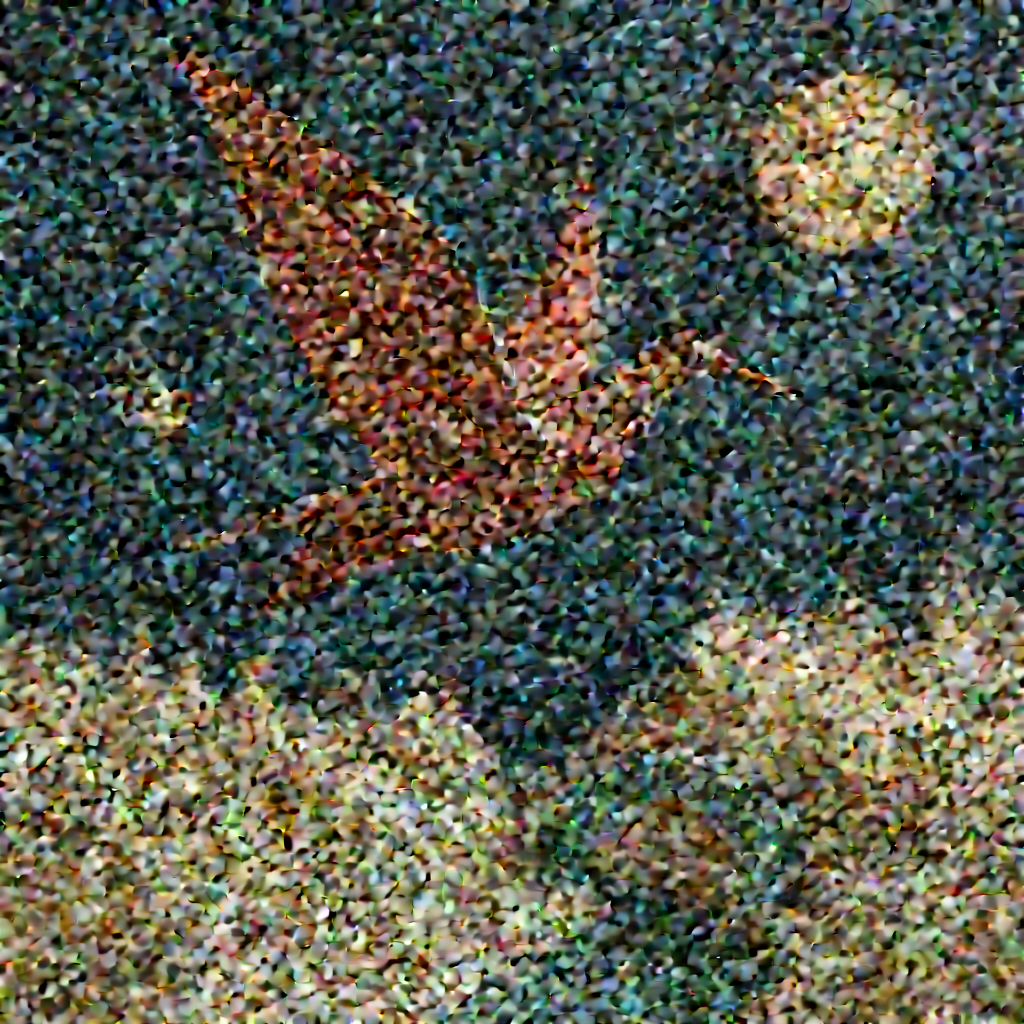} &
\includegraphics[width=0.237\linewidth]{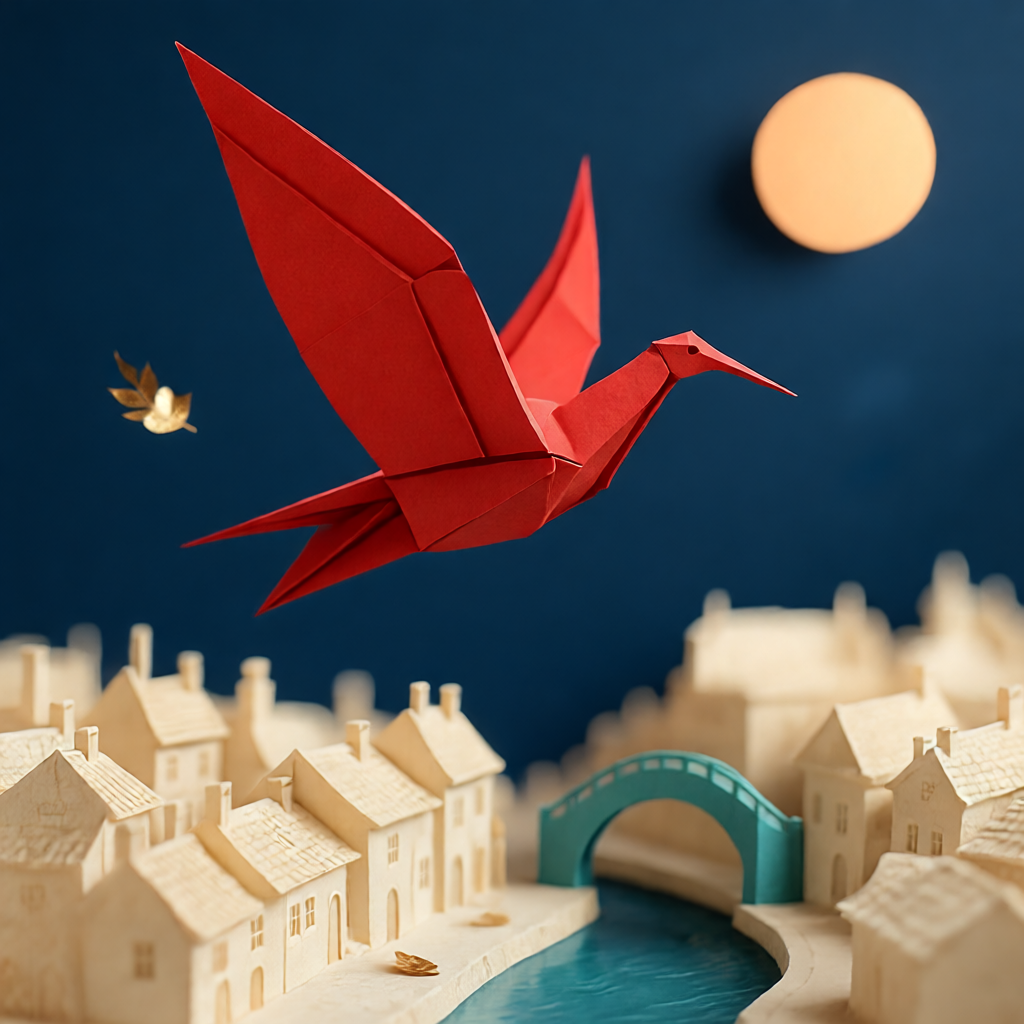} \\[6pt]
\multicolumn{2}{c}{\textbf{(c) Citrus soda can}} & \multicolumn{2}{c}{\textbf{(d) Astronaut portrait}} \\
\textbf{Phase 1: intermediate} & \textbf{Phase 2: final} & \textbf{Phase 1: intermediate} & \textbf{Phase 2: final} \\[2pt]
\includegraphics[width=0.237\linewidth]{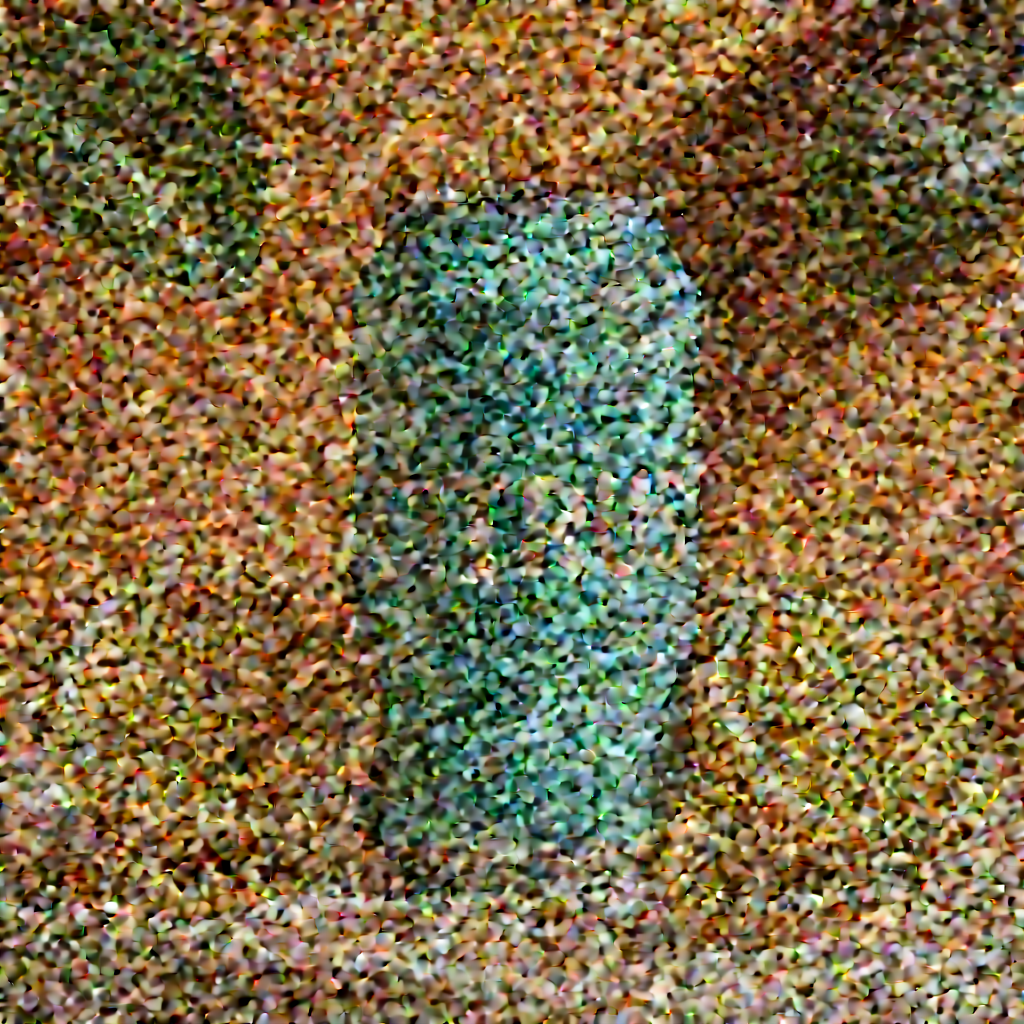} &
\includegraphics[width=0.237\linewidth]{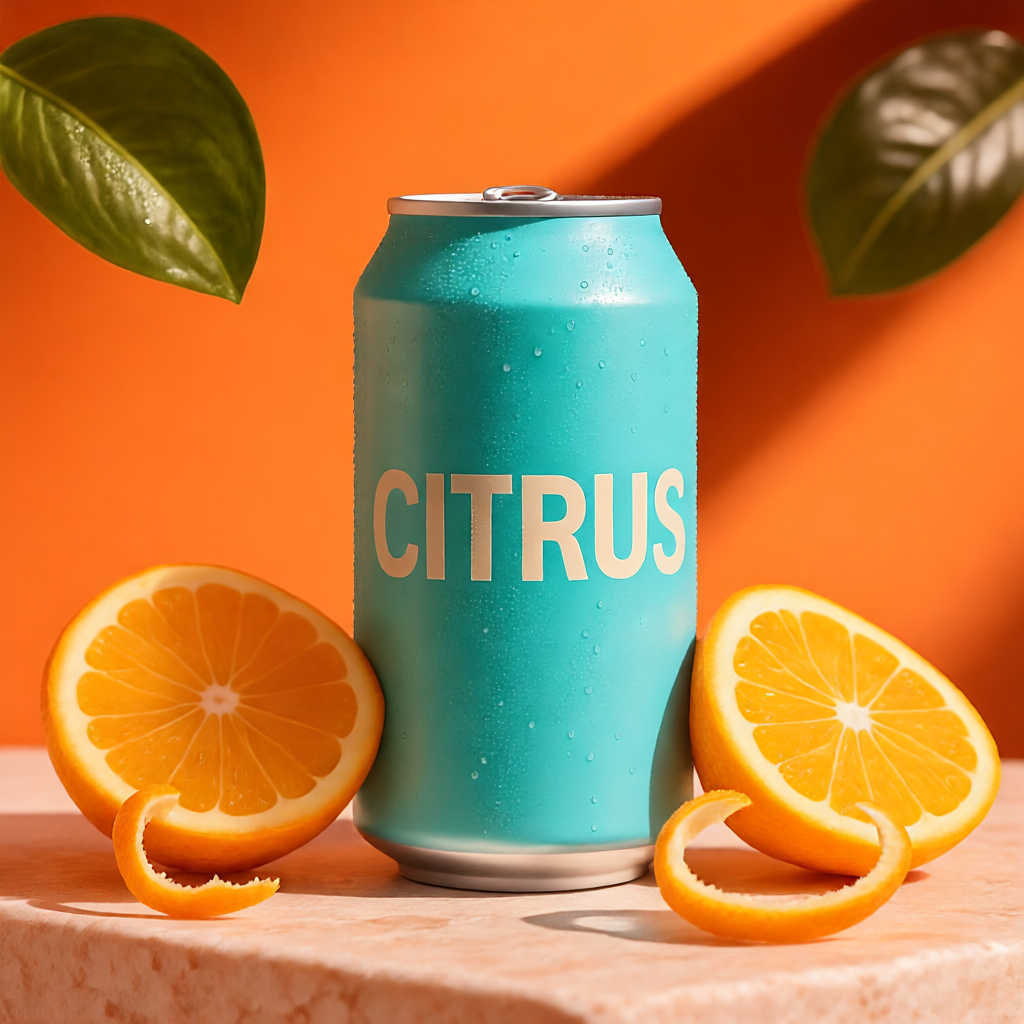} &
\includegraphics[width=0.237\linewidth]{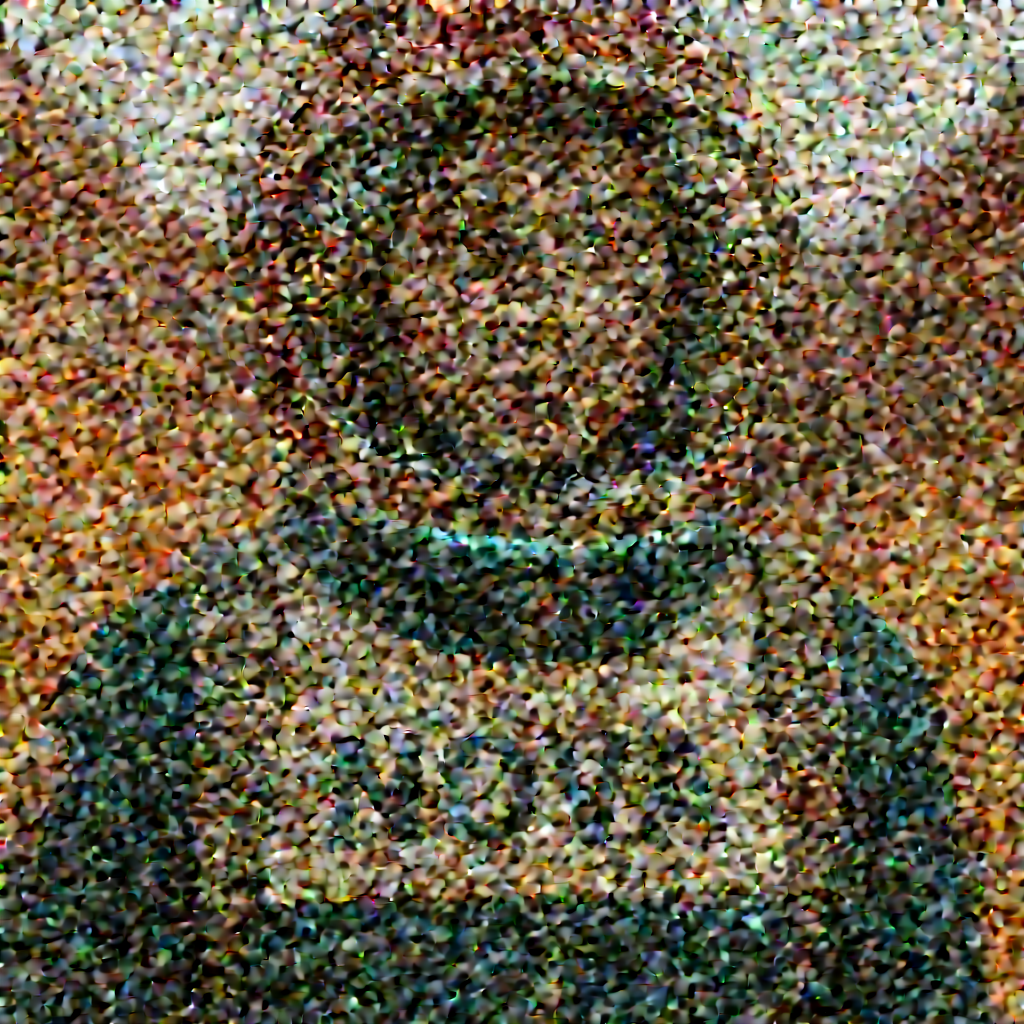} &
\includegraphics[width=0.237\linewidth]{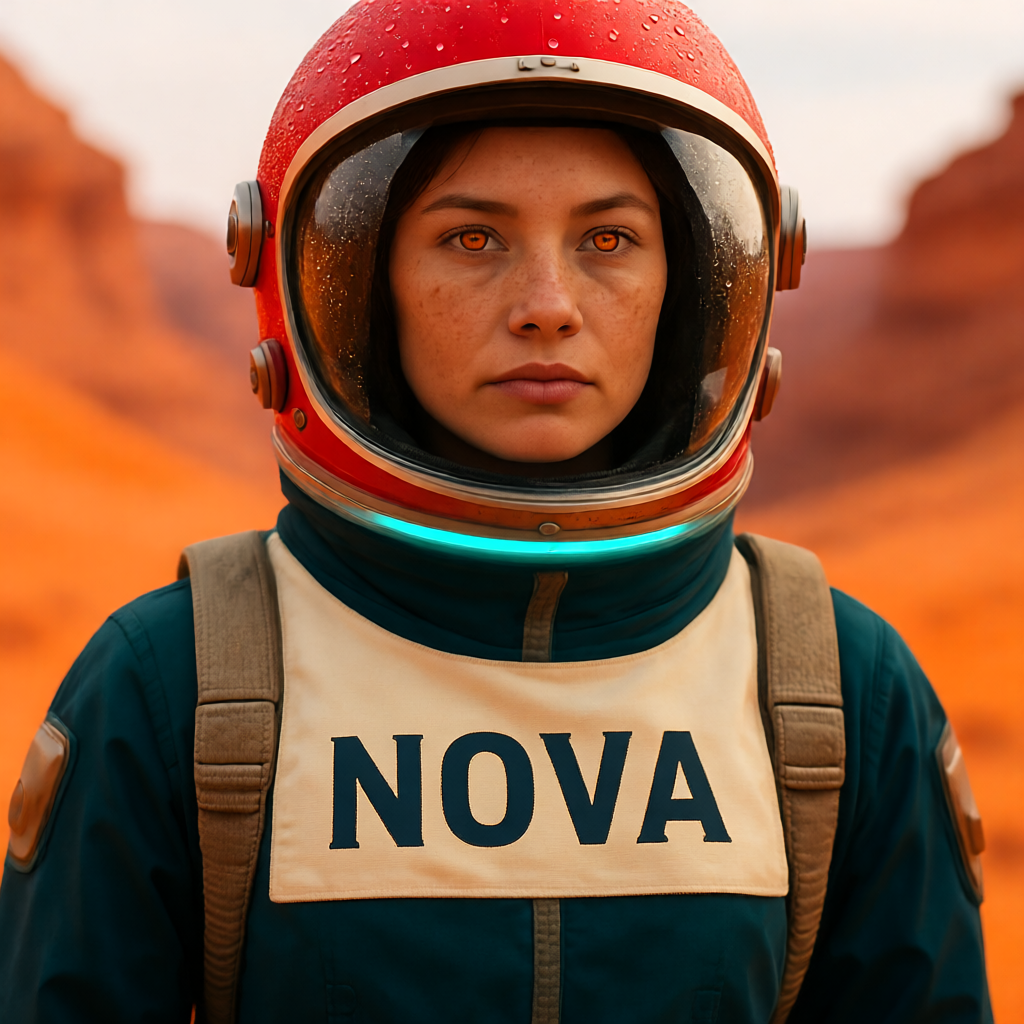} \\[6pt]
\multicolumn{2}{c}{\textbf{(e) Horse at the shore}} & \multicolumn{2}{c}{\textbf{(f) Toy holding a sign}} \\
\textbf{Phase 1: intermediate} & \textbf{Phase 2: final} & \textbf{Phase 1: intermediate} & \textbf{Phase 2: final} \\[2pt]
\includegraphics[width=0.237\linewidth]{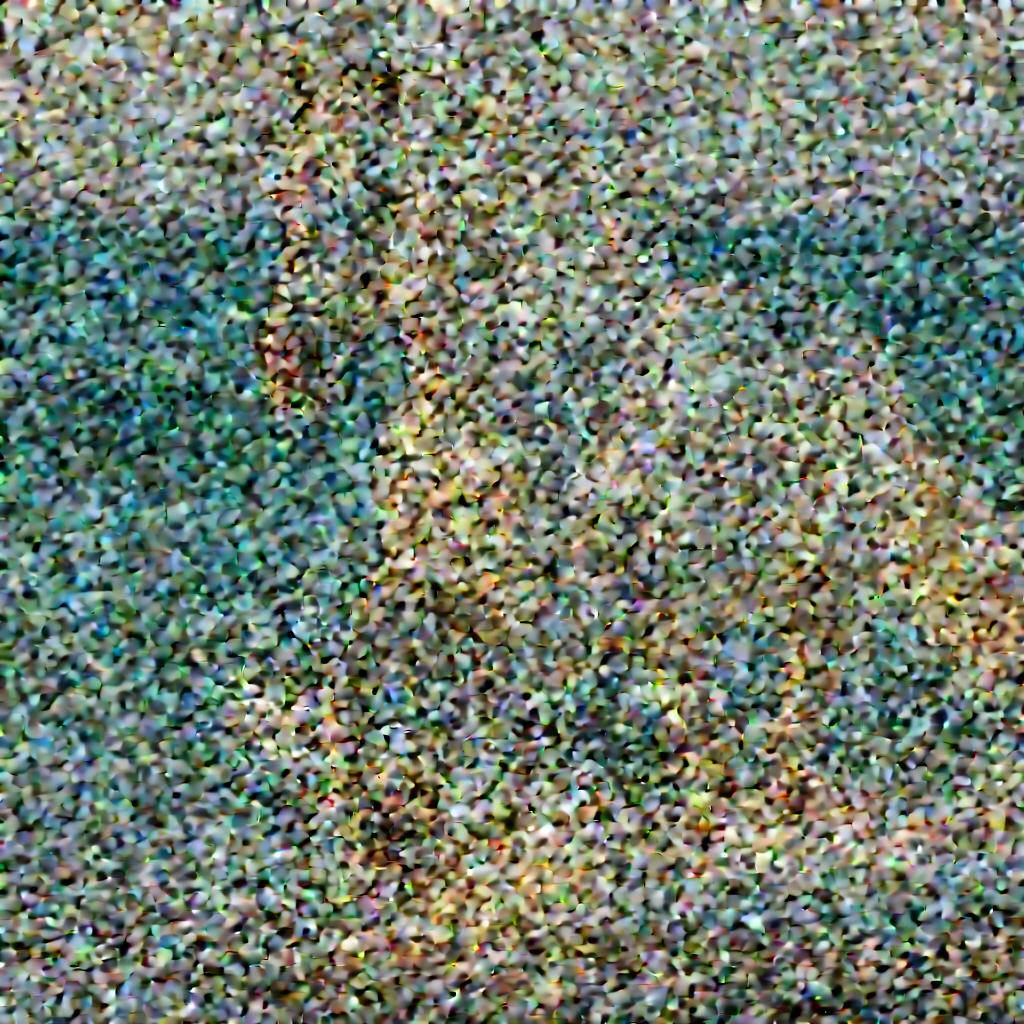} &
\includegraphics[width=0.237\linewidth]{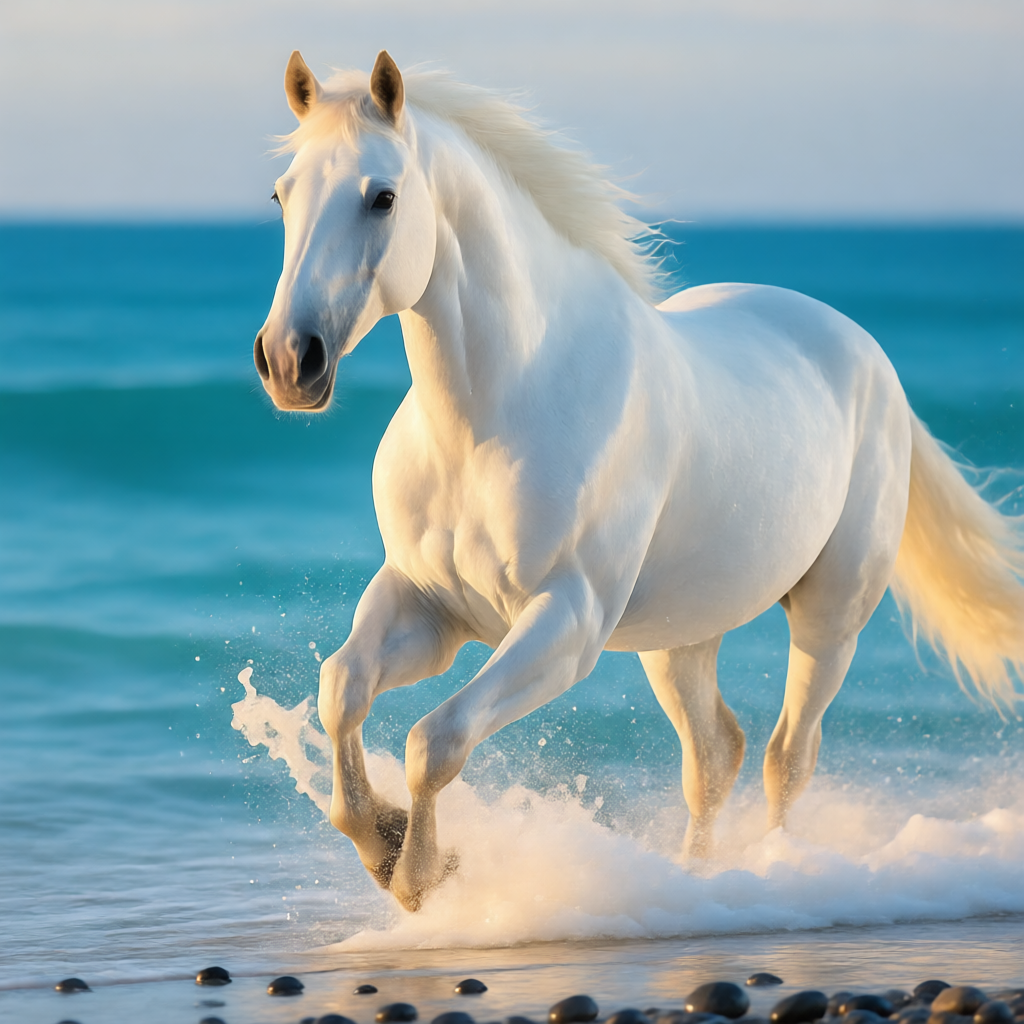} &
\includegraphics[width=0.237\linewidth]{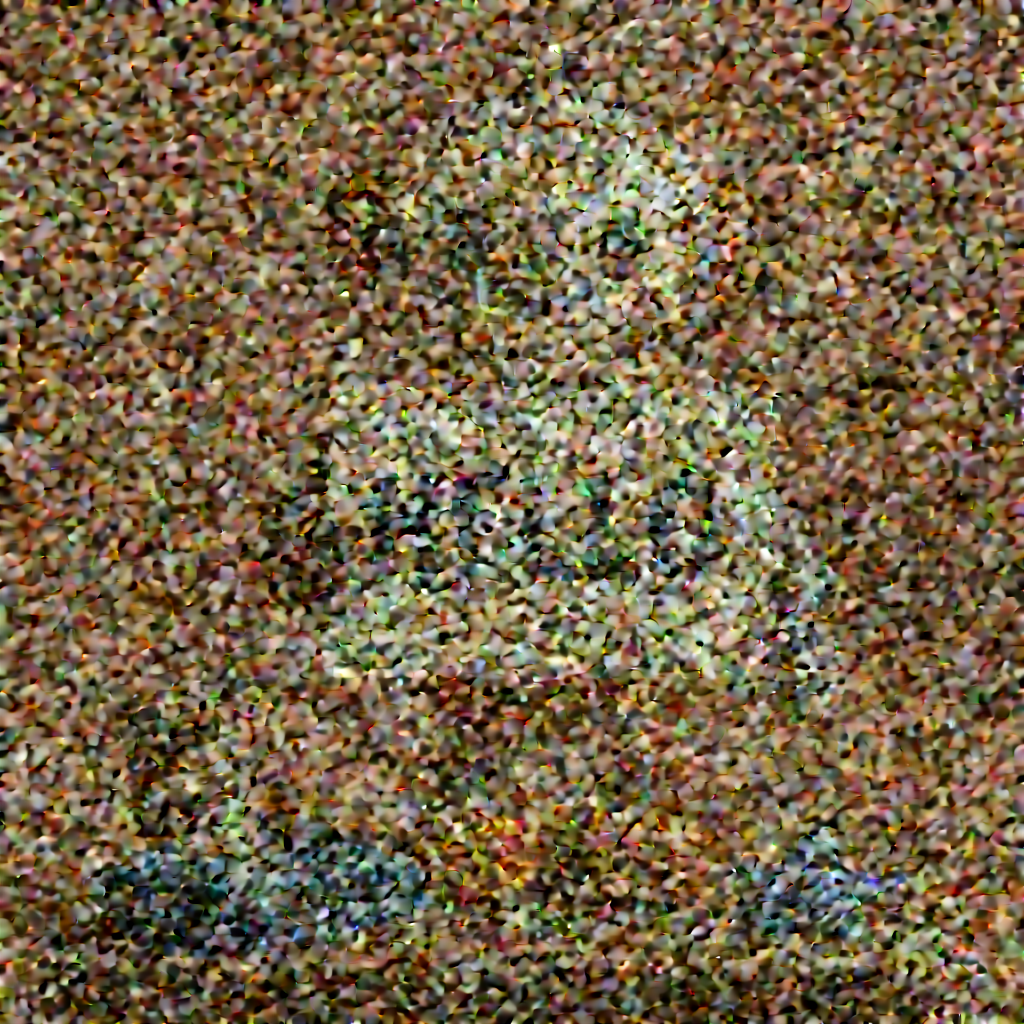} &
\includegraphics[width=0.237\linewidth]{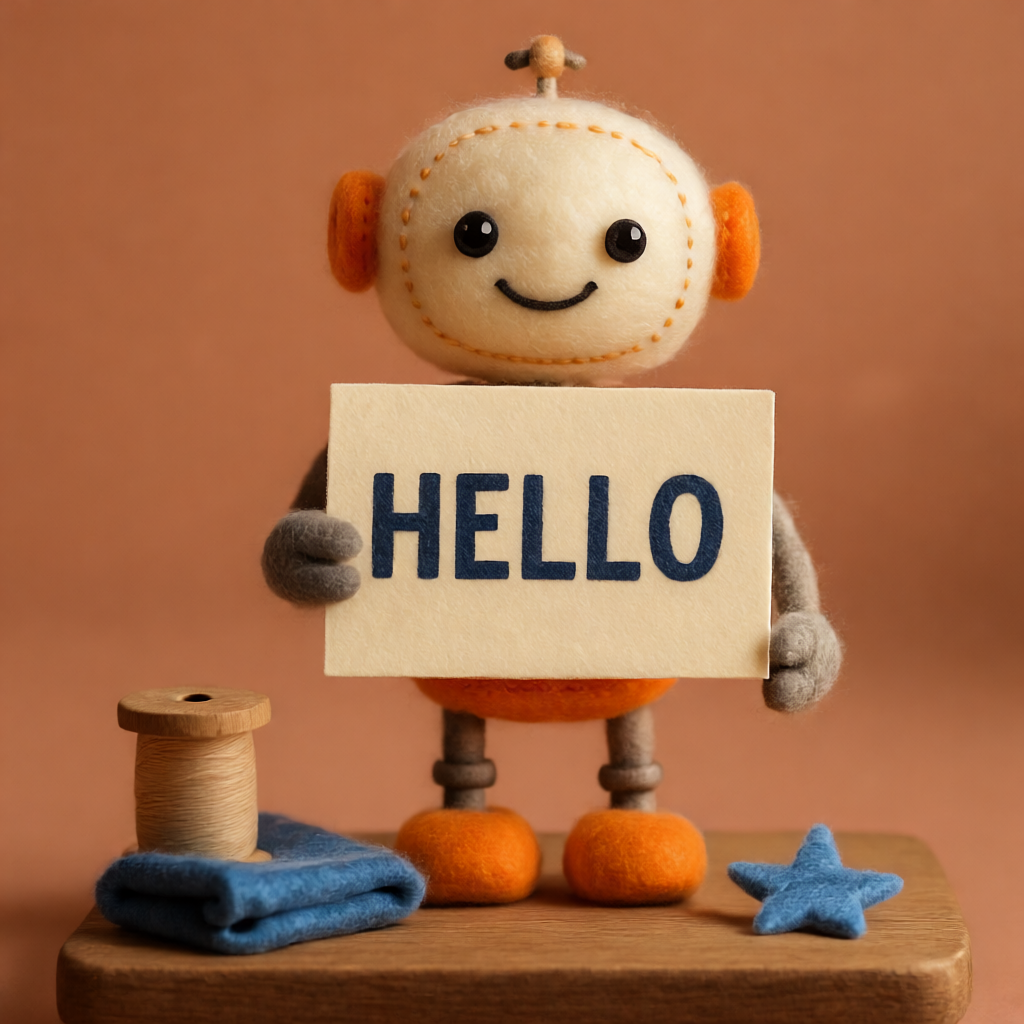} \\
\end{tabular}
\caption{\textbf{Coarse-to-fine progression of PVD distilled from Qwen-Image.} Within each pair, the left image visualizes the state after the first phase and the right image shows the output after the second phase. The first phase reveals coarse composition, approximate subject silhouettes, and color of regions despite residual noise. The second phase refines this structure with sharper edges, surface textures, and finer letter strokes.}
\label{fig:qwen_intermediate_states}
\end{figure}
\par
\endgroup
\clearpage

\section{Ablation Studies on T2I Task} \label{app:exp_results}

\begin{table}[H]
    \centering
    \caption{{\textbf{Reported quality--compute comparison on Qwen-Image.} GenEval results for MeanFlow are sourced from \cite{cheng2025twinflow}.}}
    \label{tab:qwen_meanflow_compute}
    \small
    \setlength{\tabcolsep}{4.5pt}
    \renewcommand{\arraystretch}{1.08}
    \resizebox{1\linewidth}{!}{
    \begin{tabular}{l c c c c c c}
        \toprule
        \textbf{Method}
        & \textbf{$N_{\mathrm{flops}}$}
        & \makecell{\textbf{Active}\\\textbf{Params. (B)}}
        & \makecell{\textbf{Total}\\\textbf{Params. (B)}}
        & \textbf{TFLOPs}
        & \makecell{\textbf{Peak VRAM}\\\textbf{(GiB)}}
        & \textbf{GenEval $\uparrow$} \\
        \midrule
        \rowcolor{gray!15} Qwen-Image & 100.00 & 20.43 & 20.43 & 5585.14 & 38.42 & 0.87 \\
        \midrule
        MeanFlow & 4.00 & 20.44 & 20.44 & 223.40 & 38.48 & 0.44 \\
        MeanFlow  & 8.00 & 20.44 & 20.44 & 446.80 & 38.48 & 0.49 \\
        \rowcolor{gray!8}\textbf{PVD (Ours)} & \textbf{1.02} & \textbf{10.40} & \textbf{10.55} & \textbf{56.81} & \textbf{19.84} & \textbf{0.88} \\
        \bottomrule
    \end{tabular}
    }
\end{table}

\begin{table}[H]
    \centering
    \caption{\textbf{Component and compute-control ablations on SD3.5-Medium.}}
    \label{tab:ablation}
    \resizebox{\linewidth}{!}{
    \begin{tabular}{l cccccc}
    \toprule
    \textbf{Configuration}
    & \textbf{$N_{\mathrm{flops}}$}
    & \makecell{\textbf{Active}\\\textbf{Params. (B)}}
    & \makecell{\textbf{Total}\\\textbf{Params. (B)}}
    & \textbf{TFLOPs}
    & \makecell{\textbf{Peak VRAM}\\\textbf{(GiB)}}
    & \textbf{GenEval $\uparrow$} \\
    \midrule
    w/o discriminator & 0.99 & 1.10 & 2.22 & 6.93 & 2.67 & 0.5599 \\
    \midrule
    {Full-size student} & 1.00 & 2.24 & 2.24 & 7.02 & 4.99 & 0.4142 \\
    {Full-size student} & 2.00 & 2.24 & 2.24 & 14.04 & 4.99 & 0.5695 \\
    \midrule
    \rowcolor{gray!10} \textbf{PVD} & \textbf{0.99} & \textbf{1.10} & \textbf{2.22} & \textbf{6.93} & \textbf{2.67} & \textbf{0.6948} \\
    \bottomrule
    \end{tabular}
    }
\end{table}

\begin{table}[H]
    \centering
    \caption{{\textbf{Expert specialization and execution-order ablations on SD3.5-Medium.} Rows specify the expert used in the first and second trajectory phases, respectively.}}
    \label{tab:expert_order}
    \setlength{\tabcolsep}{10pt}
    \begin{tabular}{lcc}
    \toprule
    {\textbf{Phase 1 $\rightarrow$ Phase 2}} & {\textbf{GenEval $\uparrow$}} & \textbf{ImageReward $\uparrow$} \\
    \midrule
    \rowcolor{gray!10}{$E_{\mathrm{early}}\!\rightarrow E_{\mathrm{late}}$}  & {\textbf{0.6948}} & {\textbf{0.4947}} \\
    {$E_{\mathrm{early}}\!\rightarrow E_{\mathrm{early}}$} & {0.6754} & {0.4598} \\
    {$E_{\mathrm{late}}\!\rightarrow E_{\mathrm{late}}$}   & {0.1820} & {-1.7190} \\
    {$E_{\mathrm{late}}\!\rightarrow E_{\mathrm{early}}$}  & {0.2346} & {-1.5811} \\
    \bottomrule
    \end{tabular}
\end{table}

\paragraph{Shared MeanFlow vs. Phase-specific Experts.}
{MeanFlow learns an interval-conditioned average velocity using one network shared across all sampled interval pairs. Under one-step generation, this network must model the complete noise-to-data transport in a single query. Pu et al.~\cite{pu2025few} show that this setting is substantially harder for T2I than for C2I: their MeanFlow adaptation collapses to pure noise after 1 step, retains visible noise and artifacts after 2 steps, and produces high-fidelity images after 4 steps. Increasing the number of steps can therefore improve fidelity, but it repeatedly invokes the same global capacity. PVD instead retains MeanFlow distillation while factorizing the transport into early and late phases handled by compact phase-specific experts.}

\noindent{\cref{tab:qwen_meanflow_compute} shows that MeanFlow improves the GenEval score from 0.44 to 0.49 when the sampling budget increases from 4 to 8 steps, while $N_{\mathrm{flops}}$ doubles from 4.00 to 8.00. PVD obtains a GenEval score of 0.88 with
$N_{\mathrm{flops}}=1.02$.
The Qwen-Image teacher obtains 0.87 at substantially higher inference cost. 

PVD uses the compute more effectively than the shared-model alternative: its compact phase-specific experts achieve substantially higher generation quality than repeatedly querying a shared MeanFlow backbone, while requiring only about one full-backbone evaluation in cumulative computation.}

\paragraph{Phase Decomposition and Adversarial Refinement.}
{We then separate the effect of the compact phase-wise cascade from additional computation and adversarial supervision. \cref{tab:ablation} compares default PVD with (i) the same cascade trained without the discriminator and (ii) a monolithic full-size student trained over the entire time interval, rather than over separate phase intervals, under matched and increased cumulative-compute budgets. For the full-size control, all other training settings are kept identical to those of the phase experts; the only differences are that the student retains the full backbone capacity and is optimized over $[0,1]$ instead of a phase-specific sub-interval.}

\noindent{We see that the default PVD reaches 0.6948 GenEval score at 6.93 TFLOPs and 2.67 GiB peak VRAM. At nearly matched cumulative computation, the full-size student obtains 0.4142 GenEval at 7.02 TFLOPs and 4.99 GiB. Increasing its budget raises GenEval to 0.5695, but also doubles $N_{\mathrm{flops}}$ to 2.00 and increases the cost to 14.04 TFLOPs. Removing the discriminator leaves inference cost unchanged but reduces GenEval from 0.6948 to 0.5599.}

\paragraph{Expert Specialization and Execution Order.}
{Finally, in Tab. \ref{tab:expert_order} we test whether the two compact experts learn interchangeable mappings or phase-specific operators. We reuse one expert for both phases or reverse the designated early-to-late execution order.}

\noindent{The designated $E_{\mathrm{early}}\!\rightarrow E_{\mathrm{late}}$ composition achieves the highest GenEval score (0.6948) and ImageReward (0.4947). Reusing the early expert in both phases lowers the GenEval score to 0.6754 and ImageReward to 0.4598, while the variants that begin with the late expert lead to significantly worse GenEval and ImageReward values.

It can be concluded that our PVD design is more than a two-call compact cascade: its advantage comes from assigning dedicated experts to complementary trajectory phases in a natural early-to-late order. Reusing one expert or reversing this order consistently weakens generation quality.}

\clearpage
\begingroup
\setlength{\intextsep}{6pt plus 1pt minus 1pt}
\section{Failure Cases}
\label{app:failure_cases}

\begin{figure}[H]
\centering
\captionsetup{font=footnotesize,labelfont={bf},skip=4pt}
\footnotesize
\setlength{\tabcolsep}{5pt}
\begin{tabular}{@{}cc@{}}
\textbf{Qwen-Image} & \textbf{PVD} \\
\multicolumn{2}{c}{\textbf{Request: 35 macarons, 5 rows $\times$ 7 columns.}} \\[2pt]
\includegraphics[width=0.32\linewidth]{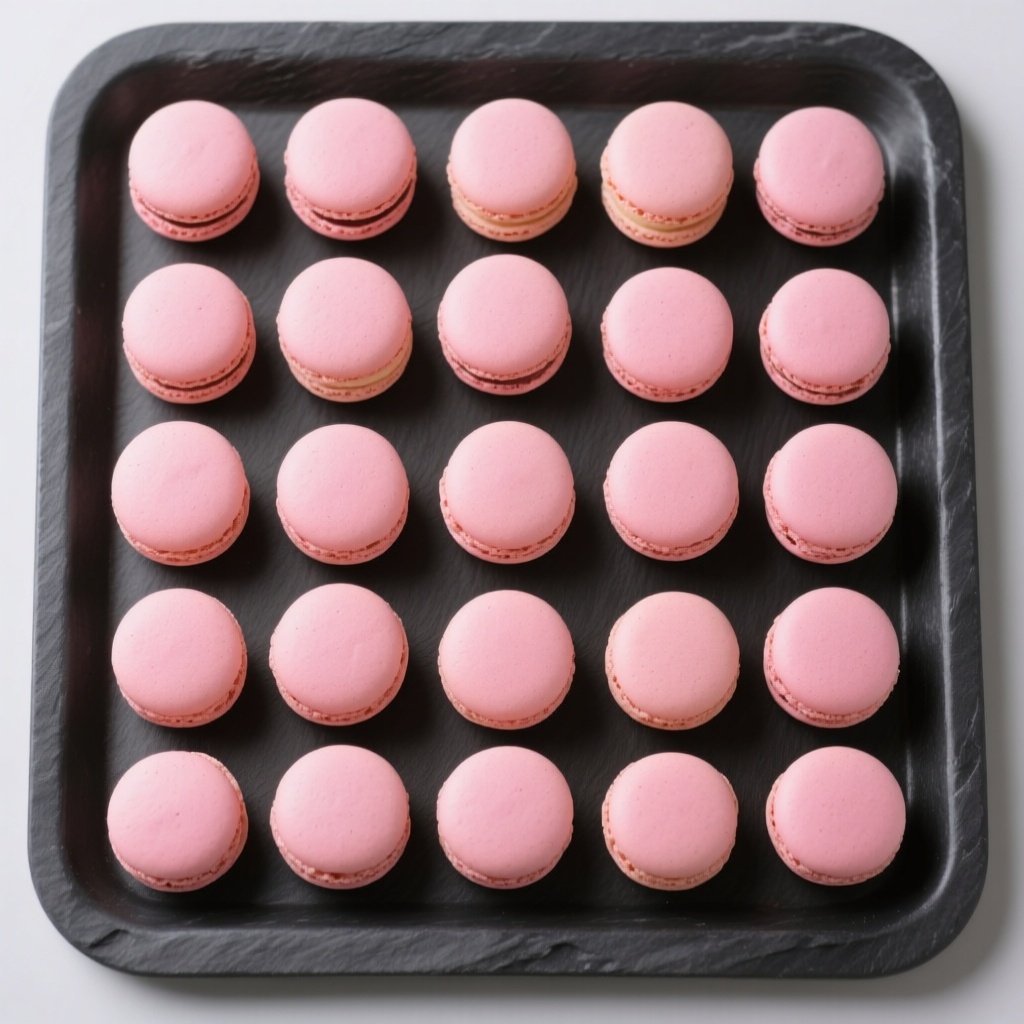} &
\includegraphics[width=0.32\linewidth]{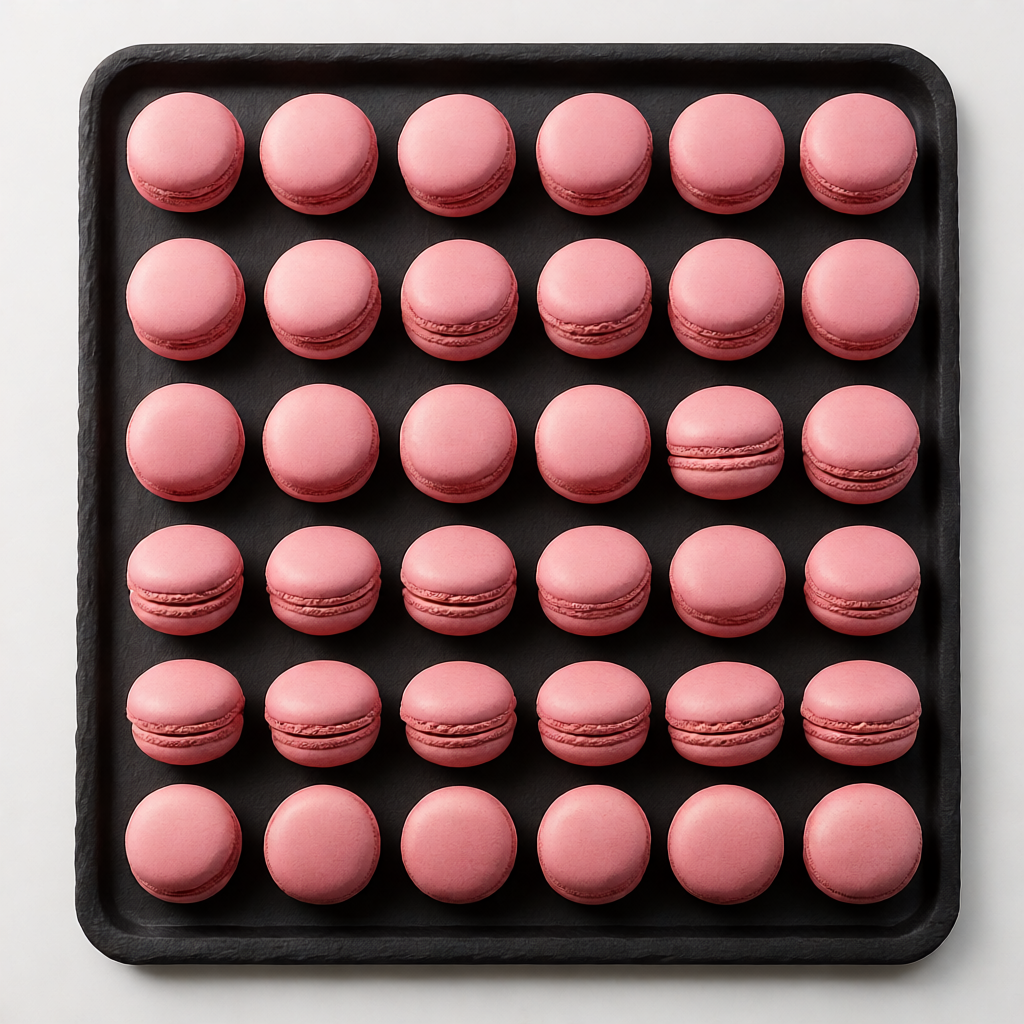} \\
$5\times5=25$ macarons & $6\times6=36$ macarons \\
\end{tabular}
\caption{Prompt: \textit{A top-down studio photograph of exactly thirty-five small pink macarons arranged in a precise rectangular grid of five rows and seven columns on a dark slate tray. Every row contains exactly seven macarons and every column exactly five. All macarons are identical, separate, fully visible, and not touching. The entire tray is inside the frame.} Both methods violate the requested count and grid dimensions despite producing recognizable, regularly arranged objects.}
\label{fig:failure_cases}
\end{figure}

\begin{figure}[H]
\centering
\captionsetup{font=footnotesize,labelfont={bf}, skip=4pt}
\footnotesize
\setlength{\tabcolsep}{5pt}
\begin{tabular}{@{}cc@{}}
\textbf{Qwen-Image} & \textbf{PVD} \\
\multicolumn{2}{c}{\textbf{Request: pencil through the handle hole, outside the cup.}} \\[2pt]
\includegraphics[width=0.32\linewidth]{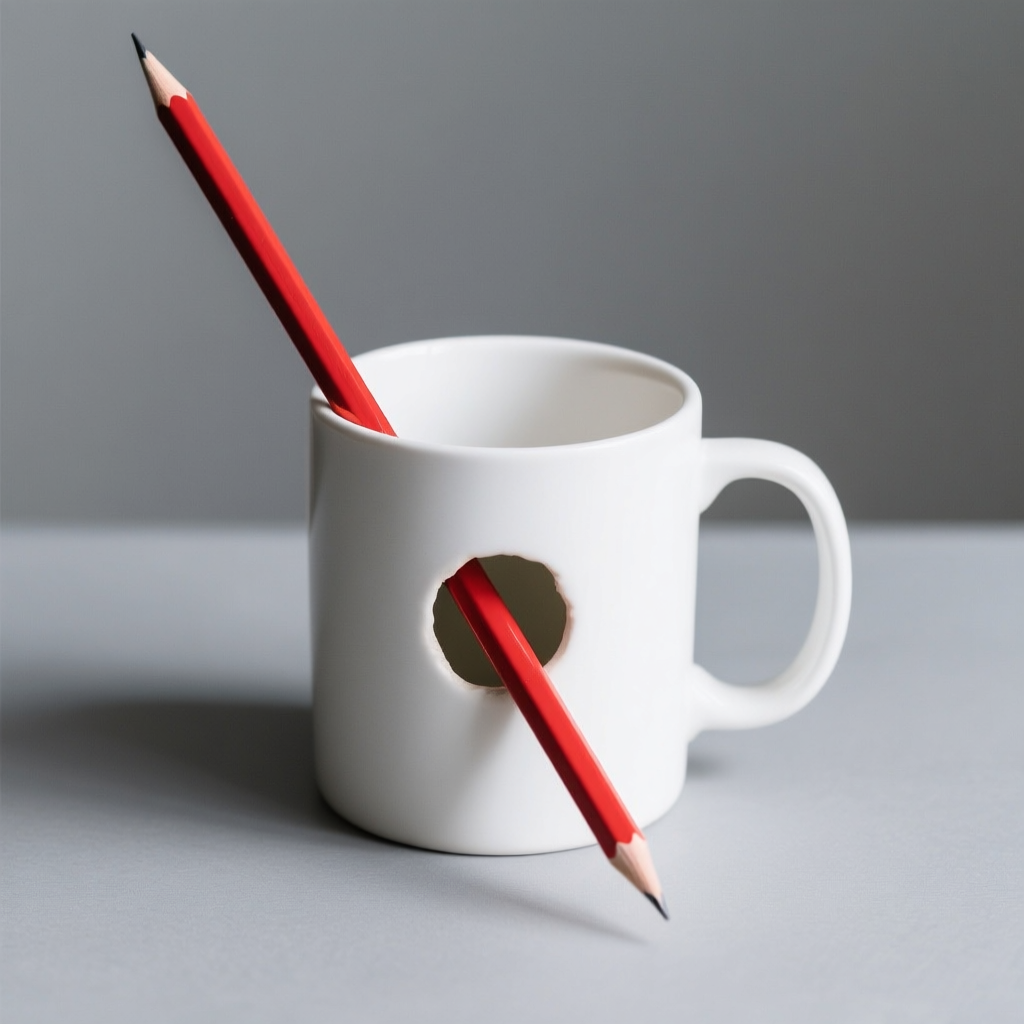} &
\includegraphics[width=0.32\linewidth]{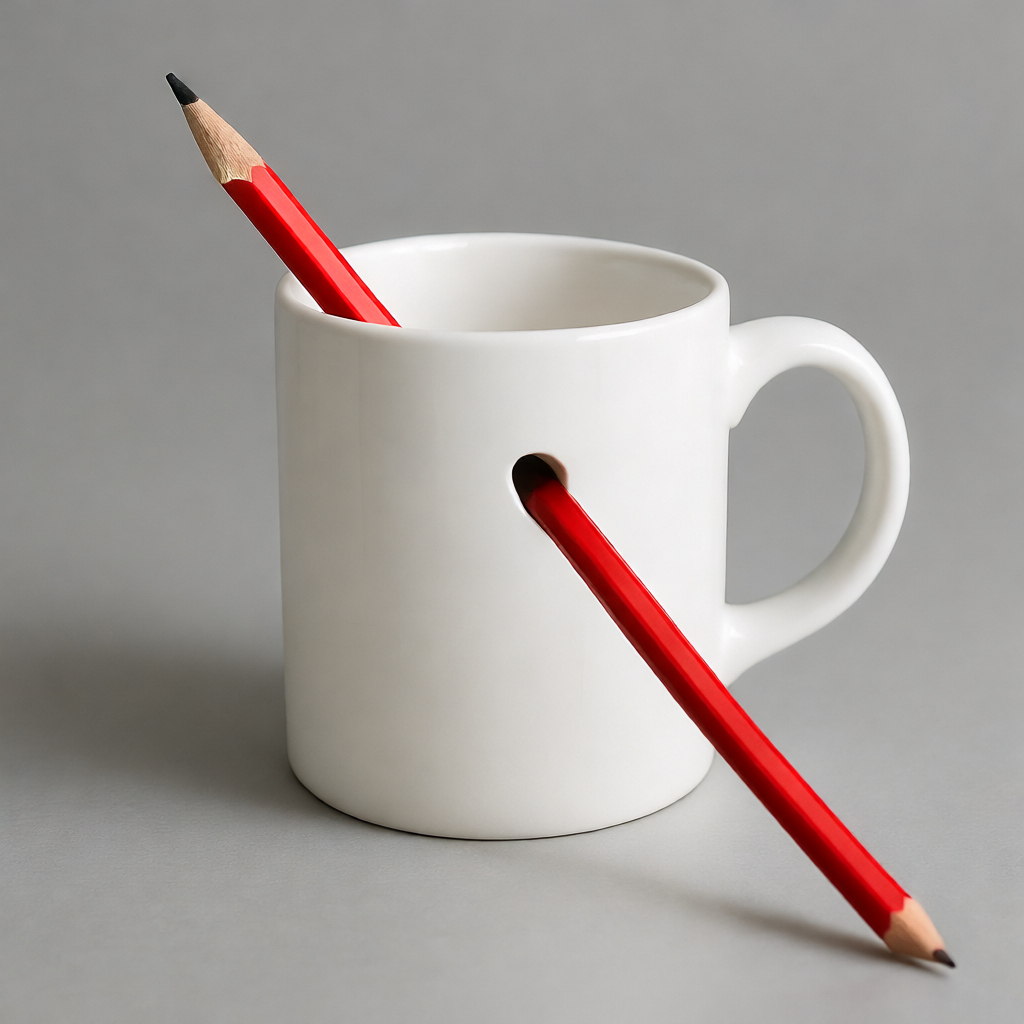} \\
\multicolumn{2}{c}{Both: pencil enters the cup and passes through the mug wall.} \\
\end{tabular}
\caption{Prompt: \textit{A clear studio photograph of one upright white ceramic mug with its handle on the right. A long red pencil passes diagonally through the empty hole of the mug handle, from upper left to lower right. The pencil is outside the cup, never entering the opening at the top. Both ends of the pencil extend visibly beyond the handle. Three-quarter front view showing the handle hole and the empty cup opening, on a plain gray tabletop.} Both methods place the pencil through the cup opening and mug wall instead of the existing handle hole.}
\label{fig:failure_spatial}
\end{figure}

In this section, we present failure cases of PVD regarding T2I generation. The failures highlight the limitations of current generative models in adhering to precise instructions, even when the generated content appears coherent and realistic.

In \cref{fig:failure_cases}, the prompt specifies exactly 35 macarons in five rows and seven columns. Qwen-Image produces a $5\times5$ grid (25), whereas PVD produces a $6\times6$ grid (36). Both preserve the appearance of an orderly tray but violate the requested count and arrangement.

In \Cref{fig:failure_spatial}, the prompt requests a pencil passing through the mug handle while remaining outside the cup. Both the teacher and student models instead place it inside the cup opening and through an additional hole in the mug wall, leaving the handle hole empty. The objects remain recognizable, but their spatial relation contradicts the prompt.

Targeted post-training may improve adherence to these constraints. Preference-based fine-tuning or reinforcement learning with relative rewards could provide complementary supervision, helping to align the generated images with the intended specifications.

\par
\endgroup

\clearpage
\section{Broader Impacts} \label{app:broader_impacts}
\textbf{Positive Impacts.} By cutting inference cost and halving VRAM consumption, PVD lowers the barrier to accessing state-of-the-art generative AI. Such exceptional efficiency enables deployment on consumer-grade edge devices, substantially reducing energy consumption and empowering a variety of real-time interactive applications.

\textbf{Negative Impacts and Mitigations.} Conversely, the unprecedented speed and low cost of synthesizing photorealistic images may increase the risks of generating deepfakes and misinformation. To mitigate these risks, we advocate for responsible deployment, including the mandatory integration of watermarking for traceability and safety content filters into the generation pipeline.

\end{document}